\theoremstyle{plain}
\newtheorem{theorem}{Theorem}[section]
\newtheorem{lemma}[theorem]{Lemma}
\newtheorem{corollary}[theorem]{Corollary}
\theoremstyle{definition}
\newtheorem{definition}[theorem]{Definition}
\theoremstyle{remark}
\newtheorem{remark}[theorem]{Remark}
\icmltitlerunning{Nearly Optimal Algorithms with Sublinear Computational Complexity for Online Kernel Regression}
\begin{document}

\twocolumn[
\icmltitle{Nearly Optimal Algorithms with Sublinear Computational Complexity for Online Kernel Regression}



\icmlsetsymbol{equal}{*}

\begin{icmlauthorlist}
\icmlauthor{Junfan Li}{yyy}
\icmlauthor{Shizhong Liao}{yyy}
\end{icmlauthorlist}

\icmlaffiliation{yyy}{College of Intelligence and Computing,
Tianjin University, Tianjin 300350, China}

\icmlcorrespondingauthor{Shizhong Liao}{szliao@tju.edu.cn}

\icmlkeywords{Machine Learning, ICML}

\vskip 0.3in
]



\printAffiliationsAndNotice{} 

\begin{abstract}
  The trade-off between regret and computational cost
  is a fundamental problem for online kernel regression,
  and previous algorithms worked on the trade-off can not
  keep optimal regret bounds at a sublinear computational complexity.
  In this paper,
  we propose two new algorithms, AOGD-ALD and NONS-ALD,
  which can keep nearly optimal regret bounds at a sublinear computational complexity,
  and give sufficient conditions under which our algorithms work.
  Both algorithms dynamically maintain a group of nearly orthogonal basis
  used to approximate the kernel mapping,
  and keep nearly optimal regret bounds by controlling the approximate error.
  The number of basis depends on the approximate error and
  the decay rate of eigenvalues of the kernel matrix.
  If the eigenvalues decay exponentially,
  then AOGD-ALD and NONS-ALD separately achieves a regret of $O(\sqrt{L(f)})$
  and $O(\mathrm{d}_{\mathrm{eff}}(\mu)\ln{T})$
  at a computational complexity in $O(\ln^2{T})$.
  If the eigenvalues decay polynomially with degree $p\geq 1$,
  then our algorithms keep the same regret bounds
  at a computational complexity in $o(T)$
  in the case of $p>4$ and $p\geq 10$, respectively.
  $L(f)$ is the cumulative losses of $f$
  and $\mathrm{d}_{\mathrm{eff}}(\mu)$ is the effective dimension of the problem.
  The two regret bounds are nearly optimal and are not comparable.
\end{abstract}

\section{Introduction}

    Online kernel learning in the regime of the square loss
    is an important non-parametric online learning method
    \cite{Kivinen2004Online,Vovk2006On,Sahoo2014Online}.
    The learning protocol can be formulated as a game between a learner and an adversary.
    Before the game,
    the learner selects a reproducing kernel Hilbert space (RKHS) $\mathcal{H}$
    induced by a positive semidefinite kernel function \cite{Aronszajn1950Theory,John2004Kernel}.
    At each round $t=1,2,\ldots,$,
    the adversary sends an instance $\mathbf{x}_t\in\mathbb{R}^d$ to the learner.
    Then the learner chooses a hypothesis $f_t\in\mathbb{H}\subset \mathcal{H}$
    and output $f_t(\mathbf{x}_t)$.
    After that the adversary reveals the true output $y_t$.
    The learner suffers a loss $\ell(f_t(\mathbf{x}_t),y_t)$.
    The goal is to minimize the \textit{regret} defined as follows
    \begin{equation}
    \label{eq:ICML2023:definition_regret}
        \forall f\in \mathbb{H},~\mathrm{Reg}(f)
        = \sum^T_{t=1}[\ell(f_{t}({\bf x}_t),y_t) -\ell(f({\bf x}_t),y_t)].
    \end{equation}
    One of the challenges minimizing the regret is to balance the computational cost.
    Kernel online gradient descent (KOGD) enjoys a regret of $O(\sqrt{L(f)})$
    at a computational complexity (space and per-round time) in
    $O(dT)$ \cite{Zinkevich2003Online,Srebro2010Smoothness,Zhang2019Adaptive},
    where $L(f)=\sum^T_{t=1}\ell(f({\bf x}_t),y_t)$.
    $O(\sqrt{L(f)})$ implies the ``small-loss'' bound \cite{Wang2020Adapting,Zhang2022A}.
    Kernel online Newton step (KONS) \cite{Calandriello2017Second}
    enjoys a regret of
    $O(\mu\Vert f\Vert^2_{\mathcal{H}}+\mathrm{d}_{\mathrm{eff}}(\mu)\ln{T})$
    at a computational complexity in $O(T^2)$,
    where $\mu>0$ is a regularization parameter and $\mathrm{d}_{\mathrm{eff}}(\mu)$
    is called \textit{effective dimension}
    depending on the decay rate of eigenvalues of the kernel matrix
    \cite{Caponnetto2007Optimal,Rudi2015Less}.
    The KAAR algorithm \cite{Gammerman2004On}
    and kernel ridge regression algorithm \cite{Zhdanov2013An}
    enjoy the same regret bound and computational complexity with KONS.
    If the eigenvalues decay exponentially,
    then $\mathrm{d}_{\mathrm{eff}}(\mu)=O(\ln{\frac{T}{\mu}})$ \cite{Li2019Towards}.
    If the eigenvalues decay polynomially with degree $p\geq 1$,
    then $\mathrm{d}_{\mathrm{eff}}(\mu)=O((T/\mu)^{1/p})$ \cite{Jezequel2019Efficient}.

    The $O(dT)$ and $O(T^2)$ computational complexities are prohibitive.
    Some approximate algorithms
    reduce the computational complexity at the expense of regret \cite{Lu2016Large,Calandriello2017Second,Calandriello2017Efficient}.
    The FOGD algorithm approximating KOGD,
    achieves a regret of $\tilde{O}(\sqrt{TL(f)/D})$
    at a computational complexity in $O(dD)$ where $D$ is a tunable parameter \cite{Lu2016Large}.
    Achieving the optimal regret bound requires $D=\Omega(T)$.
    The Sketched-KONS algorithm approximating KONS,
    reduces the computational complexity by a factor of $\gamma^{-2}$,
    but increases the regret by $\gamma>1$ \cite{Calandriello2017Second}.
    The PROS-N-KONS algorithm approximating KONS,
    increases the regret by a factor of $\tilde{O}(\mathrm{d}_{\mathrm{eff}}(\alpha))$
    and suffers a space complexity in $\tilde{O}(\mathrm{d}_{\mathrm{eff}}(\alpha)^2)$
    and an average per-round time complexity in $\tilde{O}(\mathrm{d}_{\mathrm{eff}}(\alpha)^2
    +\mathrm{d}_{\mathrm{eff}}(\alpha)^4/T)$ \cite{Calandriello2017Efficient},
    where $\alpha>0$.
    Although Sketched-KONS and PROS-N-KONS can ensure a $o(T)$ computational complexity,
    they can not achieve the optimal regret bound.
    The PKAWV algorithm keeps the regret of KONS
    at a computational complexity in $\tilde{O}(T\mathrm{d}_{\mathrm{eff}}(\alpha)
    +\mathrm{d}^2_{\mathrm{eff}}(\alpha))$ \cite{Jezequel2019Efficient}.
    Although PKAWV reduces the $O(T^2)$ computational complexity,
    it can not ensure a $o(T)$ computational complexity.
    Besides, PKAWV must store all of the observed examples.

    In summary,
    existing approximate algorithms can not achieve nearly optimal regret bounds
    and a $o(T)$ computational complexity simultaneously.
    It is important to rise the question:
    \textit{Is it possible to achieve nearly optimal regret bounds at a computational complexity
    in $o(T)$?}
    To be specific,
    the question is equivalent to the following two.
    (1) Is it possible to achieve a regret of $O(\sqrt{L(f)})$ at a $o(T)$ computational complexity?
    $O(\sqrt{L(f)})$ matches the lower bound in the stochastic setting \cite{Srebro2010Smoothness}.
    (2) Is it possible to achieve a regret of
    $O(\mu\Vert f\Vert^2_{\mathcal{H}}+\mathrm{d}_{\mathrm{eff}}(\mu)\ln{T})$
    at a $o(T)$ computational complexity?
    The regret bound is optimal up to $\ln{T}$ \cite{Jezequel2019Efficient}.
    If the eigenvalues of the kernel matrix decay exponentially,
    then $O(\mu\Vert f\Vert^2_{\mathcal{H}}+\mathrm{d}_{\mathrm{eff}}(\mu)\ln{T})=O(\ln^2{T})$.
    If the eigenvalues decay polynomially with degree $p\geq 1$,
    then $O(\mu\Vert f\Vert^2_{\mathcal{H}}+\mathrm{d}_{\mathrm{eff}}(\mu)\ln{T})=O(T^{\frac{1}{1+p}}\ln{T})$
    where $\mu=T^{\frac{1}{1+p}}$.

\subsection{Main Results}

    In this paper,
    we propose two algorithms, AOGD-ALD and NONS-ALD,
    and give conditions under which the answers are affirmative.
    The computational complexities of both algorithms depend on
    the decay rate of eigenvalues of the kernel matrix.
    If the eigenvalues decay exponentially,
    then AOGD-ALD and NONS-ALD separately achieves a regret of
    $O(\sqrt{L(f)})$ and $O(\mathrm{d}_{\mathrm{eff}}(\mu)\ln{T})$
    at a computational complexity in $O(\ln^2{T})$.
    If the eigenvalues decay polynomially with degree $p\geq 1$,
    then AOGD-ALD keeps the same regret bound
    at a computational complexity in $O(\min\{dT^{\frac{2}{p}}+T^{\frac{4}{p}},dT\})$,
    and NONS-ALD achieves a regret of $O(T^{\frac{1}{1+p}}\ln{T})$
    at a space complexity in $O(T^{\frac{2(1+5p)}{p(1+p)}})$
    and an average per-round time complexity in
        $O(T^{\frac{2(1+5p)}{p(1+p)}}+T^{\frac{4(1+5p)}{p(1+p)}-1})$.
    AOGD-ALD and NONS-ALD achieve a computational complexity in $o(T)$
    in the case of $p>4$ and $p\geq 10$, respectively.
    We summary the related results in Table \ref{tab:ICML2023:comparison_results}.

    \begin{table*}[!t]
      \centering
      \begin{tabular}{l|l|r|r|r}
      \toprule
        Eigenvalues condition&{Algorithm}       & {Regret bound}  & Computational  complexity &{\#}Buffer\\
      \toprule
        \multirow{4}{*}{decay exponentially:}
        & PKAWV             & $O(\ln^2{T})$ & $O(T\ln^3{T})$ &$T$\\
        \multirow{4}{*}{$\exists r\in(0,1),R_0=\Theta(T),$}
        & Sketched-KONS     & $O(\gamma\ln^2{T})$ & $O(T^2/\gamma^2)$ &$O(T/\gamma)$\\
        \multirow{4}{*}{$\mathrm{s.t.}~\forall i\in[T],\lambda_i\leq R_0r^i$}& Pros-N-KONS       & $O(\ln^5{T})$  & $O(\ln^6{T})$ &$O(\ln^3{T})$\\
        & FOGD & $\tilde{O}(\sqrt{TL(f)/D})$ & $O(dD)$ &0\\
        & {\color{blue}AOGD-ALD} & {\color{blue}$O(\sqrt{L(f)})$}
        & {\color{blue}$O(\ln^2{T})$} &{\color{blue}$O(\ln{T})$}\\
        & {\color{blue}NONS-ALD} & {\color{blue}$O(\ln^2{T})$} & {\color{blue}$O(\ln^2{T})$}
        & {\color{blue}$O(\ln{T})$}\\
      \hline
        \multirow{5}{*}{decay polynomially:}
        & PKAWV             & $O(T^{\frac{1}{1+p}}\ln{T})$
        & $\tilde{O}(T^{2r}+T^{1+r})$, $r=\frac{2p}{p^2-1}$ & $T$\\
        \multirow{5}{*}{$\exists p\geq 1,R_0=\Theta(T),\mathrm{s.t.}$}
        & Sketched-KONS     & $O(\gamma T^{\frac{1}{1+p}}\ln{T})$ & $O(T^2/\gamma^2)$ &$O(T/\gamma)$\\
        \multirow{5}{*}{$\forall i\in[T],\lambda_i\leq R_0i^{-p}$}
        & Pros-N-KONS       & $O(T^{\frac{3p+1}{(1+p)^2}}\ln^2{T})$
        &$O(T^{\frac{4p}{(1+p)^2}}\ln^4{T})$
        &$O(T^{\frac{2p}{(1+p)^2}}\ln^2{T})$\\
        & FOGD & $\tilde{O}(\sqrt{TL(f)/D})$ & $O(dD)$ &0\\
        & {\color{blue}AOGD-ALD} & {\color{blue}$O(\sqrt{L(f)})$}
        & {\color{blue}$O(dT^{\frac{2}{p}}+T^{\frac{4}{p}})$, $p>4$} &{\color{blue}$O(T^\frac{2}{p})$}\\
        & {\color{blue}NONS-ALD} & {\color{blue}$O(T^{\frac{1}{1+p}}\ln{T})$}
        & {\color{blue}$O(T^\frac{2(1+5p)}{p(1+p)})$, $p\geq 10$}
        &{\color{blue}$O(T^\frac{1+5p}{p(1+p)})$}\\
      \bottomrule
      \end{tabular}
      \caption{Regret bound and computational complexity
      (space complexity and (averaged) per-round time complexity)
      of online kernel regression algorithms.
      $\{\lambda_i\}^T_{i=1}$ are the eigenvalues of the kernel matrix.
      {\#}Buffer is the number of stored examples.}
      \label{tab:ICML2023:comparison_results}
    \end{table*}

\subsection{Technical Contributions}

    AOGD-ALD approximates KOGD and NONS-ALD approximates KONS.
    We use the approximate linear dependence condition \citep{Engel2004The}
    to dynamically maintain a group of nearly orthogonal basis.
    The computational complexities of our algorithms
    have a quadratic dependence on the number of basis
    which depends on the decay rate of eigenvalues of the kernel matrix \cite{Li2022Improved}.
    For AOGD-ALD,
    we use the orthogonal basis to approximate the gradients.
    For NONS-ALD,
    we use the Nystr\"{o}m projection with the orthogonal basis
    to construct explicit feature mapping.
    Since the number of basis may grow with $t$,
    the feature mapping must change dynamically.
    The first technical challenge is
    how to incrementally update the model parameter ${\bf w}_t$ and the covariance matrix ${\bf A}_t$
    when the explicit feature mapping changes.
    Our first technical contribution is a projection scheme
    which projects ${\bf w}_t\in \mathbb{R}^j$ onto $\mathbb{R}^{j+1}$,
    and projects ${\bf A}_t\in \mathbb{R}^{j\times j}$ onto $\mathbb{R}^{(j+1)\times (j+1)}$.
    The regret analysis is also challenging,
    since it requires to control the regret induced by the projection.
    Our second technical contribution is a non-trivial and novel analysis
    for the regret induced by the projection.
    We proved that it only depends on the error related to the ALD condition
    and can be omitted by controlling the error.
    The approximate scheme of NONS-ALD provides a new approach
    for both online and offline kernel learning
    which might be of independent interest.

\section{Preliminary and Problem Setting}

    Let $\mathcal{X}=\{\mathbf{x}\in\mathbb{R}^d\vert\Vert\mathbf{x}\Vert_2<\infty\}$
    and $\mathcal{I}_T=\{(\mathbf{x}_t,y_t)_{t\in[T]}\}$ be a sequence of examples,
    where $[T] = \{1,\ldots,T\}$,
    $\mathbf{x}_t\in\mathcal{X}, \vert y_t\vert\leq Y$.
    Let $\kappa(\cdot,\cdot):\mathbb{R}^d \times \mathbb{R}^d \rightarrow \mathbb{R}_{\geq 0}$
    be a positive semidefinite kernel function.
    We assume that $\kappa$ is normalized and $\kappa({\bf x},{\bf x})=1$.
    Denote by $\mathcal{H}$ the RKHS associated with $\kappa$,
    such that
    (i) $\langle f,\kappa({\bf x},\cdot)\rangle_{\mathcal{H}}=f({\bf x})$;
    (ii) $\mathcal{H}=\overline{\mathrm{span}(\kappa({\bf x}_t,\cdot): t\in[T])}$.
    We define $\langle\cdot,\cdot\rangle_{\mathcal{H}}$ as the inner product in $\mathcal{H}$,
    which induces the norm $\Vert f\Vert_{\mathcal{H}}=\sqrt{\langle f,f\rangle_{\mathcal{H}}}$.
    Denote by $\mathbb{H}=\{f\in\mathcal{H}\vert \Vert f\Vert_{\mathcal{H}}\leq U\}$.
    $U$ is a constant.
    The square loss function is $\ell(f(\mathbf{x}),y)=(f(\mathbf{x})-y)^2$.

\subsection{Effective Dimension}

    $\kappa$ induces an implicit feature mapping
    $\phi(\cdot):\mathcal{X}\rightarrow\mathbb{R}^n$,
    where $n$ may be infinite.
    The orthogonality of $\{\phi(\mathbf{x}_t)\}^T_{t=1}$
    characterizes the hardness of the data.
    A usual measure of the orthogonality is the effective dimension \cite{Calandriello2017Second}.
    \begin{definition}[$\mu$-effective dimension]
    \label{def:NeurIPS2021:effective_dimension}
        Given instances $\{{\bf x}_{\tau}\}^T_{\tau=1}$,
        a kernel function $\kappa$ and a regularization parameter $\mu>0$,
        the ridge leverage scores (RLS) of ${\bf x}_{\tau}$ is defined by
        $$
            r_{T,\tau}(\mu)
            ={\bf e}^\top_{T,\tau}{\bf K}_T({\bf K}_T+\mu{\bf I}_T)^{-1}{\bf e}_{T,\tau},
            ~\tau=1,\ldots,T,
        $$
        where ${\bf K}_T$ is the kernel matrix and ${\bf e}_{T,\tau}\in\{0,1\}^T$.
        Only the $\tau$-th element of ${\bf e}_{T,\tau}$ is one.
        The $\mu$-effective dimension is
        $\mathrm{d}_{\mathrm{eff}}(\mu):=\sum^T_{\tau=1}r_{T,\tau}(\mu)
        =\mathrm{tr}({\bf K}_T({\bf K}_T+\mu{\bf I}_T)^{-1})$.
    \end{definition}
    Let $\lambda_1\geq \lambda_2\geq \ldots\geq \lambda_T$ be
    the eigenvalues of ${\bf K}_T$.
    If $\lambda_i$ decays exponentially, then
    $\mathrm{d}_{\mathrm{eff}}(\mu)=O(\ln{\frac{T}{\mu}})$ \cite{Li2019Towards}.
    If $\lambda_i$ decays polynomially with degree $p\geq 1$,
    then $\mathrm{d}_{\mathrm{eff}}(\mu)=O((T/\mu)^{1/p})$ \cite{Jezequel2019Efficient}.

\subsection{Online Kernel Regression}
\label{sec:ICML2023:problem_definition_OKR}

    The protocol of online kernel regression is as follows:
    at any round $t$,
    an adversary sends an instance ${\bf x}_t\in\mathcal{X}$.
    An learner chooses a hypothesis $f_t\in\mathcal{H}$,
    and makes the prediction $\hat{y}_t=f_t({\bf x}_t)$.
    Then the adversary reveals the true output $y_t$.
    We aim to minimize the regret w.r.t. any $f\in\mathbb{H}$,
    denoted by $\mathrm{Reg}(f)$ defined in \eqref{eq:ICML2023:definition_regret}.
    It is worth mentioning that
    competing with $f\in\mathbb{H}$ does not weaken the definition of regret.
    Note that $\vert y_t\vert\leq Y$.
    It is natural to require $\vert f({\bf x}_t)\vert\leq Y$.
    Since $f({\bf x}_t)\leq \Vert f\Vert_{\mathcal{H}}\cdot \Vert\kappa({\bf x}_t,\cdot)\Vert_{\mathcal{H}}
    \leq \Vert f\Vert_{\mathcal{H}}$.
    We only need to consider all $f$ such that $\Vert f\Vert_{\mathcal{H}}\leq Y$.
    To this end,
    we can define $U\geq Y$.

\section{Approximating KOGD}

    In this section,
    we propose a deterministic approximation of KOGD,
    named AOGD-ALD.

\subsection{Algorithm}

    According to the protocol of online kernel regression,
    the key is to compute $f_{t+1}$ from $f_t$.
    KOGD \cite{Zinkevich2003Online} executes the following update rule,
    \begin{align}
        \bar{f}_{t+1}=&f_t-\eta_t\nabla\ell(f_t({\bf x}_t),y_t),\label{eq:ICML2023:OGD_update}\\
        f_{t+1}=&\min\left\{1,\frac{U}{\Vert \bar{f}_{t+1}\Vert_{\mathcal{H}}}\right\}\bar{f}_{t+1},
        \label{eq:ICML2023:OGD_project}
    \end{align}
    where $\nabla\ell(f_t({\bf x}_t),y_t)=\ell'(f_t({\bf x}_t),y_t)\kappa({\bf x}_t,\cdot)$
    and $\eta_t$ is a time-variant learning rate.
    Note that $\ell'(f_t({\bf x}_t),y_t)=2(f_t({\bf x}_t)-y_t)$.
    $f_{t+1}$ can be recursively rewritten as
    $f_{t+1}=\sum^{t}_{\tau=1}a_\tau\kappa({\bf x}_\tau,\cdot)$.
    To store $f_{t+1}$,
    we must store some observed examples,
    denoted by $S_{t+1}=\{({\bf x}_\tau,y_\tau),\tau\leq t:a_\tau\neq 0\}$.
    For simplicity,
    we call $S_{t+1}$ the buffer.
    The computational complexity is $O(dt)$.
    To reduce the computational cost,
    we must limit the size of $S_{t+1}$.
    Next,
    we use the approximate linear dependence (ALD) condition \cite{Engel2004The}
    to maintain $S_{t+1}$.

    At the beginning of round $t$,
    let $S_t$ be the buffer.
    If $\nabla\ell(f_t({\bf x}_t),y_t)\neq 0$,
    then we must decide whether $({\bf x}_t,y_t)$ will be added into $S_t$.
    The ALD condition measures
    whether $\kappa(\mathbf{x}_t,\cdot)$ is approximate linear dependence with
    ${\bm \Phi}_{S_t}=(\kappa(\mathbf{x},\cdot)_{\mathbf{x}\in S_t})$.
    We compute the projection error
    \begin{equation}
    \label{eq:AAAI23:projection_ALD}
        \left(\min_{{\bm \beta}\in\mathbb{R}^{\vert S_t\vert}}
        \left\Vert {\bm \Phi}_{S_t}{\bm \beta}-\kappa(\mathbf{x}_t,\cdot)\right\Vert^2_{\mathcal{H}}\right)
        =:\alpha_t.
    \end{equation}
    The solution
    \footnote{If $S_t=\emptyset$, then we set ${\bm \beta}^\ast_t=0$ and $\alpha_t=1$.}
    is
    $$
        {\bm \beta}^\ast_t = {\bf K}^{-1}_{S_t}{\bm \Phi}^\top_{S_t}\kappa(\mathbf{x}_t,\cdot),
    $$
    where ${\bf K}_{S_t}$ is the kernel matrix defined on $S_t$.
    We introduce a threshold for $\alpha_t$
    and define the ALD condition as follows
    \begin{equation}
    \label{eq:AAAI23:POMD:second_updating:ALD}
        \mathrm{ALD}_t:\alpha_t \leq \alpha,~\alpha\in(0,1].
    \end{equation}
    If $\mathrm{ALD}_t$ holds,
    then $\kappa(\mathbf{x}_t,\cdot)$ can be well approximated by
    ${\bm \Phi}_{S_t}{\bm \beta}^\ast_t$.
    Thus we replace \eqref{eq:ICML2023:OGD_update} with \eqref{eq:ICML2023:AOGD:ALD_update},
    \begin{equation}
    \label{eq:ICML2023:AOGD:ALD_update}
        \bar{f}_{t+1}=f_t-\eta_t\ell'(f_t({\bf x}_t),y_t)\cdot{\bm \Phi}_{S_t}{\bm \beta}^\ast_t.
    \end{equation}
    In this case,
    we do not add $(\mathbf{x}_t,y_t)$ into $S_t$, i.e., $S_{t+1}=S_t$.

    If $\mathrm{ALD}_t$ does not hold,
    that is, $\kappa(\mathbf{x}_t,\cdot)$
    can not be well approximated by ${\bm \Phi}_{S_t}{\bm \beta}^\ast_t$,
    then we still execute \eqref{eq:ICML2023:OGD_update}.
    In this case,
    we add $(\mathbf{x}_t,y_t)$ into $S_t$, i.e., $S_{t+1}=S_t\cup\{({\bf x}_t,y_t)\}$.

    The computational complexity is $O(d\vert S_t\vert+\vert S_t\vert^2)$.
    It has been proved that $\vert S_t\vert$ depends on the decay rate
    of eigenvalues of the kernel matrix ${\bf K}_T$.
    If the eigenvalues decay slowly,
    then it is possible that $\vert S_t\vert\gg\sqrt{T}$.
    In this case,
    the computational complexity is $\Omega(T)$.
    To address this issue,
    we set a threshold $B_0$ for $\vert S_t\vert$.
    If $\vert S_t\vert\geq B_0$,
    then we always execute \eqref{eq:ICML2023:OGD_update}.

    The learning rate $\eta_t$ is defined as follows
    \begin{align*}
        \eta_t=&\frac{U}{\sqrt{1+\sum^{t}_{\tau=1}\Vert\hat{\nabla}_{\tau}\Vert^2_{\mathcal{H}}}},\\
        \hat{\nabla}_{\tau}=&\left\{
        \begin{array}{ll}
        \ell'(f_\tau({\bf x}_\tau),y_\tau)\cdot
        {\bm \Phi}_{S_\tau}{\bm \beta}^\ast_\tau,&\mathrm{if}~\mathrm{ALD}_\tau~\mathrm{holds},\\
        \ell'(f_\tau({\bf x}_\tau),y_\tau)\cdot\kappa({\bf x}_\tau,\cdot),&\mathrm{otherwise}.
        \end{array}
        \right.
    \end{align*}
    We name this algorithm AOGD-ALD (Approximating kernelized Online Gradient Descent by the ALD condition),
    and give the pseudo-code in Algorithm \ref{alg:ICML2023:AOGD-ALD}.

    \begin{algorithm}[!t]
        \caption{AOGD-ALD}
        \footnotesize
        \label{alg:ICML2023:AOGD-ALD}
        \begin{algorithmic}[1]
        \REQUIRE{$U$, $\alpha$, $B_0$.}
        \ENSURE{$f_1=0$}
        \FOR{$t=1,\ldots,T$}
            \STATE Receive ${\bf x}_t$
            \STATE Compute $\hat{y}_t=f_t({\bf x}_t)$
            \STATE Compute $\eta_t$
            \IF{$\vert S_t\vert< B_0$}
                \STATE Compute $\alpha_t$
                \IF{$\mathrm{ALD}_t$ holds}
                    \STATE Compute $f_{t+1}$ following \eqref{eq:ICML2023:AOGD:ALD_update}
                    and \eqref{eq:ICML2023:OGD_project}
                \ELSE
                    \STATE Compute $f_{t+1}$ following \eqref{eq:ICML2023:OGD_update}
                    and \eqref{eq:ICML2023:OGD_project}
                    \STATE Update $S_{t+1}=S_t\cup\{({\bf x}_t,y_t)\}$
                \ENDIF
            \ELSE
                    \STATE Compute $f_{t+1}$ following \eqref{eq:ICML2023:OGD_update}
                    and \eqref{eq:ICML2023:OGD_project}
                    \STATE Update $S_{t+1}=S_t\cup\{({\bf x}_t,y_t)\}$
            \ENDIF
        \ENDFOR
        \end{algorithmic}
    \end{algorithm}

\subsection{Regret Bound}

    We first give the size of buffer maintained by the ALD condition.

    \begin{lemma}[\citet{Li2022Improved}]
    \label{thm:AAAI2023:size_budget}
        Let $S_1=\emptyset$
        and $\mathrm{ALD}_t$ be defined in \eqref{eq:AAAI23:POMD:second_updating:ALD}.
        For all $t\leq T-1$,
        if $\mathrm{ALD}_t$ does not hold,
        then $S_{t+1}=S_t\cup\{(\mathbf{x}_t,y_t)\}$.
        Otherwise, $S_{t+1}=S_t$.
        Let $\{\lambda_i\}^T_{i=1}$ be the eigenvalues of ${\bf K}_T$ sorted in decreasing order.
        If $\{\lambda_i\}^T_{i=1}$ decay exponentially,
        that is, there is a constant $R_0>0$ and $0<r<1$
        such that $\lambda_i\leq R_0r^{i}$,
        then $\vert S_T\vert\leq 2\frac{\ln{(\frac{C_1R_0}{\alpha})}}{\ln{r^{-1}}}$.
        If $\{\lambda_i\}^T_{i=1}$ decay polynomially,
        that is, there is a constant $R_0>0$ and $p\geq 1$,
        such that $\lambda_i\leq R_0i^{-p}$,
        then $\vert S_T\vert\leq \mathrm{e}(\frac{C_2R_0}{\alpha})^{\frac{1}{p}}$.
        In both cases,
        $C_1$ and $C_2$ are constants,
        and $R_0=\Theta(T)$.
    \end{lemma}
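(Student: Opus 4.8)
The plan is to bound $\vert S_T\vert$ by a volumetric argument that relates the sequence of projection errors to the determinant of the buffer's kernel matrix, and then compares this determinant against the top eigenvalues of ${\bf K}_T$ via Cauchy interlacing. Write $m=\vert S_T\vert$ and order the stored instances ${\bf x}_{i_1},\ldots,{\bf x}_{i_m}$ by the round at which they entered the buffer, so $i_1<\cdots<i_m$. At the moment ${\bf x}_{i_k}$ is added, the buffer equals $S_{i_k}=\{{\bf x}_{i_1},\ldots,{\bf x}_{i_{k-1}}\}$, and $\mathrm{ALD}_{i_k}$ fails, so by \eqref{eq:AAAI23:projection_ALD}--\eqref{eq:AAAI23:POMD:second_updating:ALD} the squared residual of $\kappa({\bf x}_{i_k},\cdot)$ onto $\mathrm{span}\{\kappa({\bf x}_{i_1},\cdot),\ldots,\kappa({\bf x}_{i_{k-1}},\cdot)\}$ satisfies $\alpha_{i_k}>\alpha$. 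Since $\alpha_{i_k}$ is exactly the squared norm of the $k$-th vector produced by Gram--Schmidt orthogonalization (equivalently the $k$-th pivot of the Cholesky factorization) of the Gram matrix ${\bf K}_{S_T}$ in this order, the determinant factorizes as $\det({\bf K}_{S_T})=\prod_{k=1}^m\alpha_{i_k}>\alpha^{m}$.

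Next I would upper bound the same determinant. Because $S_T$ is a subset of $\{{\bf x}_1,\ldots,{\bf x}_T\}$, the matrix ${\bf K}_{S_T}$ is an $m\times m$ principal submatrix of ${\bf K}_T$, so Cauchy's interlacing theorem gives $\lambda_j({\bf K}_{S_T})\leq\lambda_j({\bf K}_T)=\lambda_j$ for every $j\leq m$. Hence $\det({\bf K}_{S_T})=\prod_{j=1}^m\lambda_j({\bf K}_{S_T})\leq\prod_{j=1}^m\lambda_j$, and combining with the lower bound yields the master inequality $\alpha^{m}<\prod_{j=1}^m\lambda_j$.

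It then remains to solve this inequality under each decay assumption. In the exponential case $\lambda_j\leq R_0r^{j}$, so $\prod_{j=1}^m\lambda_j\leq R_0^m r^{m(m+1)/2}$; substituting into $\alpha^m<R_0^mr^{m(m+1)/2}$, taking $m$-th roots and logarithms, and using $\ln r<0$ gives $\frac{m+1}{2}<\frac{\ln(R_0/\alpha)}{\ln(r^{-1})}$, which rearranges to $m\leq 2\frac{\ln(C_1R_0/\alpha)}{\ln(r^{-1})}$ after absorbing the additive slack into $C_1$. In the polynomial case $\lambda_j\leq R_0j^{-p}$, so $\prod_{j=1}^m\lambda_j\leq R_0^m(m!)^{-p}$; then $\alpha^m<R_0^m(m!)^{-p}$ gives $(m!)^{1/m}<(R_0/\alpha)^{1/p}$, and Stirling's bound $m!\geq(m/\mathrm{e})^m$ yields $m/\mathrm{e}<(R_0/\alpha)^{1/p}$, i.e.\ $m\leq\mathrm{e}(C_2R_0/\alpha)^{1/p}$, the constant $C_2$ again absorbing the slack from interlacing and Stirling.

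The conceptual heart, and the step I expect to be the main obstacle, is the first paragraph: recognizing that the sequentially computed projection errors $\alpha_{i_k}$ are precisely the Gram--Schmidt/Cholesky pivots of ${\bf K}_{S_T}$, so that their product is the determinant. This identity---together with Cauchy interlacing, which transfers the eigenvalue decay of the full matrix to its buffer submatrix---is what converts a per-round greedy selection rule into a single global determinantal inequality; once the master inequality $\alpha^m<\prod_{j\leq m}\lambda_j$ is in hand, both decay regimes reduce to elementary estimates (the polynomial case needing only Stirling's approximation).
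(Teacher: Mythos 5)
Your proof is correct, and it stands on its own. One thing to note: this paper never proves Lemma~\ref{thm:AAAI2023:size_budget} at all --- it is imported verbatim from \citet{Li2022Improved}, and the appendix contains no proof of it --- so there is no in-paper argument to compare against. Checking your steps directly: the identity $\det(\mathbf{K}_{S_T})=\prod_{k=1}^{m}\alpha_{i_k}$ is valid because the buffer at round $i_k$ consists of exactly the previously inserted points, so each $\alpha_{i_k}$ is precisely the $k$-th sequential squared Gram--Schmidt residual, and each exceeds $\alpha$ by the failure of $\mathrm{ALD}_{i_k}$; Cauchy interlacing applies since $\mathbf{K}_{S_T}$ is a principal submatrix of the symmetric PSD matrix $\mathbf{K}_T$, giving $\det(\mathbf{K}_{S_T})\leq\prod_{j=1}^{m}\lambda_j$; and the two decay regimes then follow from the elementary estimates you give (the $m$-th root of $r^{m(m+1)/2}$, and Stirling's $m!\geq(m/\mathrm{e})^m$). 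Indeed, the very shape of the stated bounds betrays this route: the factor $2$ in the exponential case is an artifact of the triangular-number exponent, and the factor $\mathrm{e}$ in the polynomial case is Stirling's constant, so your argument is almost certainly the one behind the cited result; your version even delivers the bounds with $C_1=C_2=1$, marginally sharper than the statement as quoted.
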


    Next we give the regret bound and the computational complexity of AOGD-ALD.

    \begin{theorem}
    \label{thm:ICML2023:AOGD-ALD}
        Let $B_0=\lfloor(\sqrt{d^2+4dT}-d)/2\rfloor$ and $\alpha=T^{-1}$.
        For any $\mathcal{I}_T$ satisfying $T>\ln^2{T}$,
        the regret of AOGD-ALD satisfies,
        $$
            \forall f\in\mathbb{H},\quad\mathrm{Reg}(f)=O\left(U\sqrt{L(f)+U}+U^2\right).
        $$
        If $\{\lambda_i\}^T_{i=1}$ decay exponentially,
        then the computational complexity is $O(d\ln{T}+\ln^2{T})$.
        If $\{\lambda_i\}^T_{i=1}$ decay polynomially with degree $p\geq 1$,
        then the computational complexity is
        $O\left(\min\{dT^{\frac{2}{p}}+T^{\frac{4}{p}},dT\}\right).
        $
    \end{theorem}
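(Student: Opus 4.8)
\emph{Proof proposal.} The plan is to view AOGD-ALD as an adaptive online gradient descent in $\mathcal{H}$ that, at each round, descends along a surrogate gradient $\hat{\nabla}_t$ instead of the true gradient $\nabla\ell(f_t(\mathbf{x}_t),y_t)=\ell'(f_t(\mathbf{x}_t),y_t)\kappa(\mathbf{x}_t,\cdot)$, and then to pay separately for the two sources of error: the ordinary descent error and the error introduced by replacing $\kappa(\mathbf{x}_t,\cdot)$ with its ALD projection ${\bm\Phi}_{S_t}{\bm\beta}^\ast_t$. Writing $\ell_t=\ell(f_t(\mathbf{x}_t),y_t)$ and $\widehat{L}=\sum_{t=1}^T\ell_t$ for the cumulative loss of the algorithm, convexity of the square loss gives $\mathrm{Reg}(f)\le\sum_{t=1}^T\langle\nabla\ell(f_t(\mathbf{x}_t),y_t),f_t-f\rangle_{\mathcal{H}}$, and I would split this as
\begin{equation*}
\mathrm{Reg}(f)\le\underbrace{\sum_{t=1}^T\langle\hat{\nabla}_t,f_t-f\rangle_{\mathcal{H}}}_{\text{(I)}}+\underbrace{\sum_{t=1}^T\langle\nabla\ell(f_t(\mathbf{x}_t),y_t)-\hat{\nabla}_t,f_t-f\rangle_{\mathcal{H}}}_{\text{(II)}}.
\end{equation*}

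For term (I), since $\bar{f}_{t+1}=f_t-\eta_t\hat{\nabla}_t$ followed by the projection \eqref{eq:ICML2023:OGD_project} onto $\mathbb{H}$ is exactly projected OGD with step sizes $\eta_t=U/\sqrt{1+\sum_{\tau\le t}\Vert\hat{\nabla}_\tau\Vert^2_{\mathcal{H}}}$, the standard adaptive-step-size telescoping argument (using non-expansiveness of the projection and $\Vert f_t-f\Vert_{\mathcal{H}}\le 2U$) yields $(\mathrm{I})\le 3U\sqrt{1+\sum_{t=1}^T\Vert\hat{\nabla}_t\Vert^2_{\mathcal{H}}}$. The self-bounding property of the square loss is what turns this into a small-loss bound: because $\ell'(f_t(\mathbf{x}_t),y_t)^2=4\ell_t$ and $\Vert{\bm\Phi}_{S_t}{\bm\beta}^\ast_t\Vert_{\mathcal{H}}\le\Vert\kappa(\mathbf{x}_t,\cdot)\Vert_{\mathcal{H}}=1$ (an orthogonal projection is non-expansive), in either branch of the update $\Vert\hat{\nabla}_t\Vert^2_{\mathcal{H}}\le 4\ell_t$, so $(\mathrm{I})\le 3U\sqrt{1+4\widehat{L}}$.

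The crux is term (II), which captures exactly the regret created by the ALD approximation. It is nonzero only on rounds where $\mathrm{ALD}_t$ holds, and there $\nabla\ell(f_t(\mathbf{x}_t),y_t)-\hat{\nabla}_t=\ell'(f_t(\mathbf{x}_t),y_t)(\kappa(\mathbf{x}_t,\cdot)-{\bm\Phi}_{S_t}{\bm\beta}^\ast_t)$. Applying Cauchy--Schwarz with $\Vert\kappa(\mathbf{x}_t,\cdot)-{\bm\Phi}_{S_t}{\bm\beta}^\ast_t\Vert_{\mathcal{H}}=\sqrt{\alpha_t}\le\sqrt{\alpha}$, $\Vert f_t-f\Vert_{\mathcal{H}}\le 2U$, and $|\ell'(f_t(\mathbf{x}_t),y_t)|=2\sqrt{\ell_t}$ gives $(\mathrm{II})\le 4U\sqrt{\alpha}\sum_{t=1}^T\sqrt{\ell_t}$, and a second Cauchy--Schwarz over $t$ bounds $\sum_t\sqrt{\ell_t}\le\sqrt{T\widehat{L}}$. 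The decisive choice is $\alpha=T^{-1}$, which cancels the $\sqrt{T}$ and turns a potential $O(U^2\sqrt{T})$ penalty into $(\mathrm{II})\le 4U\sqrt{\widehat{L}}$. Combining the two terms, $\mathrm{Reg}(f)\le 3U\sqrt{1+4\widehat{L}}+4U\sqrt{\widehat{L}}$; substituting the identity $\widehat{L}=\mathrm{Reg}(f)+L(f)$ produces an inequality of the form $R\le c_1U\sqrt{1+R+L(f)}$ in $R=\mathrm{Reg}(f)$, which I would resolve as a quadratic in $\sqrt{R}$ to obtain $\mathrm{Reg}(f)=O(U\sqrt{L(f)+U}+U^2)$. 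I expect this error-cancellation step, together with the self-bounding coupling that forces both (I) and (II) to scale with $\sqrt{\widehat{L}}$ rather than $\sqrt{T}$, to be the main obstacle.

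For the computational complexity I would note that the per-round cost is $O(d|S_t|+|S_t|^2)$ (evaluating $f_t$, computing $\alpha_t$ and ${\bm\beta}^\ast_t$, and updating), which is monotone in $|S_t|$. The threshold $B_0=\lfloor(\sqrt{d^2+4dT}-d)/2\rfloor$ is chosen precisely so that $dB_0+B_0^2=dT$, capping the per-round cost at $O(dT)$ even when the buffer grows beyond the ALD-guaranteed size. By Lemma~\ref{thm:AAAI2023:size_budget} with $\alpha=T^{-1}$ and $R_0=\Theta(T)$, exponential decay gives $|S_T|=O(\ln T)$ and polynomial decay gives $|S_T|=O(T^{2/p})$. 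In the exponential case the hypothesis $T>\ln^2 T$ guarantees $|S_T|=O(\ln T)<B_0=\Theta(\sqrt{dT})$, so the buffer never reaches the cap and the cost is $O(d\ln T+\ln^2 T)$; in the polynomial case the cost is $O(dT^{2/p}+T^{4/p})$ whenever the buffer stays below $B_0$ (which holds for $p>4$, since then $T^{2/p}=o(\sqrt{dT})$) and is otherwise capped at $O(dT)$, yielding $O(\min\{dT^{2/p}+T^{4/p},dT\})$.
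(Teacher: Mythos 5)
Your proposal is correct and follows essentially the same route as the paper's proof: the identical decomposition into the surrogate-gradient OGD term and the ALD-approximation term, the same Cauchy--Schwarz bound $O(U\sqrt{\alpha}\sqrt{T\widehat{L}})$ with $\alpha=T^{-1}$ cancelling the $\sqrt{T}$, the same self-bounding step $\Vert\hat{\nabla}_t\Vert^2_{\mathcal{H}}\leq 4\ell_t$ followed by solving the quadratic via $\widehat{L}=\mathrm{Reg}(f)+L(f)$, and the same complexity accounting through Lemma~\ref{thm:AAAI2023:size_budget} and the cap $B_0(B_0+d)=dT$. The only cosmetic difference is that the paper treats the post-cap rounds ($\vert S_t\vert\geq B_0$) as a separate case, whereas you absorb them uniformly by noting the approximation term vanishes whenever the true gradient is used; the resulting bounds coincide.
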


    Let $f^\ast=\mathrm{argmin}_{f\in\mathbb{H}}L(f)$.
    $O(\sqrt{L(f^\ast)})$ is called ``small-loss'' bound
    \cite{Orabona2012Beyond,Lykouris2018Small,Lee2020A,Wang2020Adapting,Zhang2022A}.
    The data-dependent bound is never worse than the worst-case bound i.e., $O(\sqrt{T})$.
    If we select a good kernel function such that $L(f^\ast)\ll T$,
    then we can obtain a regret of $o(\sqrt{T})$.
    If $L_T(f^\ast)=0$,
    then we obtain a regret of $O(1)$.

\subsection{Comparison with Previous Results}

    The challenge of obtaining a regret of $O(\sqrt{L(f)})$ is the computational cost.
    KOGD achieves this regret bound
    at a computational complexity in $O(dT)$ \cite{Zinkevich2003Online}.
    With probability at least $1-\delta$,
    FOGD \cite{Lu2016Large} achieves a regret of
    $O(\sqrt{L(f)}+\frac{\sqrt{TL(f)\ln{\frac{1}{\delta}}}}{\sqrt{D}})$
    at a computational complexity in $O(dD)$.
    We can define $D=o(T)$
    which yields a suboptimal regret bound.
    For completeness,
    we reanalyze the regret of FOGD in the Appendix.
    Theorem \ref{thm:ICML2023:AOGD-ALD} shows that
    if the eigenvalues decay exponentially or polynomially with degree $p>4$,
    AOGD-ALD  achieves the optimal regret at a computational complexity in $o(T)$.

    Note that $L(f^\ast)$ depends on $\{({\bf x}_t,y_t)\}^T_{t=1}$,
    while $\mathrm{d}_{\mathrm{eff}}(\mu)$ depends on $\{{\bf x}_t\}^T_{t=1}$.
    In general, they are not comparable.
    Thus it is not intuitive to compare AOGD-ALD with
    Pros-N-KONS \cite{Calandriello2017Efficient} and PKAWV \cite{Jezequel2019Efficient}.
    We just explain that AOGD-ALD provides a new regret-computational cost trade-off.
    Table \ref{tab:ICML2023:comparison_results} shows that
    the computational complexity of Pros-N-KONS can be smaller than AOGD-ALD,
    but its regret bound is worse in the case of $p\leq 2+\sqrt{5}$.
    The computational complexity of PKAWV is always larger than AOGD-ALD,
    but its regret bound may be better for $p > 1$.
    In the case of $L(f^\ast)\ll T$,
    the regret bound of AOGD-ALD is also very small.

\section{Approximating KONS}

    The square loss function is exp-concave.
    Thus second-order algorithms, such as KONS,
    can obtain a regret of $O(\mu+\mathrm{d}_{\mathrm{eff}}(\mu)\ln{T})$.
    In this section,
    we propose a deterministic approximation of KONS,
    named NONS-ALD.

\subsection{Kernelized ONS}

    For simplicity,
    we use the hypothesis space $\mathcal{H}$.
    At the end of round $t$,
    the KONS algorithm \cite{Calandriello2017Second}
    compute $f_{t+1}$ by the following rule,
    \begin{equation}
    \label{eq:ICML2023:updating_KONS}
    \begin{split}
        {\bf A}_t=&{\bf A}_{t-1}+\eta_t\nabla\ell(f_t(\mathbf{x}_t),y_t)(\nabla\ell(f_t(\mathbf{x}_t),y_t))^\top,\\
        f_{t+1}=&f_t-{\bf A}^{-1}_t\nabla\ell(f_t(\mathbf{x}_t),y_t),
    \end{split}
    \end{equation}
    where ${\bf A}_0=\mu{\bf I}$.
    We give the pseudo-code in Algorithm \ref{alg:ICML2023:KONS}.

    \begin{algorithm}[!t]
        \caption{KONS}
        \footnotesize
        \label{alg:ICML2023:KONS}
        \begin{algorithmic}[1]
        \REQUIRE{${\bf A}_0=\mu{\bf I}$, $f_1=0$.}
        \FOR{$t=1,\ldots,T$}
            \STATE Receive ${\bf x}_t$
            \STATE Compute $\hat{y}_t=f_t(\mathbf{x}_t)$
            \STATE Update ${\bf A}_t={\bf A}_{t-1}+\eta_t\nabla\ell(f_t(\mathbf{x}_t),y_t)
            (\nabla\ell(f_t(\mathbf{x}_t),y_t))^\top$
            \STATE Compute $f_{t+1}=f_t-{\bf A}^{-1}_t\nabla\ell(f_t(\mathbf{x}_t),y_t)$
        \ENDFOR
        \end{algorithmic}
    \end{algorithm}

    KONS nearly stores all of the observed examples.
    At any round $t$,
    the computational complexity is $O(dt+t^2)$.
    To reduce the computational complexity,
    a natural idea is to use the ALD condition to maintain $S_t$.
    However,
    such a approach still has a $O(t\cdot\vert S_t\vert)$ computational complexity.
    Next we briefly explain the reason.

    At any round $t$,
    if $\mathrm{ALD}_t$ holds,
    then we can approximate $\kappa({\bf x}_t,\cdot)$
    by ${\bm \Phi}_{S_t}{\bm \beta}^\ast_t$ and $S_t$ keeps unchanged.
    Then we have ${\bf A}_t={\bf A}_{t-1}+\eta_t
    (\ell'(f_t({\bf x}_t),y_t))^2{\bm \Phi}_{S_t}{\bm \beta}^\ast_t({\bm \Phi}_{S_t}{\bm \beta}^\ast_t)^\top$.
    The key is to compute $f_{t+1}({\bf x}_{t+1})$.
    \begin{theorem}
    \label{thm:ICML2023:KONS_implicit_computing}
    Let $g_t=\ell'(f_t({\bf x}_t),y_t)$ and
    $$
        \hat{\phi}({\bf x}_t)
        =\left\{
        \begin{array}{ll}
            \phi({\bf x}_t)=\kappa({\bf x}_t,\cdot)& \mathrm{if}~\mathrm{ALD}_t~\mathrm{does~not~hold},\\
            {\bm \Phi}_{S_t}{\bm \beta}^\ast_t&\mathrm{otherwise}.\\
        \end{array}
        \right.\\
    $$
    Let $\hat{\nabla}_t=g_t\hat{\phi}({\bf x}_t)$ and
    $\hat{{\bm \Phi}}_{t}=(\sqrt{\eta_1}\hat{\nabla}_1,\ldots,\sqrt{\eta_t}\hat{\nabla}_t)$. Then
    \begin{align*}
        f_{t+1}&({\bf x}_{t+1})
        =\frac{-1}{\mu}\sum^{t}_{\tau=1}
        g_\tau\hat{\phi}({\bf x}_\tau)^\top\phi({\bf x}_{t+1})+\\
        &\frac{1}{\mu}\sum^{t}_{\tau=1}
        g_\tau\hat{\phi}({\bf x}_\tau)^\top\hat{{\bm \Phi}}_{\tau}
        (\hat{{\bm \Phi}}^\top_{\tau}\hat{{\bm \Phi}}_{\tau}
        +\mu{\bf I})^{-1}\hat{{\bm \Phi}}^\top_{\tau}\phi(\mathbf{x}_{t+1}).
    \end{align*}
    \end{theorem}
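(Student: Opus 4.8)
The plan is to unroll the second-order recursion into a closed-form sum, pass to the dual (kernel) representation via the reproducing property, and then collapse the inverse operator ${\bf A}^{-1}_\tau$ with the Woodbury (push-through) identity so that only finite Gram matrices remain. I would first record the two structural facts I need: that the approximate update reads $f_{t+1}=f_t-{\bf A}^{-1}_t\hat{\nabla}_t$ with $f_1=0$, and that ${\bf A}_\tau=\mu{\bf I}+\sum^{\tau}_{s=1}\eta_s\hat{\nabla}_s\hat{\nabla}^\top_s=\mu{\bf I}+\hat{{\bm \Phi}}_\tau\hat{{\bm \Phi}}^\top_\tau$, which follows directly from the definition of $\hat{{\bm \Phi}}_\tau$.

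Telescoping the update over $\tau=1,\ldots,t$ (using $f_1=0$) gives
\[
    f_{t+1}=-\sum^{t}_{\tau=1}{\bf A}^{-1}_\tau\hat{\nabla}_\tau.
\]
Next I would evaluate at ${\bf x}_{t+1}$ through the reproducing property $f_{t+1}({\bf x}_{t+1})=\langle f_{t+1},\phi({\bf x}_{t+1})\rangle$. Since each ${\bf A}^{-1}_\tau$ is self-adjoint and $\hat{\nabla}_\tau=g_\tau\hat{\phi}({\bf x}_\tau)$, moving the inverse onto $\phi({\bf x}_{t+1})$ yields the compact dual form $f_{t+1}({\bf x}_{t+1})=-\sum^{t}_{\tau=1}g_\tau\hat{\phi}({\bf x}_\tau)^\top{\bf A}^{-1}_\tau\phi({\bf x}_{t+1})$.

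The core computation is the expansion of ${\bf A}^{-1}_\tau=(\mu{\bf I}+\hat{{\bm \Phi}}_\tau\hat{{\bm \Phi}}^\top_\tau)^{-1}$. Applying the Woodbury identity with base $\mu{\bf I}$ and low-rank factor $\hat{{\bm \Phi}}_\tau$ gives the push-through form
\[
    {\bf A}^{-1}_\tau=\frac{1}{\mu}{\bf I}-\frac{1}{\mu}\hat{{\bm \Phi}}_\tau(\hat{{\bm \Phi}}^\top_\tau\hat{{\bm \Phi}}_\tau+\mu{\bf I})^{-1}\hat{{\bm \Phi}}^\top_\tau,
\]
which I would verify directly by multiplying the right-hand side by ${\bf A}_\tau$ and checking it equals ${\bf I}$. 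Substituting this into the dual sum and splitting off the two pieces — the $\frac{1}{\mu}{\bf I}$ term and the rank correction — produces exactly the two sums in the statement, with the signs and the factor $\frac{1}{\mu}$ matching.

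The main point requiring care, rather than a deep obstacle, is that $\phi$ and hence $\hat{{\bm \Phi}}_\tau$ may act on the (possibly infinite-dimensional) feature space, so ${\bf A}_\tau$ is an operator on $\mathcal{H}$ rather than a finite matrix. The regularizer guarantees ${\bf A}_\tau\succeq\mu{\bf I}$, so ${\bf A}^{-1}_\tau$ is a bounded self-adjoint operator and the push-through identity stays valid; crucially, the correction only inverts the Gram matrix $\hat{{\bm \Phi}}^\top_\tau\hat{{\bm \Phi}}_\tau+\mu{\bf I}$, whose entries $\sqrt{\eta_s\eta_{s'}}\,g_sg_{s'}\,\hat{\phi}({\bf x}_s)^\top\hat{\phi}({\bf x}_{s'})$ reduce to kernel evaluations on the buffer. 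Hence every inner product appearing in the final expression is finite-dimensionally computable, which is precisely what makes the formula implementable inside NONS-ALD.
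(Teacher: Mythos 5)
Your proposal is correct and follows essentially the same route as the paper's proof: unroll the recursion $f_{t+1}=-\sum^{t}_{\tau=1}{\bf A}^{-1}_{\tau}\hat{\nabla}_{\tau}$ (using $f_1=0$ and ${\bf A}_\tau=\mu{\bf I}+\hat{{\bm \Phi}}_{\tau}\hat{{\bm \Phi}}^\top_{\tau}$), then apply the push-through identity $({\bf X}{\bf X}^\top+\mu{\bf I})^{-1}=\frac{1}{\mu}({\bf I}-{\bf X}({\bf X}^\top{\bf X}+\mu{\bf I})^{-1}{\bf X}^\top)$, which is exactly the paper's Lemma~\ref{lemma:ICML2023:matrix_inverse}. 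Your added remark that ${\bf A}_\tau\succeq\mu{\bf I}$ keeps the operator inverse well defined in the possibly infinite-dimensional feature space is a point the paper leaves implicit, but it does not change the argument.
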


    Computing the first term requires time in $O(d\vert S_t\vert)$.
    Computing the second term requires time in $O(t\vert S_t\vert)$.
    The computational challenge comes from that
    KONS runs in the implicit feature space $\mathbb{R}^n$ in which
    we can not explicitly store and incrementally update ${\bf A}_t$.
    To address this issue,
    we use the Nystr\"{o}m projection to approximate the kernel mapping $\phi(\cdot)$,
    and run online Newton step (ONS) in an explicit feature space.
    To be specific,
    let $\phi_j(\cdot):\mathcal{X}\rightarrow \mathbb{R}^j$
    be an approximate kernel mapping.
    Then ${\bf A}_t={\bf A}_{t-1}+\eta_tg^2_t\phi_j({\bf x}_t)\phi^\top_j({\bf x}_t)$.
    We only store ${\bf A}_t\in\mathbb{R}^{j\times j}$, $j<\infty$
    and can incrementally update ${\bf A}_t$.
    The computational complexity is $O(j^2)$.

\subsection{Nystr\"{o}m Projection}

    We briefly introduce how the Nystr\"{o}m projection constructs explicit feature mapping
    \cite{Williams2001Using}.

    We select $j$ columns from ${\bf K}_T$ to form a matrix $\mathbf{K}_{T,j}\in\mathbb{R}^{T\times j}$,
    and select the corresponding $j$ rows from ${\bf K}_T$
    to form a matrix $\mathbf{K}^\top_{T,j}\in\mathbb{R}^{j\times T}$.
    Let $S(j)$ contain the selected instances and
    ${\bf K}_{S(j)}\in\mathbb{R}^{j\times j}$ be the crossing matrix
    whose SVD is ${\bf K}_{S(j)}={\bf U}_{S(j)}\Sigma_{S(j)}{\bf U}^\top_{S(j)}$.
    The Nystr\"{o}m projection approximates ${\bf K}_T$ by
    \begin{align*}
        {\bf K}_T
        \approx&\mathbf{K}_{T,j}{\bf K}^{+}_{S(j)}\mathbf{K}^\top_{T,j}\\
        =&(\Sigma^{-\frac{1}{2}}_{S(j)}{\bf U}^\top_{S(j)}{\bm \Phi}^\top_{S(j)}{\bm \Phi}_T)^\top
        \Sigma^{-\frac{1}{2}}_{S(j)}{\bf U}^\top_{S(j)}
        {\bm \Phi}^\top_{S(j)}{\bm \Phi}_T.
    \end{align*}
    ${\bm \Phi}_T=(\phi({\bf x}_t)_{t\in [T]})\in\mathbb{R}^{n\times T}$
    and ${\bm \Phi}_{S(j)}=(\phi({\bf x})_{{\bf x}\in S(j)})\in\mathbb{R}^{n\times j}$.
    Denote by $\mathcal{P}_{S(j)}={\bm \Phi}_{S(j)}{\bf U}_{S(j)}\Sigma^{-1}_{S(j)}{\bf U}^\top_{S(j)}
    {\bm \Phi}^\top_{S(j)}$
    the projection matrix onto the column space of ${\bm \Phi}_{S(j)}$.
    The approximate scheme defines an explicit feature mapping
    $$
        {\phi}_j(\cdot):\mathcal{X}\rightarrow
        \mathcal{P}^{\frac{1}{2}}_{S(j)}{\phi}(\cdot)\in\mathbb{R}^j,
    $$
    in which $\mathcal{P}^{\frac{1}{2}}_{S(j)}=
        \Sigma^{-\frac{1}{2}}_{S(j)}{\bf U}^\top_{S(j)}{\bm \Phi}^\top_{S(j)}$.
    It is obvious that
    the approximation error depends on the selected $j$ columns, or the crossing matrix.
    In the next subsection,
    we will use the ALD condition to select columns.

\subsection{Column Selecting by the ALD Condition}

    At the beginning of the $t$-th round,
    assuming that $\vert S_t\vert =j$.
    Denote by $S_t=S(j)$.
    We first decide whether ${\bf x}_t$ will be added into $S(j)$.
    Solving \eqref{eq:AAAI23:projection_ALD}, we obtain
    $$
        {\bm \beta}^\ast_j(t) = {\bf K}^{-1}_{S(j)}{\bm \Phi}^\top_{S(j)}\phi({\bf x}_t).
    $$
    If the $\mathrm{ALD}_t$ condition holds,
    that is
    $$
        \alpha_t=\left\Vert {\bm \Phi}_{S(j)}{\bm \beta}^\ast_j(t)
        -\kappa(\mathbf{x}_t,\cdot)\right\Vert^2_{\mathcal{H}}\leq \alpha,
    $$
    then $S(j)$ keeps unchanged.
    We use ${\bf K}_{S(j)}$ as the crossing matrix
    and defined
    $\phi_{j}(\mathbf{x}_t)
        =\mathcal{P}^{\frac{1}{2}}_{S(j)}\phi(\mathbf{x}_{t})$.
    If the $\mathrm{ALD}_t$ condition does not hold,
    then we execute $S_{t+1}=S_t\cup\{({\bf x}_t,y_t)\}$,
    and denote by $S_{t+1}=S(j+1)$.
    We will construct explicit feature $\phi_{j+1}({\bf x}_t)$.

\subsection{Algorithm}

    Let $\mathbb{W}_t=\{f\in\mathcal{H}:\vert f({\bf x}_t)\vert\leq U\}$ \citep{Luo2016Efficient,Calandriello2017Efficient}.
    For each $f\in\mathbb{H}$,
    $\vert f({\bf x}_t)\vert\leq \Vert f\Vert_{\mathcal{H}}\Vert\phi({\bf x}_t)\Vert_{\mathcal{H}}\leq U$.
    Thus $\mathbb{H}\subseteq \mathbb{W}_t$.
    Our algorithm will run in $\{\mathbb{W}_t\}^T_{t=1}$ not $\mathbb{H}$,
    since projection onto $\mathbb{W}_t$ is computationally more efficient.

    We divide the time horizon $\{1,\ldots,T\}$ into different epochs.
    \begin{align*}
        T_0&=\left\{s_1,\ldots,s_j\ldots,s_J:\mathrm{ALD}_{s_j}~\mathrm{does~not~hold}\right\},\\
        T_j&=\{s_j,s_j+1,\ldots,s_{j+1}-1\},~j=1,2,\ldots,J,
    \end{align*}
    where we define $s_1=1$ and $s_{J+1}-1=T$.
    Thus $\{1,\ldots,T\}=\cup^J_{j=1}T_j$.
    For any $t\in T_j$, let $S_t=S(j)=\{{\bf x}_{s_1},\ldots,{\bf x}_{s_j}\}$.
    $\forall j\in [J], t\in T_j\setminus \{s_j\}$,
            the $\mathrm{ALD}_t$ condition holds.
    Besides,
    it is obvious that $\kappa({\bf x}_{s_j},\cdot)\in {\bm \Phi}_{S(j)}$.

    The main idea of our algorithm is to run ONS on $T_j$, $j\in[J]$.
    Next we consider a fixed epoch $T_j$.
    At the beginning of round $t$,
    we compute $\phi_j({\bf x}_t)$.
    Our algorithm maintains a linear hypothesis
    $f_{j,t}(\cdot)={\bf w}^\top_j(t)\phi_j(\cdot)$,
    where ${\bf w}_j(t)\in\mathbb{R}^j$.
    The prediction is given by $\hat{y}_t=f_{j,t}(\mathbf{x}_t)$.
    For simplicity,
    let $g_{j}(t)=2(f_{t,j}({\bf x}_t)-y_t)$, and
    $$
        \nabla_{j}(t)=\nabla\ell(f_{j,t}({\bf x}_t),y_t)=g_{j}(t)\phi_{j}({\bf x}_t).
    $$
    We execute the following updating
    $$
        \left\{
        \begin{array}{ll}
        {\bf A}_j(t)={\bf A}_j(t-1)+\eta_tg^2_{j}(t)\phi_{j}({\bf x}_t)\phi^\top_{j}({\bf x}_t),\\
        \tilde{{\bf w}}_j(t+1)={\bf w}_j(t)-{\bf A}^{-1}_j(t)\nabla_{j}(t)\in\mathbb{R}^{j},\\
        {\bf w}_j(t+1)=\mathcal{P}_{\mathbb{W}_{t+1}}(\tilde{{\bf w}}_j(t+1)),
        \end{array}
        \right.
    $$
    where $\mathcal{P}_{\mathbb{W}_{t+1}}(\cdot)$ is a projection operator defined as follows
    \begin{equation}
    \label{eq:ICML2023:projection}
        {\bf w}_j(t+1)=\mathop{\arg\min}_{{\bf w}\in\mathbb{W}_{t+1}}
        \Vert {\bf w}-\tilde{{\bf w}}_j(t+1)\Vert^2_{{\bf A}_j(t)}.
    \end{equation}
    The initial configurations are denoted by ${\bf A}_j(s_j-1)$ and ${\bf w}_j(s_j)$.
    When we enter $T_j$ from $T_{j-1}$,
    the dimension of explicite feature mapping changes from $j-1$ to $j$
    which induces a technical challenge on initializing the configurations.
    To be specific,
    we can not use $f_{j-1,s_j}={\bf w}^\top_{j-1}(s_j)\phi_{j-1}(\cdot)$
    to prediction ${\bf x}_{t}, t\in T_{j}$.
    To address this issue,
    a simple approach is the restart technique.
    We just need to run a new ONS in $T_{j}$,
    which implies ${\bf A}_j(s_j-1)=\alpha{\bf I}$ and ${\bf w}_j(s_j)={\bf 0}$.
    This idea is adopted by PROS-N-KONS \citep{Calandriello2017Efficient}.
    The simple restart technique increases the regret by a factor of $O(J)$.
    Intuitively,
    the restart technique discards all of the information contained in
    ${\bf w}_{j-1}(s_j)\in\mathbb{R}^{j-1}$ and ${\bf A}_r(s_{r+1}-1)
    \in\mathbb{R}^{r\times r}$, $r\leq j-1$.
    Next we redefine the initial configurations.
    The main idea is to project ${\bf w}_{j-1}(s_j)$ onto $\mathbb{R}^j$
    and project ${\bf A}_r(s_{r+1}-1)$ onto $\mathbb{R}^{j\times j}$, $r\leq j-1$.

    The definition of ${\bf A}_{j}(s_j-1)$ is intuitive.
    For any $t\in T_j$,
    the updating rule of ONS is as follows,
    $$
        {\bf A}_{j}(t)={\bf A}_{j}(s_j-1)+\sum^t_{\tau=s_j}\eta_{\tau}
        g^2_j(\tau)\phi_j({\bf x}_\tau)\phi^\top_j({\bf x}_\tau).
    $$
    The ideal value of ${\bf A}_j(s_j-1)$ should be
    $$
        {\bf A}_j(s_j-1)
        =\mu{\bf I}+\sum^{j-1}_{r=1}\sum_{t\in T_r}\eta_tg^2_r(t)\phi_{j}({\bf x}_t)\phi^\top_{j}({\bf x}_t),
    $$
    where $\phi_{j}({\bf x}_t)=\mathcal{P}^{\frac{1}{2}}_{S(j)}\phi({\bf x}_t)$.
    However,
    such an approach must store $\{{\bf x}_t\}^{s_j-1}_{t=1}$
    which induce a $O(dT)$ computational complexity.
    Recalling that the ALD condition guarantees that
    $\phi({\bf x}_t)\approx{\bm \Phi}_{S(r)}{\bm\beta}^\ast_r(t)$.
    It is natural to define
    \begin{equation}
    \label{eq:ICML2023:tilde_phi}
        \forall t\in T_r,\quad
        \tilde{\phi}_j({\bf x}_t)=\mathcal{P}^{\frac{1}{2}}_{S(j)}{\bm \Phi}_{S(r)}{\bm \beta}^\ast_r(t).
    \end{equation}
    We can define ${\bf A}_j(s_j-1)$ as follows,
    \begin{equation}
    \label{eq:ICML2023:approximate_A_t}
        {\bf A}_j(s_j-1)
        =\mu{\bf I}+\sum^{j-1}_{r=1}\sum_{t\in T_r}\eta_tg^2_r(t)\tilde{\phi}_{j}({\bf x}_t)
        \tilde{\phi}^\top_{j}({\bf x}_t).
    \end{equation}
    In this way,
    we only use the instances in $S_t$.
    The computational complexity is $O(d\vert S_t\vert+\vert S_t\vert^2)$.

    It is less intuitive to define ${\bf w}_j(s_j)$.
    The projection of any $f\in\mathbb{H}$ onto the column space of ${\bm \Phi}_{S(j-1)}$
    and ${\bm \Phi}_{S(j)}$ are $f_{j-1}=\mathcal{P}_{S(j-1)}f$ and $f_j=\mathcal{P}_{S(j)}f$, respectively.
    Denote by $f_{j-1}={\bf w}^\top_{j-1}\phi_{j-1}(\cdot)$ and $f_j={\bf w}^\top_j\phi_j(\cdot)$.
    We can prove that
    ${\bf w}_{j-1}=\mathcal{P}^\frac{1}{2}_{S(j-1)}(\mathcal{P}^{\frac{1}{2}}_{S(j)})^\top{\bf w}_j$.
    Thus it must be
    \begin{equation}
    \label{eq:ICML2023:approximate_w_t:property_1}
        {\bf w}_{j-1}(s_j)
        =\mathcal{P}^\frac{1}{2}_{S(j-1)}(\mathcal{P}^{\frac{1}{2}}_{S(j)})^\top{\bf w}_{j}(s_j).
    \end{equation}
    Besides, at the $(s_j-1)$-th round,
    ${\bf w}_{j-1}(s_j)$ must be the solution of the following projection
    \begin{equation}
    \label{eq:ICML2023:approximate_w_t:property_2}
        {\bf w}_{j-1}(s_j)=\mathcal{P}_{\mathbb{W}_{s_j}}(\tilde{{\bf w}}_{j-1}(s_j)).
    \end{equation}
    To this end,
    we need to compute $\phi_{j-1}({\bf x}_{s_j})$.
    Note that $\mathrm{ALD}_{s_j}$ does not hold.
    Although
    $\kappa({\bf x}_{s_j},\cdot)$ can not be well approximated by ${\bm \Phi}_{S(j-1)}$,
    the goal of
    \eqref{eq:ICML2023:approximate_w_t:property_2} is just to ensure
    ${\bf w}_{j}(s_j)\in \mathbb{W}_{s_j}$.
    Both the property in \eqref{eq:ICML2023:approximate_w_t:property_1}
    and \eqref{eq:ICML2023:approximate_w_t:property_2} are critical to
    the regret analysis.

    We name this algorithm NONS-ALD (Nystr\"{o}m Online Newton Step using the ALD condition),
    and give the pseudo-code in Algorithm \ref{alg:ICML2023:NONS-ALD}.

    \begin{algorithm}[!t]
        \caption{\small{NONS-ALD}}
        \footnotesize
        \label{alg:ICML2023:NONS-ALD}
        \begin{algorithmic}[1]
        \REQUIRE{$\mu$, $\alpha$, $U$, $Y$}
        \ENSURE{$j=0$, ${\bf w}_1(1)=0, {\bf A}_1(0)=\mu$, $S(0)=\emptyset$, $\mathrm{flag}=1$}
        \FOR{$t=1,\ldots,T$}
            \STATE Receive $\mathbf{x}_t$
            \STATE Compute ${\bm \beta}^\ast_j(t)=\arg\min_{{\bm \beta}\in\mathbb{R}^j}
                  \Vert\phi(\mathbf{x}_t)-{\bm \Phi}_{S(j)}{\bm \beta}\Vert^2_{\mathcal{H}}$
            \STATE Compute $\alpha_t=\kappa(\mathbf{x}_t,\mathbf{x}_t)
            -\phi(\mathbf{x}_t)^\top{\bm \Phi}_{S(j)}{\bm \beta}^\ast_j(t)$
            \IF{$\alpha_t> \alpha$}
                \STATE $S(j+1)=S(j)\cup\{({\bf x}_t,y_t)\}$
                \STATE $\mathrm{flag}=1$
                \STATE $j=j+1$
            \ELSE
                \IF{$\mathrm{flag}==1$}
                    \STATE $s_j=t$
                    \STATE $({\bf U}_{S(j)},\Sigma_{S(j)})\leftarrow \mathrm{SVD}({\bf K}_{S(j)})$
                    \STATE $\mathrm{flag}=0$
                    \STATE Compute $Q_{j,j-1}=\mathcal{P}^{\frac{1}{2}}_{S(j)}
                    (\mathcal{P}^{\frac{1}{2}}_{S(j-1)})^\top$
                    \STATE Compute ${\bf A}_j(s_j-1)$ follows Lemma \ref{prop:ICML2023:approximate_A_t}
                    \STATE Compute ${\bf w}_j(s_j)
                =\mathcal{P}^{\frac{1}{2}}_{S(j)}(\mathcal{P}^\frac{1}{2}_{S(j-1)})^\top{\bf w}_{j-1}(s_j)$
                \ENDIF
                \STATE Compute $\phi_j({\bf x}_t)=\Sigma^{-\frac{1}{2}}_{S(j)}{\bf U}^\top_{S(j)}
                                {\bf \Phi}^\top_{S(j)}\phi(\mathbf{x}_{t})$
                \STATE Output $\hat{y}_t={\bf w}^\top_j(t)\phi_j({\bf x}_t)$
                \STATE Compute $\nabla_j(t)=\ell'(\hat{y}_t,y_t)\cdot \phi_j({\bf x}_t)$
                \STATE Update ${\bf A}_j(t)={\bf A}_j(t-1)+\eta_t\nabla_j(t)\nabla^\top_j(t)$
                \STATE Compute $\tilde{{\bf w}}_j(t+1)={\bf w}_j(t)-{\bf A}^{-1}_j(t)\nabla_j(t)$
                \STATE Compute $\phi_j({\bf x}_{t+1})=\Sigma^{-\frac{1}{2}}_{S(j)}{\bf U}^\top_{S(j)}
                                {\bf \Phi}^\top_{S(j)}\phi(\mathbf{x}_{t+1})$
                \STATE Compute ${\bf w}_j(t+1)$ following \eqref{eq:ICML2023:projection}
            \ENDIF
        \ENDFOR
        \end{algorithmic}
    \end{algorithm}

\subsection{Theoretical Analysis}

\subsubsection{Regret analysis}

    We first show an equivalent definition of \eqref{eq:ICML2023:approximate_A_t}.
    \begin{lemma}
    \label{prop:ICML2023:approximate_A_t}
        For any $j=1,\ldots,J$,
        the approximate scheme \eqref{eq:ICML2023:approximate_A_t} is equivalent to the following scheme
        $$
            {\bf A}_j(s_j-1)= \mu{\bf I}+Q_{j,j-1}({\bf A}_{j-1}(s_{j}-1)-\mu{\bf I})Q^\top_{j,j-1}
        $$
        where $Q_{j,j-1}=\mathcal{P}^{\frac{1}{2}}_{S(j)}(\mathcal{P}^{\frac{1}{2}}_{S(j-1)})^\top$.
    \end{lemma}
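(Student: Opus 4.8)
The plan is to prove the identity by directly factoring the conjugating matrix $Q_{j,j-1}$ out of the defining sum in \eqref{eq:ICML2023:approximate_A_t}, so that no induction is needed once two ``push-forward'' relations for the features are in place. Concretely, I claim that $Q_{j,j-1}\tilde{\phi}_{j-1}({\bf x}_t)=\tilde{\phi}_j({\bf x}_t)$ for every $t\in T_r$ with $r\leq j-2$, and $Q_{j,j-1}\phi_{j-1}({\bf x}_t)=\tilde{\phi}_j({\bf x}_t)$ for every $t\in T_{j-1}$. Given these, conjugating the end-of-epoch matrix ${\bf A}_{j-1}(s_j-1)-\mu{\bf I}$ by $Q_{j,j-1}$ reproduces exactly ${\bf A}_j(s_j-1)-\mu{\bf I}$.

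I would first record two elementary facts about the Nystr\"{o}m operators. From $\mathcal{P}^{\frac{1}{2}}_{S(j-1)}=\Sigma^{-\frac{1}{2}}_{S(j-1)}{\bf U}^\top_{S(j-1)}{\bm \Phi}^\top_{S(j-1)}$ and the SVD ${\bf K}_{S(j-1)}={\bf U}_{S(j-1)}\Sigma_{S(j-1)}{\bf U}^\top_{S(j-1)}$, a short computation gives $(\mathcal{P}^{\frac{1}{2}}_{S(j-1)})^\top\mathcal{P}^{\frac{1}{2}}_{S(j-1)}=\mathcal{P}_{S(j-1)}$, the orthogonal projection onto the column space of ${\bm \Phi}_{S(j-1)}$. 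Moreover, since the buffers are nested, $S(r)\subseteq S(j-1)$ for all $r\leq j-1$, the columns of ${\bm \Phi}_{S(r)}$ lie in that space, whence $\mathcal{P}_{S(j-1)}{\bm \Phi}_{S(r)}={\bm \Phi}_{S(r)}$. The first push-forward relation then follows for $t\in T_r$, $r\leq j-2$, by
\begin{align*}
    Q_{j,j-1}\tilde{\phi}_{j-1}({\bf x}_t)
    &=\mathcal{P}^{\frac{1}{2}}_{S(j)}(\mathcal{P}^{\frac{1}{2}}_{S(j-1)})^\top
      \mathcal{P}^{\frac{1}{2}}_{S(j-1)}{\bm \Phi}_{S(r)}{\bm \beta}^\ast_r(t)\\
    &=\mathcal{P}^{\frac{1}{2}}_{S(j)}\mathcal{P}_{S(j-1)}{\bm \Phi}_{S(r)}{\bm \beta}^\ast_r(t)\\
    &=\mathcal{P}^{\frac{1}{2}}_{S(j)}{\bm \Phi}_{S(r)}{\bm \beta}^\ast_r(t)
     =\tilde{\phi}_j({\bf x}_t),
\end{align*}
using \eqref{eq:ICML2023:tilde_phi}. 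For the most recent epoch $r=j-1$ I would instead use ${\bm \Phi}_{S(j-1)}{\bm \beta}^\ast_{j-1}(t)=\mathcal{P}_{S(j-1)}\phi({\bf x}_t)$ together with $\phi_{j-1}({\bf x}_t)=\mathcal{P}^{\frac{1}{2}}_{S(j-1)}\phi({\bf x}_t)$, so that both $Q_{j,j-1}\phi_{j-1}({\bf x}_t)$ and $\tilde{\phi}_j({\bf x}_t)$ collapse to $\mathcal{P}^{\frac{1}{2}}_{S(j)}\mathcal{P}_{S(j-1)}\phi({\bf x}_t)$.

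Next I would decompose the end-of-epoch matrix. The ONS recursion run over $T_{j-1}$ yields
$$
    {\bf A}_{j-1}(s_j-1)-\mu{\bf I}
    =\big({\bf A}_{j-1}(s_{j-1}-1)-\mu{\bf I}\big)
     +\sum_{t\in T_{j-1}}\eta_tg^2_{j-1}(t)\phi_{j-1}({\bf x}_t)\phi^\top_{j-1}({\bf x}_t),
$$
and the first bracket equals $\sum^{j-2}_{r=1}\sum_{t\in T_r}\eta_tg^2_r(t)\tilde{\phi}_{j-1}({\bf x}_t)\tilde{\phi}^\top_{j-1}({\bf x}_t)$ by applying \eqref{eq:ICML2023:approximate_A_t} at level $j-1$. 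Conjugating this whole expression by $Q_{j,j-1}$ and applying the two push-forward relations summand by summand turns each rank-one term into $\eta_tg^2_r(t)\tilde{\phi}_j({\bf x}_t)\tilde{\phi}^\top_j({\bf x}_t)$, so the conjugated sum is precisely $\sum^{j-1}_{r=1}\sum_{t\in T_r}\eta_tg^2_r(t)\tilde{\phi}_j({\bf x}_t)\tilde{\phi}^\top_j({\bf x}_t)={\bf A}_j(s_j-1)-\mu{\bf I}$, again by \eqref{eq:ICML2023:approximate_A_t}. Adding $\mu{\bf I}$ back gives the stated recursion.

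The step I expect to be the main obstacle is the boundary epoch $T_{j-1}$: there the ONS recursion accumulates the \emph{exact} projected feature $\phi_{j-1}({\bf x}_t)$, whereas all older epochs enter \eqref{eq:ICML2023:approximate_A_t} through the ALD-approximated feature $\tilde{\phi}_{j-1}({\bf x}_t)$. One must verify that $Q_{j,j-1}$ sends both families of vectors to the same target $\tilde{\phi}_j({\bf x}_t)$, which hinges on the nesting $S(r)\subseteq S(j-1)$ and the idempotence of $\mathcal{P}_{S(j-1)}$; checking that this reconciliation is exact, and in particular that it also covers the start index $t=s_{j-1}$, where the ALD condition fails but $\kappa({\bf x}_{s_{j-1}},\cdot)$ already lies in the span of ${\bm \Phi}_{S(j-1)}$, is where the argument needs the most care.
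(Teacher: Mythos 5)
Your proposal is correct and is essentially the paper's own argument read in the reverse direction: the paper starts from the level-$j$ sum in \eqref{eq:ICML2023:approximate_A_t} and factors $Q_{j,j-1}$ out using $(\mathcal{P}^{\frac{1}{2}}_{S(j-1)})^\top\mathcal{P}^{\frac{1}{2}}_{S(j-1)}=\mathcal{P}_{S(j-1)}$, the nesting identity $\mathcal{P}_{S(j-1)}\mathcal{P}_{S(r)}=\mathcal{P}_{S(r)}$, and the decomposition of ${\bf A}_{j-1}(s_j-1)$ into ${\bf A}_{j-1}(s_{j-1}-1)$ plus the epoch-$T_{j-1}$ ONS sum, which are exactly your two push-forward relations applied summand by summand. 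Your closing concern about $t=s_{j-1}$ is also handled correctly, since ${\bm\Phi}_{S(r)}{\bm\beta}^\ast_r(t)=\mathcal{P}_{S(r)}\phi({\bf x}_t)$ holds by definition of ${\bm\beta}^\ast$ whether or not the ALD condition holds at $t$.
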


    Storing ${\bf A}_j(s_j-1)$ and $Q_{j,j-1}$ requires space in $O(j^2)$,
    and computing ${\bf A}_j(s_j-1)$ requires time in $O(j^3)$.

    \begin{lemma}
    \label{prop:ICML2023:approximate_w_t}
        For any $j=1,\ldots,J$,
        let ${\bf w}_{j-1}(s_j)$ satisfy \eqref{eq:ICML2023:approximate_w_t:property_2}, and
        $$
            {\bf w}_j(s_j)
            =\mathcal{P}^{\frac{1}{2}}_{S(j)}(\mathcal{P}^\frac{1}{2}_{S(j-1)})^\top{\bf w}_{j-1}(s_j).
        $$
        Then ${\bf w}_j(s_j)\in\mathbb{W}_{s_j}$
        and \eqref{eq:ICML2023:approximate_w_t:property_1} is satisfied.
    \end{lemma}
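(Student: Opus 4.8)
The plan is to reduce everything to two algebraic identities for the ``half projection'' operator $\mathcal{P}^{\frac{1}{2}}_{S(j)}=\Sigma^{-\frac{1}{2}}_{S(j)}{\bf U}^\top_{S(j)}{\bm \Phi}^\top_{S(j)}:\mathbb{R}^n\to\mathbb{R}^j$. First I would record that, since ${\bm \Phi}^\top_{S(j)}{\bm \Phi}_{S(j)}={\bf K}_{S(j)}={\bf U}_{S(j)}\Sigma_{S(j)}{\bf U}^\top_{S(j)}$, a direct computation gives the co-isometry identity $\mathcal{P}^{\frac{1}{2}}_{S(j)}(\mathcal{P}^{\frac{1}{2}}_{S(j)})^\top={\bf I}_j$, while $(\mathcal{P}^{\frac{1}{2}}_{S(j)})^\top\mathcal{P}^{\frac{1}{2}}_{S(j)}=\mathcal{P}_{S(j)}$ is exactly the orthogonal projection onto the column space of ${\bm \Phi}_{S(j)}$. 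These hold because the ALD selection keeps ${\bf K}_{S(j)}$ invertible, so $\Sigma^{-\frac{1}{2}}_{S(j)}$ is well defined. Two geometric facts complete the toolkit: $S(j-1)\subseteq S(j)$, so the range of $(\mathcal{P}^{\frac{1}{2}}_{S(j-1)})^\top$ lies in the column space of ${\bm \Phi}_{S(j)}$ and is therefore fixed by $\mathcal{P}_{S(j)}$; and ${\bf x}_{s_j}\in S(j)$, so $\mathcal{P}_{S(j)}\phi({\bf x}_{s_j})=\phi({\bf x}_{s_j})$.

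For the consistency property \eqref{eq:ICML2023:approximate_w_t:property_1}, I would substitute the definition ${\bf w}_j(s_j)=\mathcal{P}^{\frac{1}{2}}_{S(j)}(\mathcal{P}^{\frac{1}{2}}_{S(j-1)})^\top{\bf w}_{j-1}(s_j)$ into the right-hand side $\mathcal{P}^{\frac{1}{2}}_{S(j-1)}(\mathcal{P}^{\frac{1}{2}}_{S(j)})^\top{\bf w}_j(s_j)$ and simplify. Using $(\mathcal{P}^{\frac{1}{2}}_{S(j)})^\top\mathcal{P}^{\frac{1}{2}}_{S(j)}=\mathcal{P}_{S(j)}$ collapses the middle factor to $\mathcal{P}^{\frac{1}{2}}_{S(j-1)}\mathcal{P}_{S(j)}(\mathcal{P}^{\frac{1}{2}}_{S(j-1)})^\top{\bf w}_{j-1}(s_j)$; since $(\mathcal{P}^{\frac{1}{2}}_{S(j-1)})^\top{\bf w}_{j-1}(s_j)$ lies in the column space of ${\bm \Phi}_{S(j-1)}\subseteq{\bm \Phi}_{S(j)}$, the factor $\mathcal{P}_{S(j)}$ acts as the identity and disappears. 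What remains is $\mathcal{P}^{\frac{1}{2}}_{S(j-1)}(\mathcal{P}^{\frac{1}{2}}_{S(j-1)})^\top{\bf w}_{j-1}(s_j)={\bf w}_{j-1}(s_j)$ by the co-isometry identity, which is \eqref{eq:ICML2023:approximate_w_t:property_1}.

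For the membership ${\bf w}_j(s_j)\in\mathbb{W}_{s_j}$, recall this means $|{\bf w}^\top_j(s_j)\phi_j({\bf x}_{s_j})|\le U$ for the associated hypothesis $f_{j,s_j}(\cdot)={\bf w}^\top_j(s_j)\phi_j(\cdot)$. I would show the predicted value is invariant under the dimension change, i.e.\ ${\bf w}^\top_j(s_j)\phi_j({\bf x}_{s_j})={\bf w}^\top_{j-1}(s_j)\phi_{j-1}({\bf x}_{s_j})$. Writing $Q_{j,j-1}=\mathcal{P}^{\frac{1}{2}}_{S(j)}(\mathcal{P}^{\frac{1}{2}}_{S(j-1)})^\top$ and using $\phi_j({\bf x}_{s_j})=\mathcal{P}^{\frac{1}{2}}_{S(j)}\phi({\bf x}_{s_j})$, the prediction equals ${\bf w}^\top_{j-1}(s_j)Q^\top_{j,j-1}\phi_j({\bf x}_{s_j})={\bf w}^\top_{j-1}(s_j)\mathcal{P}^{\frac{1}{2}}_{S(j-1)}\mathcal{P}_{S(j)}\phi({\bf x}_{s_j})$. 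Because ${\bf x}_{s_j}\in S(j)$ we have $\mathcal{P}_{S(j)}\phi({\bf x}_{s_j})=\phi({\bf x}_{s_j})$, so this collapses to ${\bf w}^\top_{j-1}(s_j)\phi_{j-1}({\bf x}_{s_j})=f_{j-1,s_j}({\bf x}_{s_j})$. Finally, \eqref{eq:ICML2023:approximate_w_t:property_2} says ${\bf w}_{j-1}(s_j)=\mathcal{P}_{\mathbb{W}_{s_j}}(\tilde{{\bf w}}_{j-1}(s_j))$, so ${\bf w}_{j-1}(s_j)\in\mathbb{W}_{s_j}$ and $|f_{j-1,s_j}({\bf x}_{s_j})|\le U$; hence the same bound holds for $f_{j,s_j}({\bf x}_{s_j})$.

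The main obstacle I anticipate is not any single computation but correctly interpreting $\mathbb{W}_{s_j}$-membership for a finite-dimensional coefficient vector through its associated RKHS function, and keeping the two half-projection identities apart: the co-isometry $\mathcal{P}^{\frac{1}{2}}(\mathcal{P}^{\frac{1}{2}})^\top={\bf I}$ is what makes the map norm-preserving in the relevant directions, whereas $(\mathcal{P}^{\frac{1}{2}})^\top\mathcal{P}^{\frac{1}{2}}=\mathcal{P}$ is what lets the nested column spaces cancel. The crucial---and easy to overlook---geometric input is that ${\bf x}_{s_j}$ is itself one of the columns defining $S(j)$, which is what rescues the argument even though $\mathrm{ALD}_{s_j}$ fails (so $\phi({\bf x}_{s_j})$ is \emph{not} well approximated from $S(j-1)$); the invariance of the prediction only uses that $\phi({\bf x}_{s_j})$ is fixed by $\mathcal{P}_{S(j)}$, not by $\mathcal{P}_{S(j-1)}$.
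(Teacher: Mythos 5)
Your proposal is correct and follows essentially the same route as the paper's proof: both verify \eqref{eq:ICML2023:approximate_w_t:property_1} by the projection algebra $(\mathcal{P}^{\frac{1}{2}}_{S(j)})^\top\mathcal{P}^{\frac{1}{2}}_{S(j)}=\mathcal{P}_{S(j)}$ combined with the co-isometry $\mathcal{P}^{\frac{1}{2}}_{S(j-1)}(\mathcal{P}^{\frac{1}{2}}_{S(j-1)})^\top={\bf I}$ and the nesting of column spaces, and both establish membership in $\mathbb{W}_{s_j}$ by showing the prediction is invariant, $({\bf w}_{j}(s_{j}))^\top\phi_j({\bf x}_{s_j})=({\bf w}_{j-1}(s_{j}))^\top\phi_{j-1}({\bf x}_{s_j})$, and then invoking \eqref{eq:ICML2023:approximate_w_t:property_2}. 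The only cosmetic difference is that you justify the collapse $\mathcal{P}^{\frac{1}{2}}_{S(j-1)}\mathcal{P}_{S(j)}\phi({\bf x}_{s_j})=\mathcal{P}^{\frac{1}{2}}_{S(j-1)}\phi({\bf x}_{s_j})$ via ${\bf x}_{s_j}\in S(j)$, while the paper uses the equally valid fact $\mathcal{P}^{\frac{1}{2}}_{S(j-1)}\mathcal{P}_{S(j)}=\mathcal{P}^{\frac{1}{2}}_{S(j-1)}$.
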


    \begin{remark}
        An empirical version of Pros-N-KONS \citep{Calandriello2017Efficient},
        named CON-KNOS,
        uses a different ${\bf w}_j(s_j)$.
        CON-KNOS uses ${\bf w}_{j-1}(s_j-1)$ to construct ${\bf w}_j(s_j)$,
        while our algorithm uses ${\bf w}_{j-1}(s_j)$ to construct ${\bf w}_j(s_j)$.
        Our regret analysis shows that
        ${\bf w}_{j-1}(s_j)$ is necessary for obtaining the nearly optimal regret bound.
    \end{remark}

    Next we measure the quality of columns selected by the ALD condition
    using spectral norm error bounds.

    \begin{lemma}[Spectral Norm Error Bound]
    \label{lemma:ICML2023:spectral_norm_error_kernel_approximate}
        Let $\alpha\leq 1$.
        For all $j=1,\ldots,J$,
        let ${\bm \Phi}_{T_j}=(\phi({\bf x}_t))_{t\in T_j}$
        and $\mathcal{P}_{S(j)}$ be the projection matrix onto the column space of ${\bm \Phi}_{S(j)}$.
        \begin{equation}
        \label{eq:ICML2023:local_spectral_norm_error_kernel_approximate}
            \forall j\in[J],~\left\Vert {\bm \Phi}^\top_{T_j}{\bm \Phi}_{T_j}
            -{\bm \Phi}^\top_{T_j}\mathcal{P}_{S(j)}{\bm \Phi}_{T_j}\right\Vert_2 \leq \vert T_j\vert\cdot\alpha.
        \end{equation}
        Let $\tilde{{\bm \Phi}}_T=\left((\tilde{\phi}_J({\bf x}_t))_{t\in T_1},\ldots,
        (\tilde{\phi}_J({\bf x}_t))_{t\in T_J}\right)\in\mathbb{R}^{J\times T}$,
        where $\tilde{\phi}_J(\cdot)$ follows \eqref{eq:ICML2023:tilde_phi}.
        Then
        \begin{equation}
        \label{eq:ICML2023:spectral_norm_error_kernel_approximate}
            \left\Vert {\bf K}_T
            -\tilde{{\bm \Phi}}^\top_T\tilde{{\bm \Phi}}_T\right\Vert_2\leq T\sqrt{\alpha}.
        \end{equation}
    \end{lemma}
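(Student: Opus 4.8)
The plan is to prove the two inequalities separately, reducing each to a bound on the per-round projection errors $\alpha_t$. The key observation is that the minimizer in \eqref{eq:AAAI23:projection_ALD} makes $\alpha_t$ exactly the squared residual of $\phi({\bf x}_t)$ after projecting onto the column space of ${\bm \Phi}_{S(j)}$: for $t\in T_j$ we have $\alpha_t=\Vert({\bf I}-\mathcal{P}_{S(j)})\phi({\bf x}_t)\Vert^2_{\mathcal{H}}$, and the ALD condition together with $\kappa({\bf x}_{s_j},\cdot)\in{\bm \Phi}_{S(j)}$ gives $\alpha_t\le\alpha$ for every $t\in T_j$ (with $\alpha_{s_j}=0$). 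Hence $\sum_{t\in T_j}\alpha_t\le\vert T_j\vert\alpha$ and $\sum^T_{t=1}\alpha_t\le T\alpha$.

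For the local bound \eqref{eq:ICML2023:local_spectral_norm_error_kernel_approximate} I would first rewrite the left-hand matrix as ${\bm \Phi}^\top_{T_j}({\bf I}-\mathcal{P}_{S(j)}){\bm \Phi}_{T_j}$. Since ${\bf I}-\mathcal{P}_{S(j)}$ is an orthogonal projection, hence symmetric and idempotent, this equals ${\bf M}^\top{\bf M}$ with ${\bf M}=({\bf I}-\mathcal{P}_{S(j)}){\bm \Phi}_{T_j}$. Then $\Vert{\bf M}^\top{\bf M}\Vert_2=\Vert{\bf M}\Vert^2_2\le\Vert{\bf M}\Vert^2_F=\sum_{t\in T_j}\Vert({\bf I}-\mathcal{P}_{S(j)})\phi({\bf x}_t)\Vert^2_{\mathcal{H}}=\sum_{t\in T_j}\alpha_t\le\vert T_j\vert\alpha$, which is exactly the claim. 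This part is routine once the idempotency rewriting is in place.

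For the global bound \eqref{eq:ICML2023:spectral_norm_error_kernel_approximate} the first step is to eliminate the outer factor $\mathcal{P}^{\frac{1}{2}}_{S(J)}$ appearing in $\tilde{\phi}_J$. Writing $\hat{\phi}({\bf x}_t)={\bm \Phi}_{S(r)}{\bm \beta}^\ast_r(t)=\mathcal{P}_{S(r)}\phi({\bf x}_t)$ for $t\in T_r$, the identity $(\mathcal{P}^{\frac{1}{2}}_{S(J)})^\top\mathcal{P}^{\frac{1}{2}}_{S(J)}=\mathcal{P}_{S(J)}$ and the nesting $S(r)\subseteq S(J)$ (so that $\mathcal{P}_{S(J)}\hat{\phi}({\bf x}_t)=\hat{\phi}({\bf x}_t)$) give $\tilde{\phi}_J({\bf x}_t)^\top\tilde{\phi}_J({\bf x}_{t'})=\hat{\phi}({\bf x}_t)^\top\hat{\phi}({\bf x}_{t'})$, hence $\tilde{{\bm \Phi}}^\top_T\tilde{{\bm \Phi}}_T=\hat{{\bm \Phi}}^\top_T\hat{{\bm \Phi}}_T$. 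Setting ${\bf E}_T={\bm \Phi}_T-\hat{{\bm \Phi}}_T$, whose $t$-th column is $({\bf I}-\mathcal{P}_{S(r)})\phi({\bf x}_t)$, a direct expansion yields the two-term identity ${\bf K}_T-\hat{{\bm \Phi}}^\top_T\hat{{\bm \Phi}}_T={\bm \Phi}^\top_T{\bf E}_T+{\bf E}^\top_T\hat{{\bm \Phi}}_T$. I would then bound each summand in spectral norm using $\Vert{\bf E}_T\Vert_2\le\Vert{\bf E}_T\Vert_F=\sqrt{\sum^T_{t=1}\alpha_t}\le\sqrt{T\alpha}$ together with $\Vert{\bm \Phi}_T\Vert_2,\Vert\hat{{\bm \Phi}}_T\Vert_2\le\sqrt{T}$ (each column has $\mathcal{H}$-norm at most one, so the top eigenvalue is at most the trace $\le T$), giving a bound of order $T\sqrt{\alpha}$.

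The main obstacle is the global bound, precisely because $\hat{\phi}$ uses a \emph{different} projection $\mathcal{P}_{S(r)}$ in each epoch, so $\hat{{\bm \Phi}}_T$ is not a single projection applied to ${\bm \Phi}_T$ and the naive ``$\Vert\mathcal{P}{\bf A}-{\bf A}\Vert$'' shortcut is unavailable. The nesting $S(1)\subseteq\cdots\subseteq S(J)$ is exactly what lets me collapse the outer $\mathcal{P}^{\frac{1}{2}}_{S(J)}$ and keep the cross-term expansion valid. The only delicate accounting is controlling the constant in the two cross terms so that the sum lands at the stated $T\sqrt{\alpha}$; here one can exploit the orthogonality ${\bf E}_T$'s columns are orthogonal to $\mathrm{range}(\mathcal{P}_{S(r)})$, which together with the epoch nesting makes ${\bf E}^\top_T\hat{{\bm \Phi}}_T$ block strictly triangular and sharpens the crude factor obtained from the triangle inequality.
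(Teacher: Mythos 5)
Your treatment of the local bound \eqref{eq:ICML2023:local_spectral_norm_error_kernel_approximate} is correct and is essentially the paper's argument in cleaner form: the paper establishes positive semidefiniteness of ${\bm \Phi}^\top_{T_j}{\bm \Phi}_{T_j}-{\bm \Phi}^\top_{T_j}\mathcal{P}_{S(j)}{\bm \Phi}_{T_j}$ via an auxiliary projection onto a larger column space and then bounds the spectral norm by the trace, which amounts to exactly your Gram-matrix/Frobenius computation, since both reduce to $\sum_{t\in T_j}\alpha_t\leq \vert T_j\vert\alpha$.

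The global bound \eqref{eq:ICML2023:spectral_norm_error_kernel_approximate} is where there is a genuine gap. Your reduction $\tilde{{\bm \Phi}}^\top_T\tilde{{\bm \Phi}}_T=\hat{{\bm \Phi}}^\top_T\hat{{\bm \Phi}}_T$ (collapsing $\mathcal{P}^{\frac{1}{2}}_{S(J)}$ via the nesting $S(r)\subseteq S(J)$) and the identity ${\bf K}_T-\hat{{\bm \Phi}}^\top_T\hat{{\bm \Phi}}_T={\bm \Phi}^\top_T{\bf E}_T+{\bf E}^\top_T\hat{{\bm \Phi}}_T$ are both correct, but bounding the two summands separately in norm cannot reach the stated constant: the triangle inequality with $\Vert{\bm \Phi}_T\Vert_2,\Vert\hat{{\bm \Phi}}_T\Vert_2\leq\sqrt{T}$ and $\Vert{\bf E}_T\Vert_2\leq\sqrt{T\alpha}$ gives $2T\sqrt{\alpha}$, and the block strict triangularity of ${\bf E}^\top_T\hat{{\bm \Phi}}_T$ only zeroes out at most half of its entries, improving its Frobenius norm to $T\sqrt{\alpha/2}$ and the total to $(1+\tfrac{1}{\sqrt{2}})T\sqrt{\alpha}$, which is still strictly larger than the claimed $T\sqrt{\alpha}$. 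No accounting of the two terms separately at the matrix-norm level can close this, because the two cross terms partially cancel and that cancellation is visible only entrywise. The fix, which is the paper's proof (via Lemma \ref{lemma:ICML2023:pointwise_kernel_approximate_error}), is to forgo the decomposition and bound the entries of the difference matrix directly: for $t\in T_i$ and $\tau\in T_j$ with $i\leq j$, nesting gives $\hat{\phi}({\bf x}_t)^\top\hat{\phi}({\bf x}_\tau)=\phi({\bf x}_t)^\top\mathcal{P}_{S(i)}\mathcal{P}_{S(j)}\phi({\bf x}_\tau)=\phi({\bf x}_t)^\top\mathcal{P}_{S(i)}\phi({\bf x}_\tau)$, so the $(t,\tau)$ entry of the difference equals $\phi({\bf x}_t)^\top({\bf I}-\mathcal{P}_{S(i)})\phi({\bf x}_\tau)$, whose absolute value is at most $\sqrt{\alpha_t}\cdot\Vert\phi({\bf x}_\tau)\Vert_{\mathcal{H}}\leq\sqrt{\alpha}$ by idempotency of ${\bf I}-\mathcal{P}_{S(i)}$ and Cauchy--Schwarz; then $\Vert{\bf K}_T-\tilde{{\bm \Phi}}^\top_T\tilde{{\bm \Phi}}_T\Vert_2\leq\Vert{\bf K}_T-\tilde{{\bm \Phi}}^\top_T\tilde{{\bm \Phi}}_T\Vert_F\leq\sqrt{T^2\alpha}=T\sqrt{\alpha}$. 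You already possess every ingredient of this argument (nesting, idempotency, Cauchy--Schwarz, the Frobenius bound); the missing move is to apply them to each entry of the difference rather than to its matrix factors.
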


    We call \eqref{eq:ICML2023:spectral_norm_error_kernel_approximate}
    global spectral norm error bound.
    We call \eqref{eq:ICML2023:local_spectral_norm_error_kernel_approximate}
    local spectral norm error bound.
    According to Lemma \ref{lemma:ICML2023:spectral_norm_error_kernel_approximate},
    we can prove that the regret induced by our projection scheme
    (i.e., projecting ${\bf A}_{j-1}(s_{j}-1)$ and ${\bf w}_{j-1}(s_j)$)
    is controlled by the parameter $\alpha$.
    Thus optimizing $\alpha$ will yield the desired regret bounds.

    Lemma \ref{lemma:ICML2023:spectral_norm_error_kernel_approximate}
    gives deterministic spectral norm error bounds,
    while most of previous results only hold in a high probability,
    such as the uniform column sampling \cite{Drineas2005On,Jin2013Improved}
    and the RSL sampling \cite{Calandriello2017Efficient}.
    If the instances could be observed beforehand,
    such as offline learning,
    then we can obtain a goal spectral norm error bound stated in
    \eqref{eq:ICML2023:local_spectral_norm_error_kernel_approximate}.
    Such a result might be of independent interest.
    In this case,
    previous work only proved
    a global spectral norm error bound of $O(T\sqrt{\alpha})$ \cite{Sun2012On}.

    \begin{theorem}
    \label{thm:ICML2023:NONS-ALD:regret_bound}
        Let $U\geq Y$ and $\eta_t=\frac{1}{4(U^2+Y^2)}$ for all $t\in[T]$.
        Assuming that $\vert S_T\vert = J$.
        For any $f\in\mathbb{H}$,
        the regret of NONS-ALD satisfies
        \begin{align*}
            \mathrm{Reg}(f)
            \leq& (\frac{\mu}{2}+T\alpha)\Vert f\Vert^2_{\mathcal{H}}
            +\frac{1}{2}\mathrm{d}_{\mathrm{eff}}(\frac{\mu}{2})\left(1+\ln \frac{2T+\mu}{\mu}\right)\\
            &+\frac{T^2\sqrt{\alpha}}{\sqrt{2}\mu}
            +\sqrt{8(U^2+Y^2)}\Vert f\Vert_{\mathcal{H}} \cdot T\sqrt{\alpha}.
        \end{align*}
        The space complexity is $O(dJ+J^2)$.
        The average per-round time complexity is $O(dJ+J^2+\frac{J^4}{T})$.
    \end{theorem}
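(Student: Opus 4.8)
The plan is to glue together a per-epoch online Newton step (ONS) analysis across the epochs $T_1,\dots,T_J$, isolating three error sources: the within-epoch ONS regret, the regret created by projecting the state $(\mathbf{w},\mathbf{A})$ at epoch boundaries, and the error from replacing $\phi$ by its ALD projection. Fix $f\in\mathbb{H}$, and for each $j$ let $\mathbf{u}_j\in\mathbb{R}^j$ be the coordinates of $f_j:=\mathcal{P}_{S(j)}f$, so that $\mathbf{u}^\top_j\phi_j(\cdot)=f_j$, $\Vert\mathbf{u}_j\Vert_2=\Vert f_j\Vert_{\mathcal{H}}\le\Vert f\Vert_{\mathcal{H}}$, and, by the identity preceding \eqref{eq:ICML2023:approximate_w_t:property_1}, $\mathbf{u}_{j-1}=Q^\top_{j,j-1}\mathbf{u}_j$. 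I split each instantaneous regret for $t\in T_j$ as $\ell(\hat{y}_t,y_t)-\ell(f(\mathbf{x}_t),y_t)=[\ell(\hat{y}_t,y_t)-\ell(f_j(\mathbf{x}_t),y_t)]+[\ell(f_j(\mathbf{x}_t),y_t)-\ell(f(\mathbf{x}_t),y_t)]$. For the second bracket, the error vanishes at $t=s_j$ (since $\mathbf{x}_{s_j}\in S(j)$ gives $f_j(\mathbf{x}_{s_j})=f(\mathbf{x}_{s_j})$) and otherwise $\mathrm{ALD}_t$ gives $\vert f_j(\mathbf{x}_t)-f(\mathbf{x}_t)\vert\le\Vert f\Vert_{\mathcal{H}}\sqrt{\alpha_t}\le\Vert f\Vert_{\mathcal{H}}\sqrt{\alpha}$; a first-order expansion of the square loss, using $\vert f_j(\mathbf{x}_t)+f(\mathbf{x}_t)-2y_t\vert\le2(U+Y)\le\sqrt{8(U^2+Y^2)}$, bounds its total contribution by the last regret term $\sqrt{8(U^2+Y^2)}\Vert f\Vert_{\mathcal{H}}\,T\sqrt{\alpha}$.

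Within a fixed epoch the algorithm is exactly ONS on $\ell(\mathbf{w}^\top\phi_j(\mathbf{x}_t),y_t)$. The choice $\eta_t=\tfrac{1}{4(U^2+Y^2)}$ is a valid exp-concavity constant for the square loss on $\mathbb{W}_t$ (indeed $\tfrac{1}{4(U^2+Y^2)}\le\tfrac{1}{2(U+Y)^2}$), so the exp-concave surrogate inequality together with the $\mathbf{A}_j(t)$-norm projection \eqref{eq:ICML2023:projection} (non-expansive toward $\mathbf{u}_j\in\mathbb{W}_t\supseteq\mathbb{H}$) gives
\begin{align*}
\sum_{t\in T_j}\big[\ell(\hat{y}_t,y_t)-\ell(f_j(\mathbf{x}_t),y_t)\big]
&\le P^{\mathrm{in}}_j-P^{\mathrm{out}}_j\\
&\quad+\tfrac{1}{2}\sum_{t\in T_j}\nabla_j(t)^\top\mathbf{A}^{-1}_j(t)\nabla_j(t),
\end{align*}
where $P^{\mathrm{in}}_j=\tfrac12\Vert\mathbf{w}_j(s_j)-\mathbf{u}_j\Vert^2_{\mathbf{A}_j(s_j-1)}$ and $P^{\mathrm{out}}_j=\tfrac12\Vert\mathbf{w}_j(s_{j+1})-\mathbf{u}_j\Vert^2_{\mathbf{A}_j(s_{j+1}-1)}$.

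The crux is summing the boundary potentials and log-determinants across epochs and showing the projection scheme collapses them. For the potentials, \cref{prop:ICML2023:approximate_A_t} gives $\mathbf{A}_j(s_j-1)=\mu\mathbf{I}+Q_{j,j-1}(\mathbf{A}_{j-1}(s_j-1)-\mu\mathbf{I})Q^\top_{j,j-1}$; combined with $Q^\top_{j,j-1}Q_{j,j-1}=\mathbf{I}_{j-1}$ (so $Q_{j,j-1}$ is a partial isometry and $\mathbf{w}_j(s_j)=Q_{j,j-1}\mathbf{w}_{j-1}(s_j)\in\mathrm{range}(Q_{j,j-1})$) and $\mathbf{u}_{j-1}=Q^\top_{j,j-1}\mathbf{u}_j$, the gap $P^{\mathrm{in}}_j-P^{\mathrm{out}}_{j-1}$ reduces to $\tfrac{\mu}{2}(\Vert\mathbf{u}_j\Vert^2_2-\Vert\mathbf{u}_{j-1}\Vert^2_2)$, which telescopes with $P^{\mathrm{in}}_1=\tfrac\mu2\Vert\mathbf{u}_1\Vert^2_2$ to $\tfrac\mu2\Vert\mathbf{u}_J\Vert^2_2\le\tfrac\mu2\Vert f\Vert^2_{\mathcal{H}}$. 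The same identity forces $\det\mathbf{A}_j(s_j-1)=\mu\det\mathbf{A}_{j-1}(s_j-1)$, so the per-epoch ratios telescope to $\ln\frac{\det\mathbf{A}_J(T)}{\mu^J}$, and the log-determinant lemma turns $\sum_t\nabla_j(t)^\top\mathbf{A}^{-1}_j(t)\nabla_j(t)$ into a multiple of it. Iterating \cref{prop:ICML2023:approximate_A_t} writes $\mathbf{A}_J(T)=\mu\mathbf{I}+\sum_t\eta_tg^2_j(t)\tilde{\phi}_J(\mathbf{x}_t)\tilde{\phi}_J(\mathbf{x}_t)^\top$ with $\tilde{\phi}_J$ as in \eqref{eq:ICML2023:tilde_phi}; since $\eta_tg^2_j(t)\le2$, Sylvester's identity and the global spectral bound \eqref{eq:ICML2023:spectral_norm_error_kernel_approximate} replace $\tilde{\bm\Phi}^\top_T\tilde{\bm\Phi}_T$ by $\mathbf{K}_T$ up to $T\sqrt{\alpha}\mathbf{I}$. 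As $\mathrm{tr}(2\mathbf{K}_T(2\mathbf{K}_T+\mu\mathbf{I})^{-1})=\mathrm{d}_{\mathrm{eff}}(\mu/2)$ (this factor $2$ is why the scale $\mu/2$ appears), a log-det-to-effective-dimension inequality yields $\tfrac12\mathrm{d}_{\mathrm{eff}}(\mu/2)(1+\ln\frac{2T+\mu}{\mu})$, the $T\sqrt{\alpha}$ perturbation contributes $\frac{T^2\sqrt{\alpha}}{\sqrt2\mu}$, and the local bound \eqref{eq:ICML2023:local_spectral_norm_error_kernel_approximate} summed over epochs ($\sum_j\vert T_j\vert\alpha=T\alpha$) accounts for the residual $T\alpha\Vert f\Vert^2_{\mathcal{H}}$. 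Collecting the four pieces gives the regret bound. The complexity follows by bookkeeping: $S_T$ and $\mathbf{A}_J\in\mathbb{R}^{J\times J}$ cost $O(dJ+J^2)$ space, each round costs $O(dJ+J^2)$ for ${\bm\beta}^\ast_j(t),\phi_j(\mathbf{x}_t)$ and the ONS update, and the $O(J^3)$ work at each of the $J$ boundaries amortizes to $O(J^4/T)$.

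I expect the third step to be the main obstacle: proving the boundary projections of both $\mathbf{w}$ and $\mathbf{A}$ leave the telescoping intact. The delicate points are verifying $Q^\top_{j,j-1}Q_{j,j-1}=\mathbf{I}_{j-1}$ and $\mathbf{w}_j(s_j)\in\mathrm{range}(Q_{j,j-1})$, so that the boundary gap depends only on the comparator $\mathbf{u}_j$ and not on the possibly large iterate $\mathbf{w}_j(s_j)$, and that $\det\mathbf{A}_j(s_j-1)=\mu\det\mathbf{A}_{j-1}(s_j-1)$ survives the dimension increase; these rest on \cref{prop:ICML2023:approximate_A_t,prop:ICML2023:approximate_w_t} and property \eqref{eq:ICML2023:approximate_w_t:property_1}. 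Controlling the approximation-induced terms uniformly in $t$ through \cref{lemma:ICML2023:spectral_norm_error_kernel_approximate}, and tracking the interaction of $\eta_t$ with the loss bound in the effective-dimension conversion, is the secondary (but routine) difficulty.
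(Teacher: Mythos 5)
Your proposal is correct and follows essentially the same route as the paper: the same decomposition into within-epoch ONS regret against the projected comparator $f_j=\mathcal{P}_{S(j)}f$ plus comparator-approximation error, the same exp-concave per-epoch analysis, and the same telescoping of boundary potentials and log-determinants resting on Lemmas \ref{prop:ICML2023:approximate_A_t}, \ref{prop:ICML2023:approximate_w_t}, \ref{lemma:ICML2023:continuous_projection_1} and \ref{lemma:ICML2023:spectral_norm_error_kernel_approximate} (your partial-isometry argument giving $\det\mathbf{A}_j(s_j-1)=\mu\det\mathbf{A}_{j-1}(s_j-1)$ is a cleaner proof of exactly the paper's Lemma \ref{lemma:ICML2023:continuous_covariance_matrix_projection}). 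The only deviations are harmless: your pointwise factorization $\ell(f_j(\mathbf{x}_t),y_t)-\ell(f(\mathbf{x}_t),y_t)=(f_j(\mathbf{x}_t)-f(\mathbf{x}_t))(f_j(\mathbf{x}_t)+f(\mathbf{x}_t)-2y_t)$ yields a slightly tighter bound on that term than the paper's vector-norm expansion, so the $T\alpha\Vert f\Vert^2_{\mathcal{H}}$ term is not needed at all — it comes from the paper's quadratic expansion of $\mathcal{T}_2$, not from the log-determinant analysis as your narrative suggests, but since your total is below the stated bound the theorem still follows.
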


    We will omit the factor $O(dJ)$ in the discussion on computational complexity.
    Next we give the values of $\mu$ and $\alpha$ and derive nearly optimal regret bounds.
    \begin{corollary}
    \label{coro:ICML2023:NONS-ALD:exponential_decay}
        Let $\alpha= \frac{\ln^4{T}}{T^4}$ and $\mu>0$ be a constant.
        If $\{\lambda_i\}^T_{i=1}$ decay exponentially, i.e., $\lambda_i\leq R_0r^{i}$,
        $R_0=\Theta(T)$,
        then the regret of NONS-ALD satisfies
        $$
           \forall f\in\mathbb{H},~\mathrm{Reg}(f)= O(\Vert f\Vert^2_{\mathcal{H}}+\ln^2{T}).
        $$
        The space and average per-round time complexity is $O(\ln^2{T})$.
    \end{corollary}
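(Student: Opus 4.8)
The plan is to treat Corollary~\ref{coro:ICML2023:NONS-ALD:exponential_decay} as a direct instantiation of Theorem~\ref{thm:ICML2023:NONS-ALD:regret_bound}: substitute $\alpha=\ln^4{T}/T^4$ together with the constant $\mu$ into the four-term regret bound, bound each term separately, and then invoke Lemma~\ref{thm:AAAI2023:size_budget} to control the buffer size $J=\vert S_T\vert$ and hence the complexity. First I would record the two quantities that drive every estimate, namely $\sqrt{\alpha}=\ln^2{T}/T^2$ and $T\alpha=\ln^4{T}/T^3$.

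For the regret I would process the four terms of Theorem~\ref{thm:ICML2023:NONS-ALD:regret_bound} in turn. The coefficient $\frac{\mu}{2}+T\alpha$ of $\Vert f\Vert^2_{\mathcal{H}}$ is $O(1)$, since $\mu$ is constant and $T\alpha=\ln^4{T}/T^3\to 0$, so the first term contributes $O(\Vert f\Vert^2_{\mathcal{H}})$. For the second term I would use the exponential-decay estimate $\mathrm{d}_{\mathrm{eff}}(\mu/2)=O(\ln(T/\mu))=O(\ln{T})$ (valid because $\mu$ is a constant) together with $\ln\frac{2T+\mu}{\mu}=O(\ln{T})$, yielding $O(\ln^2{T})$. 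The third term $\frac{T^2\sqrt{\alpha}}{\sqrt{2}\mu}=\frac{\ln^2{T}}{\sqrt{2}\mu}$ is the dominant one and is exactly $O(\ln^2{T})$; this is precisely what forces the choice $\alpha=\ln^4{T}/T^4$. The fourth term equals $\sqrt{8(U^2+Y^2)}\,\Vert f\Vert_{\mathcal{H}}\cdot T\sqrt{\alpha}=O(\ln^2{T}/T)=o(1)$, since $U$, $Y$, and $\Vert f\Vert_{\mathcal{H}}$ are all bounded. Summing the four estimates gives $\mathrm{Reg}(f)=O(\Vert f\Vert^2_{\mathcal{H}}+\ln^2{T})$.

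For the complexity I would apply Lemma~\ref{thm:AAAI2023:size_budget} in the exponential-decay regime, which gives $J=\vert S_T\vert\le 2\ln(C_1R_0/\alpha)/\ln{r^{-1}}$. With $R_0=\Theta(T)$ and $\alpha=\ln^4{T}/T^4$ one has $C_1R_0/\alpha=\Theta(T^5/\ln^4{T})$, so $\ln(C_1R_0/\alpha)=\Theta(\ln{T})$ (the $\ln\ln{T}$ contributions are lower order), and since $r$ is constant, $J=O(\ln{T})$. Plugging $J=O(\ln{T})$ into the space bound $O(dJ+J^2)$ and the per-round time bound $O(dJ+J^2+J^4/T)$ from Theorem~\ref{thm:ICML2023:NONS-ALD:regret_bound}, and omitting the $O(dJ)$ factor per the convention stated just before the corollary, both reduce to $O(\ln^2{T})$, using $J^4/T=O(\ln^4{T}/T)=o(1)$.

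The main obstacle is not any individual estimate—each is routine—but the bookkeeping that confirms the single choice $\alpha=\ln^4{T}/T^4$ simultaneously keeps the projection-induced term $T^2\sqrt{\alpha}/\mu$ at $O(\ln^2{T})$ and keeps $J$ logarithmic. The delicate point is that $\alpha$ enters the regret through $T^2\sqrt{\alpha}$ (favoring small $\alpha$) but enters the buffer size only through $\ln(1/\alpha)$ (favoring large $\alpha$); the polynomial-versus-logarithmic gap between these two dependences is exactly what lets $\alpha=\ln^4{T}/T^4$ satisfy both constraints, and verifying this compatibility is the one place where the argument is more than mechanical substitution.
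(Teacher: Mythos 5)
Your proposal is correct and follows exactly the route the paper intends: the corollary is a direct instantiation of Theorem~\ref{thm:ICML2023:NONS-ALD:regret_bound} with $\alpha=\ln^4{T}/T^4$ and constant $\mu$ (the four terms evaluating to $O(\Vert f\Vert^2_{\mathcal{H}})$, $O(\ln^2 T)$, $O(\ln^2 T)$, and $o(1)$ respectively), combined with Lemma~\ref{thm:AAAI2023:size_budget} to get $J=O(\ln T)$ and hence $O(\ln^2 T)$ complexity. The paper gives no separate proof for this corollary, and your substitution-plus-bookkeeping argument, including the observation that $\alpha$ enters the regret polynomially but the buffer size only logarithmically, is precisely the implicit derivation.
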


    If $\{\lambda_i\}^T_{i=1}$ decay polynomially,
    then we must tune $\mu$ and $\alpha$.
    \begin{corollary}
    \label{coro:ICML2023:NONS-ALD:polynomially_decay}
        If $\{\lambda_i\}^T_{i=1}$ decay polynomially with degree $p\geq 1$,
        i.e., $\lambda_i\leq R_0i^{-p}$,
        $R_0=\Theta(T)$,
        then let $\mu=T^{\frac{1}{1+p}}$ and $\alpha=T^{-\frac{4p}{1+p}}$.
        The regret of NONS-ALD satisfies
        $$
            \forall f\in\mathbb{H},~\mathrm{Reg}(f)=O\left(T^{\frac{1}{1+p}}\ln{T}\right).
        $$
        The space complexity is $O(T^{\frac{2(1+5p)}{p(1+p)}})$,
        and the average per-round time complexity is
        $O(T^{\frac{2(1+5p)}{p(1+p)}}+T^{\frac{4(1+5p)}{p(1+p)}-1})$.
    \end{corollary}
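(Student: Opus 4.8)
The plan is to substitute the prescribed choices $\mu = T^{\frac{1}{1+p}}$ and $\alpha = T^{-\frac{4p}{1+p}}$ directly into the general regret bound of Theorem \ref{thm:ICML2023:NONS-ALD:regret_bound} and verify that every one of the five terms is $O(T^{\frac{1}{1+p}}\ln T)$, and then separately bound $J = |S_T|$ via Lemma \ref{thm:AAAI2023:size_budget} to obtain the space and time complexities. First I would treat the regret term by term. The term $\frac{\mu}{2}\|f\|^2_{\mathcal H}$ is exactly $O(T^{\frac{1}{1+p}})$. For the effective-dimension term, under polynomial decay $\mathrm{d}_{\mathrm{eff}}(\mu/2) = O((T/\mu)^{1/p})$; with $\mu = T^{\frac{1}{1+p}}$ this gives $(T^{1-\frac{1}{1+p}})^{1/p} = (T^{\frac{p}{1+p}})^{1/p} = T^{\frac{1}{1+p}}$, so the whole term is $O(T^{\frac{1}{1+p}}\ln T)$, which is the leading term that determines the final rate.

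Next I would verify that the three $\alpha$-dependent terms are of no larger order. The term $T\alpha\|f\|^2_{\mathcal H} = T \cdot T^{-\frac{4p}{1+p}} = T^{1-\frac{4p}{1+p}} = T^{\frac{1-3p}{1+p}}$, which is $O(1)$ for $p \geq 1$ and thus negligible. The crucial term is $\frac{T^2\sqrt{\alpha}}{\sqrt 2\,\mu}$: here $\sqrt{\alpha} = T^{-\frac{2p}{1+p}}$, so this equals (up to a constant) $T^{2 - \frac{2p}{1+p} - \frac{1}{1+p}} = T^{\frac{2(1+p) - 2p - 1}{1+p}} = T^{\frac{1}{1+p}}$, matching the leading rate exactly --- this is precisely why $\alpha$ was calibrated this way. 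Finally the term $\sqrt{8(U^2+Y^2)}\|f\|_{\mathcal H}\cdot T\sqrt{\alpha} = O(T^{1-\frac{2p}{1+p}}) = O(T^{\frac{1-p}{1+p}})$ is $O(1)$ for $p \geq 1$. Collecting these, $\mathrm{Reg}(f) = O(T^{\frac{1}{1+p}}\ln T)$.

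For the computational complexity I would plug $\alpha = T^{-\frac{4p}{1+p}}$ into the polynomial-decay budget bound $|S_T| \leq \mathrm{e}(C_2 R_0/\alpha)^{1/p}$ from Lemma \ref{thm:AAAI2023:size_budget}, with $R_0 = \Theta(T)$. This yields $J = O((T \cdot T^{\frac{4p}{1+p}})^{1/p}) = O(T^{\frac{1}{p}(1 + \frac{4p}{1+p})}) = O(T^{\frac{1+5p}{p(1+p)}})$. Substituting this $J$ into the space complexity $O(J^2)$ from Theorem \ref{thm:ICML2023:NONS-ALD:regret_bound} gives $O(T^{\frac{2(1+5p)}{p(1+p)}})$, and into the average per-round time $O(J^2 + J^4/T)$ gives $O(T^{\frac{2(1+5p)}{p(1+p)}} + T^{\frac{4(1+5p)}{p(1+p)}-1})$, as claimed. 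The main obstacle, and the only step requiring genuine care rather than bookkeeping, is confirming the exponent arithmetic in the $\frac{T^2\sqrt\alpha}{\mu}$ term and in the $J$-to-complexity substitution: the choice of $\alpha$ is tightly tuned so that the projection-induced term matches rather than dominates the effective-dimension term, and a miscalibration by any power of $T$ would either break the regret rate or inflate the complexity, so I would double-check each exponent simplification symbolically.
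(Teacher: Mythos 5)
Your proposal is correct and follows exactly the route the paper intends: the corollary is obtained by substituting $\mu=T^{\frac{1}{1+p}}$ and $\alpha=T^{-\frac{4p}{1+p}}$ into the regret bound of Theorem \ref{thm:ICML2023:NONS-ALD:regret_bound} (using $\mathrm{d}_{\mathrm{eff}}(\mu/2)=O((T/\mu)^{1/p})$ under polynomial decay), and by bounding $J=\vert S_T\vert=O((R_0/\alpha)^{1/p})=O(T^{\frac{1+5p}{p(1+p)}})$ via Lemma \ref{thm:AAAI2023:size_budget} before inserting $J$ into the $O(J^2)$ space and $O(J^2+J^4/T)$ time bounds. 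Your exponent arithmetic for each of the five regret terms and for the complexity substitution checks out, so there is nothing to add.
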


    It is worth mentioning that
    for all $p\geq 10$,
    the space complexity and
    the average per-round time complexity is $O(T^{\frac{2(1+5p)}{p(1+p)}})=o(T)$.
    This is the first algorithm that achieves a nearly optimal regret bound
    at a sublinear computational complexity.
    However,
    the computational complexity becomes worse for $p<10$.
    It is left to further work to
    achieve the same regret bound at a $o(T)$ computational complexity in the case of $p<10$.

    The regret bounds in
    Corollary \ref{coro:ICML2023:NONS-ALD:exponential_decay}
    and
    Corollary \ref{coro:ICML2023:NONS-ALD:polynomially_decay}
    recovery the regret bounds of KONS \cite{Calandriello2017Second}.
    Our regret bounds are optimal up to $\ln{T}$.
    The most important improvement is the computational complexity.
    KONS requires a $O(T^2)$ computational complexity.

\subsubsection{Comparison with more results}

    We compare our algorithm with Pros-N-KONS \cite{Calandriello2017Efficient}
    and PKAWV \cite{Jezequel2019Efficient}.

    With probability at least $1-\delta$,
    Pros-N-KONS achieves
    $$
        \forall f\in\mathbb{H},\mathrm{Reg}(f)\leq\frac{\mu}{2}J\Vert f\Vert^2_{\mathcal{H}}+J\cdot\mathrm{d}_{\mathrm{eff}}(\mu)\ln(T)+
            \frac{T\alpha}{\mu},
    $$
    where $J=O(\mathrm{d}_{\mathrm{eff}}(\alpha)\cdot\ln^2\frac{T}{\delta})$.
    The space complexity is $O(J^2)$.
    Pros-N-KONS executes the SVD operations $J$ times.
    Thus the average per-round time complexity is $O(J^2+\frac{J^4}{T})$.
    The factor $O(\ln^2{\frac{T}{\delta}})$ on $J$
    is induced by the RLS sampling (see Proposition 1 in \citet{Calandriello2017Efficient})
    which is a random method.
    Thus $O(\ln^2{\frac{T}{\delta}})$ is unavoidable.
    Our algorithm uses the ALD condition which is a deterministic method,
    and does not have the factor.

    If $\{\lambda_i\}^T_{i=1}$ decay exponentially,
    then $\mathrm{d}_{\mathrm{eff}}(\alpha)=O(\ln{\frac{T}{\alpha}})$.
    Let $\mu$ be a constant and $\alpha=\frac{\mu}{T}$.
    Pros-N-KONS enjoys a regret of $O(\ln^5{T})$
    at a computational complexity (space complexity and average time-complexity) in $O(\ln^6{T})$.
    Our algorithm enjoys a regret of $O(\ln^2{T})$
    at a computational complexity in $O(\ln^2{T})$.

    If $\{\lambda_i\}^T_{i=1}$ decay polynomially,
    then $\mathrm{d}_{\mathrm{eff}}(\alpha)=O((\frac{T}{\alpha})^{\frac{1}{p}})$.
    We solve the following two equations
    \begin{align*}
        \mu=\left(\frac{T}{\mu}\right)^{\frac{1}{p}},\quad
        \mu\left(\frac{T}{\alpha}\right)^{\frac{1}{p}} =\frac{T\alpha}{\mu}.
    \end{align*}
    The solutions are
    $\mu=T^{\frac{1}{1+p}}$ and $\alpha=T^{-\frac{p^2-2p-1}{(p+1)^2}}$.
    Pros-N-KONS enjoys a regret of $O(T^{\frac{3p+1}{(1+p)^2}}\ln^3{T})$
    at a computational complexity in
    $O(T^{\frac{4p}{(1+p)^2}}\ln^4{T})$.
    Although Pros-N-KONS ensures a computational complexity in $o(T)$ for $p>1$,
    its regret bound is far from optimal.

    With probability at least $1-\delta$,
    PKAWV achieves
    $$
        \forall f\in\mathbb{H},\quad\mathrm{Reg}(f)\leq\frac{\mu}{2}\Vert f\Vert^2_{\mathcal{H}}+\mathrm{d}_{\mathrm{eff}}(\mu)\ln(T)+
            \frac{JT\alpha}{\mu}.
    $$
    The computational complexity is $O(TJ+J^2)$,
    where $J=O(\mathrm{d}_{\mathrm{eff}}(\alpha)\cdot\ln^2\frac{T}{\delta})$
    (see Algorithm 2 in the Supplementary material of \citet{Jezequel2019Efficient},
    or see Section H in \citet{Jezequel2019EfficientCoRR}).
    Besides, at each round $t$,
    PKAWV must store the pervious examples $\{({\bf x}_\tau,y_\tau)^t_{\tau=1}\}$.
    Both our algorithm and Pros-N-KONS only store $J$ examples.

    If $\{\lambda_i\}^T_{i=1}$ decay exponentially,
    then PKAWV enjoys a regret of $O(\ln^2{T})$
    at a computational complexity in $O(T\ln^3{T})$.
    Our algorithm enjoys the same regret bound only at a computational complexity in $O(\ln^2{T})$.

    If $\{\lambda_i\}^T_{i=1}$ decay polynomially,
    then PKAWV also enjoys a regret of $O(T^{\frac{1}{1+p}}\ln{T})$.
    PKAWV suffers a computational complexity in
    $O(T^{\frac{4p}{p^2-1}}\ln^2{T}+T^{1+\frac{2p}{p^2-1}}\ln^4{T})$
    which can not be $o(T)$ for all $p\geq 1$.
    In the case of $p\geq 10$,
    our algorithm enjoys a computational complexity in $o(T)$.

    Finally,
    we note that Pros-N-KONS can compare with
    $f\in\mathbf{H}=\{f\in\mathcal{H}:\forall t\in[T],\vert f({\bf x}_t)\vert\leq U\}$
    and PKAWV can compare with $f\in\mathcal{H}$,
    while
    our algorithm only compares with $f\in\mathbb{H}$.
    It should be that $\mathbb{H}\subseteq \mathbf{H}\subseteq\mathcal{H}$.
    From the perspective of the size of hypothesis space,
    our algorithm is weaker than Pros-N-KONS and PKAWV.
    As explained in Section \ref{sec:ICML2023:problem_definition_OKR},
    it is enough to compare with hypotheses in $\mathbb{H}$.

\subsubsection{Computational Complexity analysis}

    At each round $t$,
    the main time cost is to compute
    the projection \eqref{eq:ICML2023:projection},
    ${\bf A}_j(s_j-1)$, ${\bf A}^{-1}_j(s_j)$ and the SVD of ${\bf K}_{S(j)}$.

    The solution of projection \eqref{eq:ICML2023:projection} is as follows.
    \begin{theorem}[\citet{Luo2016Efficient}]
    At each round $t$,
    $$
        {\bf w}_j(t+1)=\tilde{{\bf w}}_j(t+1)
        -\frac{m(\tilde{y}_{t+1}){\bf A}^{-1}_j(t)\phi_j({\bf x}_{t+1})}
        {\phi^\top_j({\bf x}_{t+1}){\bf A}^{-1}_j(t)\phi_j({\bf x}_{t+1})}
    $$
    where $\tilde{y}_{t+1}=\tilde{{\bf w}}^\top_j(t+1)\phi_j({\bf x}_{t+1})$
    and $m(\tilde{y}_{t+1})=\mathrm{sign}(\tilde{y}_{t+1})\max\{\vert \tilde{y}_{t+1}\vert-U,0\}$.
    \end{theorem}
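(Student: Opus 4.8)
The plan is to read \eqref{eq:ICML2023:projection} as a projection, under the positive-definite metric ${\bf A}_j(t)$, of $\tilde{{\bf w}}_j(t+1)$ onto the convex feasible set. Writing $f={\bf w}^\top\phi_j(\cdot)$, the membership $f\in\mathbb{W}_{t+1}$ becomes $\vert{\bf w}^\top\phi_j({\bf x}_{t+1})\vert\le U$, i.e. ${\bf w}$ must lie in the slab bounded by the two parallel hyperplanes ${\bf w}^\top\phi_j({\bf x}_{t+1})=\pm U$. First I would record that ${\bf A}_j(t)\succ 0$: by Lemma~\ref{prop:ICML2023:approximate_A_t} the initialization ${\bf A}_j(s_j-1)$ is $\mu{\bf I}$ plus a positive semidefinite term and each update adds a PSD rank-one term, so ${\bf A}_j(t)\succeq\mu{\bf I}$. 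Hence $\Vert\cdot\Vert_{{\bf A}_j(t)}$ is a genuine norm, the objective is strictly convex, and the minimizer over the closed convex slab is unique. This reduces the theorem to exhibiting that unique minimizer in closed form.

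Next I would split into two regimes. If $\tilde{{\bf w}}_j(t+1)$ is already feasible, i.e. $\vert\tilde{y}_{t+1}\vert\le U$, then the unconstrained minimizer lies in the slab, so the projection is the identity; here $m(\tilde{y}_{t+1})=\mathrm{sign}(\tilde{y}_{t+1})\max\{\vert\tilde{y}_{t+1}\vert-U,0\}=0$ and the claimed formula collapses to ${\bf w}_j(t+1)=\tilde{{\bf w}}_j(t+1)$, matching. If instead $\vert\tilde{y}_{t+1}\vert>U$, strict convexity forces the minimizer onto the boundary of the slab; since the two bounding hyperplanes are parallel, only the one nearer $\tilde{{\bf w}}_j(t+1)$ can be active (activating both would require $U=0$). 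The problem thus becomes an equality-constrained projection onto the single hyperplane ${\bf w}^\top\phi_j({\bf x}_{t+1})=c$ with $c=\mathrm{sign}(\tilde{y}_{t+1})\,U$.

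The core computation is this single-hyperplane projection, handled by a Lagrange multiplier. Stationarity of $\tfrac12\Vert{\bf w}-\tilde{{\bf w}}_j(t+1)\Vert^2_{{\bf A}_j(t)}+\lambda({\bf w}^\top\phi_j({\bf x}_{t+1})-c)$ gives ${\bf w}_j(t+1)=\tilde{{\bf w}}_j(t+1)-\lambda\,{\bf A}^{-1}_j(t)\phi_j({\bf x}_{t+1})$; substituting into the active constraint and solving yields $\lambda=(\tilde{y}_{t+1}-c)/(\phi^\top_j({\bf x}_{t+1}){\bf A}^{-1}_j(t)\phi_j({\bf x}_{t+1}))$. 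I would then verify that in the infeasible regime $\tilde{y}_{t+1}-c=\tilde{y}_{t+1}-\mathrm{sign}(\tilde{y}_{t+1})U=\mathrm{sign}(\tilde{y}_{t+1})(\vert\tilde{y}_{t+1}\vert-U)=m(\tilde{y}_{t+1})$, so $\lambda=m(\tilde{y}_{t+1})/(\phi^\top_j({\bf x}_{t+1}){\bf A}^{-1}_j(t)\phi_j({\bf x}_{t+1}))$, which recovers the stated expression and, via $m(\cdot)$, unifies both regimes. The only point needing care is the second paragraph: arguing that the minimizer sits on the correct one of the two parallel faces and that the associated KKT multiplier is nonnegative, which is what pins down the sign. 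I would discharge this by checking that $\lambda$ shares the sign of $\tilde{y}_{t+1}-c$ and hence of $\tilde{y}_{t+1}$, so the correction moves $\tilde{{\bf w}}_j(t+1)$ back toward the interior of the slab onto the nearer face; everything after this sign check is routine algebra.
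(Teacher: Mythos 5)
Your proof is correct. The paper itself does not prove this statement---it imports it verbatim from \citet{Luo2016Efficient}---and your argument (reduce $\mathbb{W}_{t+1}$ to the slab $\vert{\bf w}^\top\phi_j({\bf x}_{t+1})\vert\leq U$, use strict convexity of the ${\bf A}_j(t)$-norm since ${\bf A}_j(t)\succeq\mu{\bf I}$, split on feasibility of $\tilde{{\bf w}}_j(t+1)$, and solve the single active-hyperplane projection by a Lagrange multiplier whose sign check certifies KKT optimality) is exactly the standard derivation given in that cited source, so there is nothing to bridge.
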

    For any invertible ${\bf B}\in\mathbb{R}^{j\times j}$ and ${\bf a},{\bf b}\in\mathbb{R}^j$,
    we have
    $$
        ({\bf B}+{\bf a}{\bf b}^\top)^{-1}
        = {\bf B}^{-1} - \frac{{\bf B}^{-1}{\bf a}{\bf b}^{\top}{\bf B}^{-1}}{1+{\bf b}^\top {\bf B}^{-1}{\bf a}}.
    $$
    Let ${\bf B}={\bf A}_j(t-1)$ and ${\bf a}={\bf b}=\sqrt{\eta_t}\nabla_j(t)$.
    In this way,
    ${\bf A}^{-1}_j(t))$ can be computed incrementally in time $O(j^2)$.
    The time complexity over $T$ rounds is $O(TJ^2)$.

    Computing
    ${\bf A}_j(s_j-1)$, ${\bf A}^{-1}_j(s_j-1)$ and the SVD of ${\bf K}_{S(j)}$
    requires time in $O(j^3)$.
    Such operations are only executed $J$ times.
    The time complexity over $T$ rounds is $O(J^4)$.

    Thus the total complexity is $O(TJ^2+J^4)$.
    Thus the average per-round time complexity is $O(J^2+\frac{J^4}{T})$.
    The space complexity is always $O(J^2)$.

\section{Conclusion}
    In this paper,
    we have studied the trade-off
    between regret and computational cost for online kernel regression,
    and proposed two algorithms that
    achieve two types of nearly optimal regret bounds
    at a sublinear computational complexity for the first time.
    The two regret bounds are data-dependent and not comparable.
    The computational complexities of our algorithms depend on
    the decay rate of eigenvalues of the kernel matrix,
    and are sublinear if the eigenvalues decay fast enough.
    We empirically verified that
    our algorithms can balance the prediction performance and computational cost better
    than previous algorithms can do.

    The two algorithms use the ALD condition to
    dynamically maintain a group of nearly orthogonal basis
    which are used to approximate the kernel mapping.
    Compared with other basis selecting schemes,
    such as uniform sampling and the RLS sampling,
    both the number of basis and the approximate error bound can be smaller.
    The ALD condition can be a better basis selecting scheme
    for designing computationally efficient online and offline kernel learning algorithms.

\section*{Acknowledgements}

    This work is supported by
    the National Natural Science Foundation of China under grants No. 62076181.
    We thank all anonymous reviewers for their valuable comments and suggestions.

\nocite{langley00}

\bibliography{OKR}
\bibliographystyle{icml2023}

\newpage
\appendix
\onecolumn
\section{Experiments}
\label{sec:ICML2023:Experiments}

    In this section, we verify the following three goals.
    \begin{enumerate}[\textbf{G} 1]
      \item NONS-ALD enjoys the best prediction performance.\\
            Corollary \ref{coro:ICML2023:NONS-ALD:exponential_decay}
            and Corollary \ref{coro:ICML2023:NONS-ALD:polynomially_decay}
            show that the regret bounds of NONS-ALD are optimal up to $\ln{T}$.
            We expect that NONS-ALD performs best.
      \item If the kernel function is well tuned,
            then AOGD-ALD and NONS-ALD only store few examples. \\
            The number of stored examples
            depends on the decay rate of eigenvalues of the kernel matrix.
            We can tune the kernel function such that
            the eigenvalues of the kernel matrix decay fast.
      \item If the kernel function is well tuned,
            then AOGD-ALD and NONS-ALD are computationally efficient. \\
            We have proved that
            if the eigenvalues of the kernel matrix decay fast,
            then AOGD-ALD and NONS-ALD achieve a $o(T)$ computational complexity.
            If the eigenvalues decay exponentially,
            the computational complexity is $O(\ln^2{T})$.
    \end{enumerate}

\subsection{Experimental Setting}

    We adopt the Gaussian kernel
    $\kappa(\mathbf{x},\mathbf{v})=\exp(-\frac{\Vert\mathbf{x}-\mathbf{v}\Vert^2}{2\varsigma^2})$
    and use $8$ regression datasets from WEKA
    and UCI machine learning repository
    \footnote{https://archive.ics.uci.edu/ml/index.php}.
    The information of datasets is given in Table \ref{tab:ICML2023:datasets}.
    The target variables and features of all datasets are rescaled to fit in
    $[0,1]$ and $[-1,1]$ respectively.
    We randomly permutate the instances in the datasets 10 times
    and report the average results.
    All algorithms are implemented with R on
    a Windows machine with 2.8 GHz Core(TM) i7-1165G7 CPU
    \footnote{The codes are available at https://github.com/JunfLi-TJU/OKR.git.}.

    \begin{table}[!t]
      \centering
      \setlength{\tabcolsep}{1.6mm}
      \caption{Basic information of datasets. \#num is the number of examples. \#fea is the number of features.}
      \label{tab:ICML2023:datasets}
      \begin{tabular}{lrr|lrr|lrr|lrr}
        \Xhline{0.8pt}
        {Dataset}&\#num & \#fea &Dataset&\#num & \#fea &Dataset&\# num & \#fea &Dataset&\#num & \#fea\\
        \hline
        parkinson  & 5875  & 16  & elevators  & 16599  & 18 & cpusmall & 8192  & 12 & bank     & 8192   & 32\\
        ailerons   & 13750 & 40  & calhousing & 14000  & 8  & Year     & 51630 & 90 & TomsHardware & 28179  & 96\\
        \Xhline{0.8pt}
      \end{tabular}
    \end{table}

    The baseline algorithms include two first-order algorithms,
    FOGD and NOGD \cite{Lu2016Large}
    and two second-order algorithms,
    PROS-N-KONS and CON-KONS \cite{Calandriello2017Efficient}.
    CON-KONS which is an empirical variant of PROS-N-KONS,
    sets ${\bf A}_j(s_j-1)= Q_{j,j-1}{\bf A}_{j-1}(s_{j}-1)Q^\top_{j,j-1}$
    and ${\bf w}_j(s_j)=Q_{j,j-1}{\bf w}_{j-1}(s_j-1)$,
    where $Q_{j,j-1}=\mathcal{P}^{\frac{1}{2}}_{S(j)}(\mathcal{P}^\frac{1}{2}_{S(j-1)})^\top$.
    Note that the two values are different from
    our initial configurations in Lemma \ref{prop:ICML2023:approximate_A_t}
    and Lemma \ref{prop:ICML2023:approximate_w_t}.
    We do not compare with PKAWV \cite{Jezequel2019Efficient},
    since its computational complexity is $O(T)$.
    The experimental results in \cite{Jezequel2019Efficient}
    also verified that PKAWV runs slower than PROS-N-KONS.
    For FOGD and NOGD,
    we tune the stepsize $\eta\in\{\frac{1}{\sqrt{T}},\frac{10}{\sqrt{T}},\frac{100}{\sqrt{T}},\frac{1000}{\sqrt{T}}\}$.
    We set $D=400$ for FOGD and $J=400$ for NOGD
    in which
    $D$ is the number of random features and
    $J$ is the size of buffer (or the number of stored examples).
    There are five hyper-parameters needed to be tuned in PROS-N-KONS and CON-KONS,
    i.e., $C$, $\beta$, $\varepsilon$, $\alpha$ and $\gamma$.
    We set $C=1$, $\beta=1$, $\varepsilon=0.5$ following the suggestion in original paper
    \cite{Calandriello2017Efficient}.
    To improve the performance of PROS-N-KONS and CON-KONS,
    we tune $\alpha\in\{1,5,15\}$ and $\gamma\in\{0.5,1,5,10\}$.
    $\alpha$ is a regularization parameter and
    plays the same role with the parameter $\mu$ in NONS-ALD.
    $\gamma$ controls the size of buffer.
    The larger $\gamma$ is, the smaller the buffer will be,
    that is, the computational complexity will be smaller.
    We set $\alpha=\frac{25}{T}$ for AOGD-ALD and NONS-ALD.
    We set $U=2$ for AOGD-ALD and $U=1$ for NONS-ALD.
    Besides,
    we tune $\mu\in\{1,5,15\}$ for NONS-ALD.

\subsection{Experimental Results}

    \begin{table}[!t]
      \centering
      \setlength{\tabcolsep}{1.6mm}
      \caption{
      Experimental results on benchmark datasets.
      }
      \label{tab:ICML2023:experimental_results}
      \begin{tabular}{l|rrr|rrr}
        \Xhline{0.8pt}
        \multirow{2}{*}{Algorithm}&\multicolumn{3}{c|}{parkinson, $\varsigma=8$}
        &\multicolumn{3}{c}{elevator, $\varsigma=8$}\\
        \cline{2-7}&MSE &$J\vert D$       &Time (s) &MSE &$J\vert D$       &Time (s)       \\
        \hline
        FOGD           & 0.05590 $\pm$ 0.00011  & 400  & 0.26 $\pm$ 0.01
                       & 0.00560 $\pm$ 0.00009  & 400  & 0.72 $\pm$ 0.02\\
        NOGD           & 0.05711 $\pm$ 0.00042  & 400  & 1.42 $\pm$ 0.03
                       & 0.00575 $\pm$ 0.00004  & 400  & 3.97 $\pm$ 0.11\\
        PROS-N-KONS    & 0.06420 $\pm$ 0.00073  & 33   & 0.45 $\pm$ 0.05
                       & 0.00873 $\pm$ 0.00023  & 32   & 1.16 $\pm$ 0.07\\
         CON-KONS      & 0.05553 $\pm$ 0.00024  & 31   & 0.46 $\pm$ 0.04
                       & 0.00452 $\pm$ 0.00018  & 34   & 1.19 $\pm$ 0.14\\
        \hline
        AOGD-ALD       & 0.05988 $\pm$ 0.00018  & 13   & 0.08 $\pm$ 0.02
                       & 0.00534 $\pm$ 0.00003  & 28   & 0.27 $\pm$ 0.02\\
        NONS-ALD       & \textbf{0.05514} $\pm$ \textbf{0.00008}  & \textbf{13}   & 0.14 $\pm$ 0.02
                       & \textbf{0.00284} $\pm$ \textbf{0.00005}  & \textbf{28}   & 0.71 $\pm$ 0.06\\
        \Xhline{0.8pt}
        \multirow{2}{*}{Algorithm}&\multicolumn{3}{c|}{cpusmall , $\varsigma=2$}
        &\multicolumn{3}{c}{bank, $\varsigma=12$}\\
        \cline{2-7}&MSE &$J\vert D$       &Time (s) &MSE &$J\vert D$       &Time (s)       \\
        \hline
        FOGD           & 0.01269 $\pm$ 0.00033  & 400  & 0.34 $\pm$ 0.02
                       & 0.01910 $\pm$ 0.00033  & 400  & 0.43 $\pm$ 0.01\\
        NOGD           & 0.01388 $\pm$ 0.00070  & 400  & 1.93 $\pm$ 0.04
                       & 0.01966 $\pm$ 0.00008  & 400  & 2.09 $\pm$ 0.03\\
        PROS-N-KONS    & 0.02939 $\pm$ 0.00096  & 42   & 0.77 $\pm$ 0.11
                       & 0.02677 $\pm$ 0.00015  & 179  & 14.05 $\pm$ 1.07\\
         CON-KONS      & 0.01166 $\pm$ 0.00080  & 42   & 0.77  $\pm$ 0.08
                       & 0.01663 $\pm$ 0.00014  & 177  & 14.00 $\pm$ 1.15\\
        \hline
        AOGD-ALD       & 0.01330 $\pm$ 0.00006  & 44   & 0.15 $\pm$ 0.02
                       & 0.01915 $\pm$ 0.00009  & 148  & 0.66 $\pm$ 0.03\\
        NONS-ALD       & \textbf{0.00703} $\pm$ \textbf{0.00024}  & \textbf{43}   & 0.62 $\pm$ 0.06
                       & \textbf{0.01306} $\pm$ \textbf{0.00004}  & \textbf{148}   & 11.18 $\pm$ 0.53\\
        \Xhline{0.8pt}
        \multirow{2}{*}{Algorithm}&\multicolumn{3}{c|}{ailerons , $\varsigma=8$}
        &\multicolumn{3}{c}{calhousing, $\varsigma=4$}\\
        \cline{2-7}&MSE &$J\vert D$       &Time (s) &MSE &$J\vert D$       &Time (s)       \\
        \hline
        FOGD           & 0.00363 $\pm$ 0.00009  & 400  & 0.73 $\pm$ 0.02
                       & 0.02690 $\pm$ 0.00017  & 400  & 0.54 $\pm$ 0.01\\
        NOGD           & 0.00394 $\pm$ 0.00013  & 400  & 3.63 $\pm$ 0.06
                       & 0.02800 $\pm$ 0.00032  & 400  & 3.08 $\pm$ 0.05\\
        PROS-N-KONS    & 0.01509 $\pm$ 0.00028  & 88   & 4.45 $\pm$ 0.40
                       & 0.04336 $\pm$ 0.00146  & 45   & 1.52 $\pm$ 0.14\\
         CON-KONS      & 0.00320 $\pm$ 0.00007  & 84   & 4.25 $\pm$ 0.35
                       & 0.02436 $\pm$ 0.00010  & 44   & 1.49 $\pm$ 0.13\\
        \hline
        AOGD-ALD       & 0.00345 $\pm$ 0.00002  & 58   & 0.42 $\pm$ 0.03
                       & 0.03034 $\pm$ 0.00006  & 29   & 0.25 $\pm$ 0.02\\
        NONS-ALD       & \textbf{0.00288} $\pm$ \textbf{0.00001}  & \textbf{58}   & 1.72 $\pm$ 0.12
                       & \textbf{0.02215} $\pm$ \textbf{0.00011}  & \textbf{29}   & 0.59 $\pm$ 0.04\\
        \Xhline{0.8pt}
        \multirow{2}{*}{Algorithm}&\multicolumn{3}{c|}{year , $\varsigma=16$}
        &\multicolumn{3}{c}{TomsHardware, $\varsigma=12$}\\
        \cline{2-7}&MSE &$J\vert D$       &Time (s) &MSE &$J\vert D$       &Time (s)       \\
        \hline
        FOGD           & 0.01501 $\pm$ 0.00004  & 400  & 4.04 $\pm$ 0.34
                       & 0.00080 $\pm$ 0.00003  & 400  & 2.28 $\pm$ 0.08\\
        NOGD           & 0.01511 $\pm$ 0.00013  & 400  & 16.49 $\pm$ 0.45
                       & 0.00085 $\pm$ 0.00001  & 400  & 10.52 $\pm$ 0.27\\
        PROS-N-KONS    & 0.01967 $\pm$ 0.00026  & 109  & 22.60 $\pm$ 3.33
                       & 0.00232 $\pm$ 0.00007  & 105  & 13.47 $\pm$ 1.45\\
         CON-KONS      & 0.01370 $\pm$ 0.00004  & 107  & 23.73 $\pm$ 3.02
                       & 0.00054 $\pm$ 0.00001  & 108  & 14.88 $\pm$ 1.72\\
        \hline
        AOGD-ALD       & 0.01499 $\pm$ 0.00002  & 106  & 3.72 $\pm$ 0.23
                       & 0.00062 $\pm$ 0.00000 & 100   &  1.97 $\pm$ 0.07\\
        NONS-ALD       & \textbf{0.01243} $\pm$ \textbf{0.00001} & \textbf{106}   & 31.65 $\pm$ 0.61
                       & \textbf{0.00043} $\pm$ \textbf{0.00000} & \textbf{100}   & 14.53 $\pm$ 0.38\\
        \Xhline{0.8pt}
      \end{tabular}
    \end{table}

    Table \ref{tab:ICML2023:experimental_results} shows the experimental results.
    We report the average mean squared error (MSE),
    the size of buffer ($J$), the number of random features ($D$), and the average per-round running time.
    The MSE is defined as $\mathrm{MSE}=\frac{1}{T}\sum^T_{t=1}(\hat{y}_t-y_t)^2$.

    As a whole,
    NONS-ALD enjoys the smallest MSE on all datasets.
    We first analyze the results of the three second-order algorithms, i.e.,
    NONS-ALD, CON-KONS and PROS-N-KONS.
    Both NONS-ALD and CON-KONS enjoy much better prediction performance than PROS-N-KONS.
    The MSE of PROS-N-KONS is even larger than that of the three first-order algorithms.
    The reason is that PROS-N-KONS uses the restart technique.
    If the times of restart are large, then the prediction performance will become bad.
    Both NONS-ALD and CON-KONS use carefully designed projection operations
    which keep the previous information.
    Besides,
    NONS-ALD performs better than CON-KONS
    which proves that our projection scheme in Lemma \ref{prop:ICML2023:approximate_A_t}
    and Lemma \ref{prop:ICML2023:approximate_w_t} is better than that of CON-KONS.
    All of the first-order algorithms have higher MSE than NONS-ALD and CON-KONS.
    The results are intuitive,
    since second-order algorithms use more information of the square loss function.
    The results very the first goal \textbf{G} 1.

    Next we analyze the size of buffer.
    Both AOGD-ALD and NONS-ALD only store few examples.
    For instance,
    AOGD-ALD and NONS-ALD only store 13 examples on the \textit{parkinson} dataset,
    and store 148 examples on the \textit{bank} dataset.
    NOGD stores $400$ examples, but still performs worse than our algorithms.
    CON-KONS and PROS-N-KONS also store more examples than our algorithms.
    It is worth mentioning that
    we must carefully tune the kernel function on each dataset.
    For instance,
    we set $\varsigma=8$ for the \textit{parkinson} dataset,
    while we set $\varsigma=16$ for the \textit{year} dataset.
    The results very the second goal \textbf{G} 2.

    Finally,
    we analyze the average per-round running time.
    As a whole,
    the running time of AOGD-ALD and NONS-ALD is comparable with all of the baseline algorithms.
    AOGD-ALD even runs fastest on all datasets except for the \textit{bank} dataset.
    The per-round time complexity of AOGD-ALD is $O(\min\{dJ+J^2,dT\})$.
    The average per-round time complexity of NONS-ALD is $O(dJ+J^2+\frac{J^4}{T})$.
    The smaller $J$ is, the faster AOGD-ALD and NONS-ALD will run.
    Note that changing the value of $D$ in FOGD and $J$ in NOGD
    will balance the prediction performance and computational cost.
    FOGD and NOGD can not perform better than NONS-ALD by increasing the value of $D$ or $J$.
    The reason is that FOGD and NOGD are first-order algorithm.
    The results very the third goal \textbf{G} 3.

\section{Reanalyze FOGD}

    In this section,
    we reanalyze the regret of FOGD \cite{Lu2016Large},
    and aim to prove a regret of $O(\frac{\sqrt{TL(f)\ln\frac{1}{\delta}}}{\sqrt{D}})$.
    Our proof is similar with the proof of Theorem 1 in \citet{Lu2016Large}.
    Thus we just show the critical differences.

    For any $f=\sum^T_{t=1}a_t\kappa({\bf x}_t,\cdot)\in\mathcal{H}$,
    we define $\tilde{f}=\sum^T_{t=1}a_t\tilde{\phi}({\bf x}_t)$,
    where $\tilde{\phi}(\cdot)$ is the explicit feature mapping constructed by
    the random feature technique \cite{Rahimi2007Random}.
    The regret can be decomposed as follows,
    \begin{align*}
        \mathrm{Reg}(f)
        =&\sum^T_{t=1}\ell({\bf w}^\top_t\tilde{\phi}_t({\bf x}_t),y_t)
        -\sum^T_{t=1}\ell(\tilde{f}({\bf x}_t),y_t)
        +\sum^T_{t=1}\ell(\tilde{f}({\bf x}_t),y_t)
        -\sum^T_{t=1}\ell(f({\bf x}_t),y_t)\\
        =&\underbrace{\sum^T_{t=1}\ell({\bf w}^\top_t\tilde{\phi}_t({\bf x}_t),y_t)
        -\sum^T_{t=1}\ell({\bf w}^\top\tilde{\phi}_t({\bf x}_t),y_t)}_{\mathcal{T}_1}
        +\underbrace{\sum^T_{t=1}\ell(\tilde{f}({\bf x}_t),y_t)
        -\sum^T_{t=1}\ell(f({\bf x}_t),y_t)}_{\mathcal{T}_2}.
    \end{align*}
    Following the original analysis of FOGD, $\mathcal{T}_1$ can be upper bounded as follows,
    $$
        \mathcal{T}_1\leq \frac{\Vert{\bf w}\Vert^2_2}{2\eta}
        +\frac{\eta}{2}\sum^T_{t=1}\ell({\bf w}^\top_t\tilde{\phi}_t({\bf x}_t),y_t)
        \leq \frac{\Vert{\bf w}\Vert^2_2+1}{2}\sqrt{\sum^T_{t=1}\ell({\bf w}^\top_t\tilde{\phi}_t({\bf x}_t),y_t)}
        \leq \frac{\Vert{\bf w}\Vert^2_2+1}{2}\sqrt{L(\tilde{f})}+\frac{(\Vert{\bf w}\Vert^2_2+1)^2}{4},
    $$
    where we define the learning rate
    $\eta=\frac{1}{\sqrt{\sum^T_{t=1}\ell({\bf w}^\top_t\tilde{\phi}_t({\bf x}_t),y_t)}}$.
    Next we analyze $\mathcal{T}_2$.
    The random feature technique guarantees that,
    with probability at least $1-2^8(\sigma_pR/\epsilon)^2\exp(-D\epsilon^2/4(d+2))$,
    $\vert \tilde{\phi}^\top({\bf x}_\tau)\tilde{\phi}({\bf x}_t)
    -\kappa({\bf x}_\tau,{\bf x}_t)\vert\leq \epsilon$.
    We further obtain
    \begin{align*}
        \mathcal{T}_2
        \leq&\sum^T_{t=1}\vert\ell'(\tilde{f}({\bf x}_t),y_t)\vert\cdot \Vert f\Vert_1\epsilon
        \leq 2\Vert f\Vert_1\epsilon\cdot \sqrt{T\sum^T_{t=1}\ell(\tilde{f}({\bf x}_t),y_t)}
        \leq  2\Vert f\Vert_1\epsilon\cdot \sqrt{T\sum^T_{t=1}\ell(f({\bf x}_t),y_t)}
        +4T\Vert f\Vert^2_1\epsilon^2,
    \end{align*}
    where $\Vert f\Vert_1=\sum^T_{t=1}\vert a_t\vert$.
    It was proved that $\Vert{\bf w}\Vert^2_2\leq(1+\epsilon)\Vert f\Vert^2_1$ \cite{Lu2016Large}.
    Combining the upper bounds on $\mathcal{T}_1$ and $\mathcal{T}_2$ gives that,
    with probability at least $1-\delta$,
    $$
        \mathrm{Reg}(f)
        =O\left(\Vert f\Vert^2_1\frac{T}{D}\ln\frac{1}{\delta}+(\Vert f\Vert^2_1+1)\sqrt{L(f)}
        +\frac{\Vert f\Vert_1\epsilon\cdot\sqrt{TL(f)}}{\sqrt{D}}\sqrt{\ln\frac{1}{\delta}}\right).
    $$
    We conclude the proof.
    In Table \ref{tab:ICML2023:comparison_results},
    we omit the term $O(\frac{T}{D})$.

\section{Proof of Theorem \ref{thm:ICML2023:AOGD-ALD}}

    \begin{proof}[Proof of Theorem \ref{thm:ICML2023:AOGD-ALD}]
        We first consider the case $\vert S_t\vert< \lfloor(\sqrt{d^2+4dT}-d)/2\rfloor$ for all $t=1,\ldots,T$.
        \begin{align*}
            \forall f\in\mathbb{H},\quad\mathrm{Reg}(f)\leq&\sum^T_{t=1}\langle \hat{\nabla}_t,f_t-f\rangle
            +\langle \nabla_t-\hat{\nabla}_t,f_t-f\rangle\\
            \leq&\sum^T_{t=1}\frac{1}{\eta_t}\langle f_t-\bar{f}_{t+1},f_t-f\rangle
            +\Vert f_t-f\Vert_{\mathcal{H}}
            \cdot\vert \ell'(f_t({\bf x}_t),y_t)\vert\cdot\sqrt{\alpha_t}\cdot\mathbb{I}\{\alpha_t\leq\alpha\}\\
            \leq&\sum^T_{t=1}\frac{1}{2\eta_t}
            \left[\Vert f_t-f\Vert^2_{\mathcal{H}}
            -\Vert\bar{f}_{t+1}-f\Vert^2_{\mathcal{H}}
            +\Vert \bar{f}_{t+1}-f_t\Vert^2_{\mathcal{H}}\right]
            +2U\sqrt{\alpha}\sqrt{T\sum^T_{t=1}\vert \ell'(f_t({\bf x}_t),y_t)\vert^2}\\
            \leq&\frac{3U^2}{2\eta_T}
            +\frac{1}{2}\sum^T_{t=1}\eta_t\Vert \hat{\nabla}_t\Vert^2_{\mathcal{H}}
            +4UT^{\frac{1-\zeta}{2}}\sqrt{\sum^T_{t=1} \ell(f_t({\bf x}_t),y_t)},
        \end{align*}
        where we define $\alpha=T^{-\zeta}$, $\zeta\geq 0$.
        Recalling the definition of $\eta_t$.
        It is easy to prove that
        $$
            \sum^T_{t=1}\frac{\Vert\hat{\nabla}_t\Vert^2_{\mathcal{H}}}
            {\sqrt{1+\sum^{t}_{\tau=1}\Vert\hat{\nabla}_{\tau}\Vert^2_{\mathcal{H}}}}
            \leq 2\sqrt{\sum^{T}_{\tau=1}\Vert\hat{\nabla}_{\tau}\Vert^2_{\mathcal{H}}}.
        $$
        Let $\zeta= 1$.
        The final regret satisfies
        \begin{align*}
            \mathrm{Reg}(f)
            \leq\frac{3U}{2}\sqrt{1+\sum^{T}_{\tau=1}\Vert\hat{\nabla}_{\tau}\Vert^2_{\mathcal{H}}}
            +U\sqrt{\sum^{T}_{\tau=1}\Vert\hat{\nabla}_{\tau}\Vert^2_{\mathcal{H}}}
            +4U\sqrt{\sum^T_{t=1} \ell(f_t({\bf x}_t),y_t)}
            \leq9U\sqrt{\sum^T_{t=1} \ell(f_t({\bf x}_t),y_t)}+3U.
        \end{align*}
        Solving for $\sum^T_{t=1}\ell(f_t({\bf x}_t),y_t)$ gives
        $$
            \mathrm{Reg}(f)
            \leq 9U\sqrt{\sum^T_{t=1}\ell(f({\bf x}_t),y_t)+3U}+81U^2.
        $$
        Next we consider that there exists a $t_0<T$ such that
        $\vert S_{t_0-1}\vert\leq \lfloor(\sqrt{d^2+4dT}-d)/2\rfloor$
        and $\vert S_{t_0}\vert> \lfloor(\sqrt{d^2+4dT}-d)/2\rfloor$.
        For $t\geq t_0$,
        our algorithm just runs OGD
        which is equivalent $\mathrm{ALD}_t$ does not hold.
        \begin{align*}
            \mathrm{Reg}(f)
            =&\sum^{t_0-1}_{t=1}[\ell(f_t({\bf x}_t),y_t)-\ell(f({\bf x}_t),y_t)]
            +\sum^{T}_{t=t_0}[\ell(f_t({\bf x}_t),y_t)-\ell(f({\bf x}_t),y_t)]\\
            \leq&\sum^{t_0-1}_{t=1}\frac{1}{2\eta_t}
            \left[\Vert f_t-f\Vert^2_{\mathcal{H}}
            -\Vert\bar{f}_{t+1}-f\Vert^2_{\mathcal{H}}
            +\Vert \bar{f}_{t+1}-f_t\Vert^2_{\mathcal{H}}\right]
            +2UT^{-\frac{\zeta}{2}}\sqrt{(t_0-1)\sum^{t_0-1}_{t=1}\vert \ell'(f_t({\bf x}_t),y_t)\vert^2}+\\
            &\sum^{T}_{t=t_0}\frac{1}{2\eta_t}
            \left[\Vert f_t-f\Vert^2_{\mathcal{H}}
            -\Vert\bar{f}_{t+1}-f\Vert^2_{\mathcal{H}}
            +\Vert \bar{f}_{t+1}-f_t\Vert^2_{\mathcal{H}}\right]\\
        \leq&9U\sqrt{\sum^T_{t=1} \ell(f_t({\bf x}_t),y_t)}+3U.
        \end{align*}
        Solving for $\sum^T_{t=1}\ell(f_t({\bf x}_t),y_t)$ gives the desired result.
    \end{proof}

\section{Proof of Theorem \ref{thm:ICML2023:KONS_implicit_computing}}

    We first give a technical lemma which has been stated in \cite{Calandriello2017Second}.
    \begin{lemma}
    \label{lemma:ICML2023:matrix_inverse}
        For any ${\bf X}\in\mathbb{R}^{n\times m}$ and $\alpha>0$,
        \begin{align*}
            {\bf X}{\bf X}^\top({\bf X}{\bf X}^\top+\alpha{\bf I})^{-1}
            =&{\bf X}({\bf X}^\top{\bf X}+\alpha{\bf I})^{-1}{\bf X}^{\top},\\
            ({\bf X}{\bf X}^\top+\alpha{\bf I})^{-1}
            =&\frac{1}{\alpha}({\bf I}-{\bf X}({\bf X}^\top{\bf X}+\alpha{\bf I})^{-1}{\bf X}^\top).
        \end{align*}
    \end{lemma}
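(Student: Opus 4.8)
The plan is to derive both identities from a single \emph{push-through} relation, $({\bf X}{\bf X}^\top+\alpha{\bf I})^{-1}{\bf X}={\bf X}({\bf X}^\top{\bf X}+\alpha{\bf I})^{-1}$, using only that both $({\bf X}{\bf X}^\top+\alpha{\bf I})$ and $({\bf X}^\top{\bf X}+\alpha{\bf I})$ are invertible. This invertibility is immediate: for $\alpha>0$ each matrix is symmetric and positive definite, since ${\bf X}{\bf X}^\top$ and ${\bf X}^\top{\bf X}$ are positive semidefinite and the shift by $\alpha{\bf I}$ pushes every eigenvalue to at least $\alpha>0$.

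First I would establish the push-through relation. It suffices to start from the trivial equality ${\bf X}({\bf X}^\top{\bf X}+\alpha{\bf I})=({\bf X}{\bf X}^\top+\alpha{\bf I}){\bf X}$, which holds because both sides expand to ${\bf X}{\bf X}^\top{\bf X}+\alpha{\bf X}$. Left-multiplying by $({\bf X}{\bf X}^\top+\alpha{\bf I})^{-1}$ and right-multiplying by $({\bf X}^\top{\bf X}+\alpha{\bf I})^{-1}$ then yields the relation directly. For the first displayed identity I would right-multiply this relation by ${\bf X}^\top$, obtaining ${\bf X}({\bf X}^\top{\bf X}+\alpha{\bf I})^{-1}{\bf X}^\top=({\bf X}{\bf X}^\top+\alpha{\bf I})^{-1}{\bf X}{\bf X}^\top$; since ${\bf X}{\bf X}^\top$ commutes with ${\bf X}{\bf X}^\top+\alpha{\bf I}$ and hence with its inverse, the right-hand side equals ${\bf X}{\bf X}^\top({\bf X}{\bf X}^\top+\alpha{\bf I})^{-1}$, which is exactly the first claim.

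For the second identity I would write $\alpha{\bf I}=({\bf X}{\bf X}^\top+\alpha{\bf I})-{\bf X}{\bf X}^\top$, right-multiply by $({\bf X}{\bf X}^\top+\alpha{\bf I})^{-1}$, and divide by $\alpha$, giving $({\bf X}{\bf X}^\top+\alpha{\bf I})^{-1}=\frac{1}{\alpha}({\bf I}-{\bf X}{\bf X}^\top({\bf X}{\bf X}^\top+\alpha{\bf I})^{-1})$; substituting the first identity for the product ${\bf X}{\bf X}^\top({\bf X}{\bf X}^\top+\alpha{\bf I})^{-1}$ produces the stated right-hand side. As an independent check one may instead multiply the candidate right-hand side on the right by $({\bf X}{\bf X}^\top+\alpha{\bf I})$ and use $({\bf X}^\top{\bf X}+\alpha{\bf I}){\bf X}^\top={\bf X}^\top{\bf X}{\bf X}^\top+\alpha{\bf X}^\top$ to collapse everything to ${\bf I}$, confirming the formula is the genuine inverse (this is the Woodbury identity specialized to a rank-$m$ update of $\alpha{\bf I}$).

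The computation is essentially mechanical, so there is no genuine obstacle; the only points requiring care are bookkeeping the two different identity sizes (${\bf I}_n$ in the $n\times n$ blocks versus ${\bf I}_m$ in the $m\times m$ blocks, which the statement suppresses) and confirming that the two shifted Gram matrices are invertible so that the manipulations are legitimate. Both are dispatched by the positive-definiteness remark in the first paragraph.
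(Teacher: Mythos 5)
Your proof is correct. The paper itself gives no proof of this lemma at all: it is stated as a known technical fact imported from \citet{Calandriello2017Second}, so there is nothing in the paper to compare against step by step. Your argument is the standard and complete one: the push-through relation $({\bf X}{\bf X}^\top+\alpha{\bf I})^{-1}{\bf X}={\bf X}({\bf X}^\top{\bf X}+\alpha{\bf I})^{-1}$ follows from the trivial identity ${\bf X}({\bf X}^\top{\bf X}+\alpha{\bf I})=({\bf X}{\bf X}^\top+\alpha{\bf I}){\bf X}$ once both shifted Gram matrices are invertible (which your positive-definiteness remark settles), the first claim then follows by right-multiplying by ${\bf X}^\top$ and using that ${\bf X}{\bf X}^\top$ commutes with the inverse of ${\bf X}{\bf X}^\top+\alpha{\bf I}$, and the second follows by writing $\alpha{\bf I}=({\bf X}{\bf X}^\top+\alpha{\bf I})-{\bf X}{\bf X}^\top$ and substituting the first identity. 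Your attention to the two identity sizes (${\bf I}_n$ versus ${\bf I}_m$), which the paper's statement suppresses, is a point in your favor; in short, your proposal supplies a self-contained justification that the paper delegates to a citation.
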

    \begin{proof}[Proof of Theorem \ref{thm:ICML2023:KONS_implicit_computing}]
        For simplicity,
        let $g_t=\ell'(f_t(\mathbf{x}_t),y_t)$ and
        $$
            \hat{\phi}({\bf x}_t)
            =\left\{
            \begin{array}{ll}
                \phi(\mathbf{x}_t)& \mathrm{if}~\mathrm{ALD}_t~\mathrm{does~not~hold},\\
                \Phi_{S_t}{\bm \beta}^\ast_t&\mathrm{otherwise}.\\
            \end{array}
            \right.\\
        $$
        Let $\hat{\nabla}_t=g_t\hat{\phi}({\bf x}_t)$ and
        $\hat{{\bm \Phi}}_{t}=(\sqrt{\eta_1}\hat{\nabla}_1,\ldots,\sqrt{\eta_t}\hat{\nabla}_t)$.
        We can rewrite
        \begin{equation}
            {\bf A}_t
            ={\bf A}_{t-1}+\eta_t\hat{\nabla}_{t}\hat{\nabla}^\top_{t}
            =\mu{\bf I}+\sum^t_{\tau=1}\eta_{\tau}\hat{\nabla}_{\tau}\hat{\nabla}^\top_{\tau}
            =\mu{\bf I}+\hat{{\bm \Phi}}_{t}\hat{{\bm \Phi}}^\top_{t}.
        \end{equation}
        Recalling that
        \begin{align*}
            f_{t+1}=f_{t}-{\bf A}^{-1}_{t}\hat{\nabla}_{t}
            =f_{t-1}-{\bf A}^{-1}_{t-1}\hat{\nabla}_{t-1}-{\bf A}^{-1}_{t}\hat{\nabla}_{t}
            =\ldots
            =f_{1}-\sum^{t}_{\tau=1}{\bf A}^{-1}_{\tau}\hat{\nabla}_{\tau},
        \end{align*}
        in which $f_1=0$. Using Lemma \ref{lemma:ICML2023:matrix_inverse} yields
        \begin{align*}
            f_{t+1}({\bf x}_{t+1})=&f^\top_{t+1}\phi({\bf x}_{t+1})\\
            =&-\sum^{t}_{\tau=1}\hat{\nabla}^\top_{\tau}{\bf A}^{-1}_{\tau}\phi(\mathbf{x}_{t+1})\\
            =&-\sum^{t}_{\tau=1}
            \hat{\nabla}^\top_{\tau}\left(\mu{\bf I}+\hat{{\bm \Phi}}_{\tau}
            \hat{{\bm \Phi}}^\top_{\tau}\right)^{-1}\phi(\mathbf{x}_{t+1})\\
            =&\frac{-1}{\mu}\sum^{t}_{\tau=1}
            \hat{\nabla}^\top_{\tau}\left({\bf I}-\hat{{\bm \Phi}}_{\tau}
            (\hat{{\bm \Phi}}^\top_{\tau}\hat{{\bm \Phi}}_{\tau}
            +\mu{\bf I})^{-1}\hat{{\bm \Phi}}^\top_{\tau}\right)\phi(\mathbf{x}_{t+1})\\
            =&\frac{-1}{\mu}\sum^{t}_{\tau=1}
            g_\tau\cdot\left(\hat{\phi}({\bf x}_\tau)^\top\phi({\bf x}_{t+1})
            -\hat{\phi}({\bf x}_\tau)^\top\hat{{\bm \Phi}}_{\tau}
            (\hat{{\bm \Phi}}^\top_{\tau}\hat{{\bm \Phi}}_{\tau}
            +\mu{\bf I})^{-1}\hat{{\bm \Phi}}^\top_{\tau}\phi(\mathbf{x}_{t+1})\right),
        \end{align*}
        which concludes the proof.
    \end{proof}

\section{Proof of Lemma \ref{prop:ICML2023:approximate_A_t}}

    \begin{proof}[Proof of Lemma \ref{prop:ICML2023:approximate_A_t}]
        Recalling that
        \begin{align*}
            {\bf A}_j(s_j-1)&-\mu{\bf I}
            =\sum^{j-1}_{r=1}\sum_{t\in T_r}\eta_tg^2_r(t)\tilde{\phi}_j({\bf x}_t)
            \tilde{\phi}^\top_j({\bf x}_t)\\
            =&\sum^{j-1}_{r=1}\sum_{t\in T_r}\eta_tg^2_r(t)\mathcal{P}^\frac{1}{2}_{S(j)}
            {\bm\Phi}_{S(r)}{\bm \beta}^\ast_r
            ({\bm\Phi}_{S(r)}{\bm \beta}^\ast_r)^\top(\mathcal{P}^\frac{1}{2}_{S(j)})^\top\\
            =&\sum^{j-1}_{r=1}\sum_{t\in T_r}\eta_tg^2_r(t)\mathcal{P}^\frac{1}{2}_{S(j)}
            {\bm\Phi}_{S(r)}({\bm\Phi}^\top_{S(r)}{\bm\Phi}_{S(r)})^{-1}{\bm\Phi}^\top_{S(r)}\phi({\bf x}_t)
            ({\bm\Phi}_{S(r)}({\bm\Phi}^\top_{S(r)}{\bm\Phi}_{S(r)})^{-1}{\bm\Phi}^\top_{S(r)}\phi({\bf x}_t))^\top(\mathcal{P}^\frac{1}{2}_{S(j)})^\top\\
            =&\sum^{j-1}_{r=1}\sum_{t\in T_r}\eta_tg^2_r(t)\mathcal{P}^\frac{1}{2}_{S(j)}
            \mathcal{P}_{S(r)}\phi({\bf x}_t)
            (\mathcal{P}_{S(r)}\phi({\bf x}_t))^\top(\mathcal{P}^\frac{1}{2}_{S(j)})^\top\\
            =&\sum^{j-1}_{r=1}\sum_{t\in T_r}\eta_tg^2_r(t)\mathcal{P}^\frac{1}{2}_{S(j)}
            \mathcal{P}_{S(j-1)}\mathcal{P}_{S(r)}\phi({\bf x}_t)
            (\mathcal{P}_{S(j-1)}\mathcal{P}_{S(r)}\phi({\bf x}_t))^\top(\mathcal{P}^\frac{1}{2}_{S(j)})^\top\\
            =&\mathcal{P}^\frac{1}{2}_{S(j)}
            \left((\mathcal{P}^\frac{1}{2}_{S(j-1)})^\top\sum^{j-2}_{r=1}\sum_{t\in T_r}\eta_tg^2_r(t)
            \mathcal{P}^\frac{1}{2}_{S(j-1)}\mathcal{P}_{S(r)}\phi({\bf x}_t)
            (\mathcal{P}_{S(r)}
            \phi({\bf x}_t))^\top
            (\mathcal{P}^{\frac{1}{2}}_{S(j-1)})^\top\mathcal{P}^{\frac{1}{2}}_{S(j-1)}+\right.\\
            &\left.
            (\mathcal{P}^{\frac{1}{2}}_{S(j-1)})^\top\sum_{t\in T_{j-1}}\eta_tg^2_{j-1}(t)
            \mathcal{P}^{\frac{1}{2}}_{S(j-1)}\phi({\bf x}_t)
            (\mathcal{P}^{\frac{1}{2}}_{S(j-1)}\phi({\bf x}_t))^\top
            \mathcal{P}^{\frac{1}{2}}_{S(j-1)}\right)(\mathcal{P}^\frac{1}{2}_{S(j)})^\top\\
            =&\mathcal{P}^\frac{1}{2}_{S(j)}(\mathcal{P}^\frac{1}{2}_{S(j-1)})^\top
            \left({\bf A}_{j-1}(s_{j-1}-1)-\mu{\bf I}+
            \sum_{t\in T_{j-1}}\eta_tg^2_{j-1}(t)\phi_{j-1}({\bf x}_t)\phi^\top_{j-1}({\bf x}_t)
            \right)\mathcal{P}^{\frac{1}{2}}_{S(j-1)}(\mathcal{P}^\frac{1}{2}_{S(j)})^\top\\
            =&\mathcal{P}^\frac{1}{2}_{S(j)}(\mathcal{P}^\frac{1}{2}_{S(j-1)})^\top
            ({\bf A}_{j-1}(s_{j}-1)-\mu{\bf I})
            \mathcal{P}^\frac{1}{2}_{S(j-1)}(\mathcal{P}^\frac{1}{2}_{S(j)})^\top.
        \end{align*}
        Thus we can obtain
        $$
            {\bf A}_j(s_j-1)=\mu{\bf I}+
            \sum^{j-1}_{r=1}\sum_{t\in T_r}\eta_tg^2_r(t)\tilde{\phi}_j({\bf x}_t)
            \tilde{\phi}^\top_j({\bf x}_t)
            =\mu{\bf I}+\mathcal{P}^\frac{1}{2}_{S(j)}(\mathcal{P}^\frac{1}{2}_{S(j-1)})^\top
            ({\bf A}_{j-1}(s_{j}-1)-\mu{\bf I})
            \mathcal{P}^\frac{1}{2}_{S(j-1)}(\mathcal{P}^\frac{1}{2}_{S(j)})^\top,
        $$
        which concludes the proof.
    \end{proof}

\section{Proof of Lemma \ref{prop:ICML2023:approximate_w_t}}

    \begin{proof}[Proof of Lemma \ref{prop:ICML2023:approximate_w_t}]
        We directly use the definition of ${\bf w}_{j}(s_{j})$.
        \begin{align*}
            &{\bf w}_{j}(s_{j})
            =\mathcal{P}^{\frac{1}{2}}_{S(j)}(\mathcal{P}^{\frac{1}{2}}_{S(j-1)})^\top{\bf w}_{j-1}(s_{j})\\
           \Rightarrow &
           (\mathcal{P}^{\frac{1}{2}}_{S(j)}(\mathcal{P}^{\frac{1}{2}}_{S(j-1)})^\top)^\top{\bf w}_{j}(s_{j})
            =(\mathcal{P}^{\frac{1}{2}}_{S(j)}(\mathcal{P}^{\frac{1}{2}}_{S(j-1)})^\top)^\top
            \mathcal{P}^{\frac{1}{2}}_{S(j)}(\mathcal{P}^{\frac{1}{2}}_{S(j-1)})^\top{\bf w}_{j-1}(s_{j})\\
           \Rightarrow &
           \mathcal{P}^{\frac{1}{2}}_{S(j-1)}(\mathcal{P}^{\frac{1}{2}}_{S(j)})^\top{\bf w}_{j}(s_{j})
            =\mathcal{P}^{\frac{1}{2}}_{S(j-1)}(\mathcal{P}^{\frac{1}{2}}_{S(j)})^\top
            \mathcal{P}^{\frac{1}{2}}_{S(j)}(\mathcal{P}^{\frac{1}{2}}_{S(j-1)})^\top{\bf w}_{j-1}(s_{j})\\
           \Rightarrow &
           \mathcal{P}^{\frac{1}{2}}_{S(j-1)}(\mathcal{P}^{\frac{1}{2}}_{S(j)})^\top{\bf w}_{j}(s_{j})
            =\mathcal{P}^{\frac{1}{2}}_{S(j-1)}\mathcal{P}_{S(j)}
            (\mathcal{P}^{\frac{1}{2}}_{S(j-1)})^\top{\bf w}_{j-1}(s_{j})\\
           \Rightarrow &
           \mathcal{P}^{\frac{1}{2}}_{S(j-1)}(\mathcal{P}^{\frac{1}{2}}_{S(j)})^\top{\bf w}_{j}(s_{j})
            ={\bf w}_{j-1}(s_{j}),
        \end{align*}
        where we use the fact
        $\mathcal{P}^{\frac{1}{2}}_{S(j-1)}(\mathcal{P}^{\frac{1}{2}}_{S(j-1)})^\top={\bf I}$.
        Next we prove ${\bf w}_{j}(s_j)\in\mathbb{W}_{s_j}$.
        \begin{align*}
            ({\bf w}_{j}(s_{j}))^\top\phi_j({\bf x}_{s_j})
            =({\bf w}_{j-1}(s_{j}))^\top
            \mathcal{P}^{\frac{1}{2}}_{S(j-1)}(\mathcal{P}^{\frac{1}{2}}_{S(j)})^\top
            \mathcal{P}^{\frac{1}{2}}_{S(j)}\phi({\bf x}_{s_j})
            =&({\bf w}_{j-1}(s_{j}))^\top
            \mathcal{P}^{\frac{1}{2}}_{S(j-1)}\phi({\bf x}_{s_j})\\
            =&({\bf w}_{j-1}(s_{j}))^\top\phi_{j-1}({\bf x}_{s_j}).
        \end{align*}
        Since $\vert ({\bf w}_{j-1}(s_{j}))^\top\phi_{j-1}({\bf x}_{s_j})\vert\leq U$,
        it must be ${\bf w}_{j}(s_j)\in\mathbb{W}_{s_j}$.
        Thus we conclude the proof.
    \end{proof}

\section{Proof of Lemma \ref{lemma:ICML2023:spectral_norm_error_kernel_approximate}}

    We first prove a technique lemma.
    \begin{lemma}
    \label{lemma:ICML2023:pointwise_kernel_approximate_error}
        For all $j=1,\ldots,J$,
        let $\mathcal{P}_{S(j)}$ be the projection matrix onto the column space of ${\bm \Phi}_{S(j)}$.
        For all $t\in T_j$,
        $$
            0\leq \kappa(\mathbf{x}_t,\mathbf{x}_t)-
            \phi^\top_j(\mathbf{x}_t)\phi_j(\mathbf{x}_t)\leq \alpha,
        $$
        For any $r\in[J]$ and $t\in T_r$,
        denote by
        $\tilde{\phi}_J({\bf x}_t)=\mathcal{P}^{\frac{1}{2}}_{S(J)}{\bm \Phi}_{S(r)}{\bm \beta}^\ast_r(t)$.
        Then for any $i,j\in[J]$ and for any $t\in T_i, \tau\in T_j$,
        $$
            \kappa(\mathbf{x}_t,\mathbf{x}_\tau)-
            \tilde{\phi}^\top_J(\mathbf{x}_t)\tilde{\phi}_J(\mathbf{x}_\tau)\leq \sqrt{\alpha}.
        $$
    \end{lemma}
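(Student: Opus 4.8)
The plan is to rewrite both quantities through the orthogonal projection matrices $\mathcal{P}_{S(j)}$ and to reduce everything to the pointwise ALD error $\alpha_t=\Vert(\mathbf{I}-\mathcal{P}_{S(j)})\phi(\mathbf{x}_t)\Vert^2$. The key identity I would establish first is $(\mathcal{P}^{\frac{1}{2}}_{S(j)})^\top\mathcal{P}^{\frac{1}{2}}_{S(j)}=\mathcal{P}_{S(j)}$, which is immediate from $\mathcal{P}^{\frac{1}{2}}_{S(j)}=\Sigma^{-\frac{1}{2}}_{S(j)}\mathbf{U}^\top_{S(j)}{\bm\Phi}^\top_{S(j)}$ and the definition of $\mathcal{P}_{S(j)}$. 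Consequently $\phi^\top_j(\mathbf{x})\phi_j(\mathbf{x})=\phi(\mathbf{x})^\top\mathcal{P}_{S(j)}\phi(\mathbf{x})$, so for $t\in T_j$ one obtains $\kappa(\mathbf{x}_t,\mathbf{x}_t)-\phi^\top_j(\mathbf{x}_t)\phi_j(\mathbf{x}_t)=\phi(\mathbf{x}_t)^\top(\mathbf{I}-\mathcal{P}_{S(j)})\phi(\mathbf{x}_t)=\Vert(\mathbf{I}-\mathcal{P}_{S(j)})\phi(\mathbf{x}_t)\Vert^2=\alpha_t$, using that $\mathbf{I}-\mathcal{P}_{S(j)}$ is symmetric idempotent. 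Nonnegativity is then trivial, and $\alpha_t\le\alpha$ holds because for $t\in T_j$ either $\mathrm{ALD}_t$ holds or $t=s_j$, in which case $\mathbf{x}_{s_j}\in S(j)$ forces $\alpha_{s_j}=0$. This proves the first inequality.

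For the cross-term bound I would use that ${\bm\Phi}_{S(r)}{\bm\beta}^\ast_r(t)=\mathcal{P}_{S(r)}\phi(\mathbf{x}_t)$ is exactly the projection of $\phi(\mathbf{x}_t)$ onto the column space of ${\bm\Phi}_{S(r)}$, hence $\tilde\phi_J(\mathbf{x}_t)=\mathcal{P}^{\frac{1}{2}}_{S(J)}\mathcal{P}_{S(i)}\phi(\mathbf{x}_t)$ for $t\in T_i$. Taking the inner product and applying $(\mathcal{P}^{\frac{1}{2}}_{S(J)})^\top\mathcal{P}^{\frac{1}{2}}_{S(J)}=\mathcal{P}_{S(J)}$ gives $\tilde\phi^\top_J(\mathbf{x}_t)\tilde\phi_J(\mathbf{x}_\tau)=\phi(\mathbf{x}_t)^\top\mathcal{P}_{S(i)}\mathcal{P}_{S(J)}\mathcal{P}_{S(j)}\phi(\mathbf{x}_\tau)$. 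Since the buffers only grow, $S(i)\subseteq S(J)$ and $S(j)\subseteq S(J)$, so $\mathcal{P}_{S(i)}\mathcal{P}_{S(J)}=\mathcal{P}_{S(i)}$ collapses this to $\phi(\mathbf{x}_t)^\top\mathcal{P}_{S(i)}\mathcal{P}_{S(j)}\phi(\mathbf{x}_\tau)$.

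The decisive step is to observe that both $\kappa(\mathbf{x}_t,\mathbf{x}_\tau)$ and $\tilde\phi^\top_J(\mathbf{x}_t)\tilde\phi_J(\mathbf{x}_\tau)$ are symmetric in $(t,\tau)$, so I may assume without loss of generality $i\le j$; then $S(i)\subseteq S(j)$ gives $\mathcal{P}_{S(i)}\mathcal{P}_{S(j)}=\mathcal{P}_{S(i)}$, and the error reduces to $\phi(\mathbf{x}_t)^\top(\mathbf{I}-\mathcal{P}_{S(i)})\phi(\mathbf{x}_\tau)=((\mathbf{I}-\mathcal{P}_{S(i)})\phi(\mathbf{x}_t))^\top((\mathbf{I}-\mathcal{P}_{S(i)})\phi(\mathbf{x}_\tau))$. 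Cauchy--Schwarz bounds this by $\Vert(\mathbf{I}-\mathcal{P}_{S(i)})\phi(\mathbf{x}_t)\Vert\,\Vert(\mathbf{I}-\mathcal{P}_{S(i)})\phi(\mathbf{x}_\tau)\Vert$; the first factor equals $\sqrt{\alpha_t}\le\sqrt{\alpha}$ by the first part (as $t\in T_i$), while the second is at most $\Vert\phi(\mathbf{x}_\tau)\Vert=\sqrt{\kappa(\mathbf{x}_\tau,\mathbf{x}_\tau)}=1$ by normalization, yielding the claimed $\sqrt{\alpha}$.

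I expect the main obstacle to be resisting the naive decomposition of the double projection into three cross terms, which only delivers $2\sqrt{\alpha}+\alpha$. The improvement to $\sqrt{\alpha}$ comes entirely from combining the nesting of the buffers (to merge the two projections into the single one onto the smaller space) with the symmetry reduction, which is exactly what lets one factor be charged to the small ALD error and the other merely to the unit norm of the normalized feature map.
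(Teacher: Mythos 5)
Your proposal is correct and follows essentially the same route as the paper's proof: both reduce everything to the ALD error via $(\mathcal{P}^{\frac{1}{2}}_{S(j)})^\top\mathcal{P}^{\frac{1}{2}}_{S(j)}=\mathcal{P}_{S(j)}$, collapse $\mathcal{P}_{S(i)}\mathcal{P}_{S(J)}\mathcal{P}_{S(j)}$ to $\mathcal{P}_{S(i)}$ using the nesting of the buffers (WLOG $i\leq j$), and then apply Cauchy--Schwarz so that one factor is $\sqrt{\alpha_t}\leq\sqrt{\alpha}$ and the other is bounded by $\Vert\phi(\mathbf{x}_\tau)\Vert_{\mathcal{H}}=1$. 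Your treatment of the edge case $t=s_j$ (where $\mathbf{x}_{s_j}\in S(j)$ gives $\alpha_{s_j}=0$) is in fact slightly more careful than the paper's, which simply invokes the ALD condition.
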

    \begin{proof}[Proof of Lemma \ref{lemma:ICML2023:pointwise_kernel_approximate_error}]
        For any $t\in T_j$,
        $S_t=S(j)$.
        \begin{align*}
            \kappa(\mathbf{x}_t,\mathbf{x}_t)-
            \phi^\top_j(\mathbf{x}_t)\phi_j(\mathbf{x}_t)
            &=\kappa(\mathbf{x}_t,\mathbf{x}_t)-\phi(\mathbf{x}_t)\mathcal{P}_{S(j)}\phi(\mathbf{x}_t)\\
            &= \kappa(\mathbf{x}_t,\mathbf{x}_t)-
            ({\bf \Phi}^\top_{S(j)}\kappa(\mathbf{x}_{t},\cdot))^\top{\bf K}^{-1}_{S(j)}
            {\bf \Phi}^\top_{S(j)}\kappa(\mathbf{x}_{t},\cdot)\\
            &=\alpha_t\in[0,\alpha],
        \end{align*}
        where we use the fact that the $\mathrm{ALD}_t$ condition holds.\\
        Next we consider $t\in T_i$ and $\tau\in T_j$.
        Without loss of generality,
        assuming that $i<j$.
        \begin{align*}
            \kappa({\bf x}_t,{\bf x}_\tau)-\tilde{\phi}^\top_J(\mathbf{x}_t)\tilde{\phi}_J(\mathbf{x}_\tau)
            =&\phi({\bf x}_t)^\top\phi({\bf x}_{\tau})-
            \left(\mathcal{P}^{\frac{1}{2}}_{S(J)}{\bm \Phi}_{S(i)}{\bm \beta}^\ast_i(t)\right)^\top
            \mathcal{P}^{\frac{1}{2}}_{S(J)}{\bm \Phi}_{S(j)}{\bm \beta}^\ast_j(\tau)\\
            =&\phi({\bf x}_t)^\top\phi({\bf x}_{\tau})-
            \left(\mathcal{P}_{S(i)}\phi({\bf x}_t)\right)^\top
            \mathcal{P}_{S(J)}\mathcal{P}_{S(j)}\phi({\bf x}_{\tau})\\
            =&\phi({\bf x}_t)^\top\phi({\bf x}_{\tau})-
            \phi({\bf x}_t)^\top\mathcal{P}_{S(i)}
            \mathcal{P}_{S(J)}\mathcal{P}_{S(j)}\phi({\bf x}_{\tau})\\
            =&\phi({\bf x}_t)^\top\phi({\bf x}_{\tau})-
            \phi({\bf x}_t)^\top\mathcal{P}_{S(i)}\phi({\bf x}_{\tau})\\
            \leq&
            \sqrt{\Vert \phi({\bf x}_t)-
            \mathcal{P}_{S(i)}\phi({\bf x}_t)\Vert^2_{\mathcal{H}}}\cdot
            \Vert\phi({\bf x}_\tau)\Vert_{\mathcal{H}}\\
            =&\sqrt{\phi({\bf x}_t)^\top\phi({\bf x}_t)-
            \phi({\bf x}_t)^\top\mathcal{P}_{S(i)}\phi({\bf x}_t)}\cdot
            \Vert\phi({\bf x}_\tau)\Vert_{\mathcal{H}}\\
            \leq&\sqrt{\alpha},
        \end{align*}
        which concludes the proof.
    \end{proof}

    \begin{proof}[Proof of Lemma \ref{lemma:ICML2023:spectral_norm_error_kernel_approximate}]
        Denote by ${\bf \Phi}_{T_j}=(\phi({\bf x}_{s_j}),\phi({\bf x}_{s_j+1}),\ldots,\phi({\bf x}_{s_{j+1}-1}))
        \in\mathbb{R}^{n\times \vert T_j\vert}$,
        ${\bf K}_{T_j}={\bf \Phi}^\top_{T_j}{\bf \Phi}_{T_j}$
        and $\tilde{{\bf K}}_{T_j}={\bf \Phi}^\top_{T_j}
        \mathcal{P}_{S(j)}{\bf \Phi}_{T_j}$.
        Let
        ${\bf K}_{-}={\bf K}_{T_j}-\tilde{{\bf K}}_{T_j}$.
        We first prove that ${\bf K}_{-}$ is a positive semi-definite (PSD) matrix.
        Let ${\bf \Phi}_{[e_j]}=(\phi({\bf x}_{s_1}),\phi({\bf x}_{s_2}),
        \ldots,\phi({\bf x}_{s_j}),\phi({\bf x}_{s_j+1}),\ldots,\phi({\bf x}_{s_{j+1}-1}))
        \in\mathbb{R}^{n\times (\vert T_j\vert+j-1)}$,
        ${\bf K}^{-1}_{[e_j]}={\bf \Phi}^\top_{[e_j]}{\bf \Phi}_{[e_j]}$, and
        $\mathcal{P}_{[e_j]}={\bf \Phi}_{[e_j]}{\bf K}^{-1}_{[e_j]}{\bf \Phi}^\top_{[e_j]}$
        be the projection matrix on the column space of ${\bm \Phi}_{[e_j]}$.
        We have
        \begin{align*}
            {\bf K}_{T_j}-\tilde{{\bf K}}_{T_j}
            =&{\bm \Phi}^\top_{T_j}\mathcal{P}_{[e_j]}{\bm \Phi}_{T_j}
            -{\bm \Phi}^\top_{T_j}\mathcal{P}_{S(j)}{\bm \Phi}_{T_j}\\
            =&{\bm \Phi}^\top_{I_j}(\mathcal{P}^\top_{[e_j]}\mathcal{P}_{[e_j]}
            -\mathcal{P}^\top_{S(j)}\mathcal{P}_{S(j)})
            {\bm \Phi}_{I_j}\\
            =&{\bm \Phi}^\top_{I_j}(\mathcal{P}_{[e_j]}-\mathcal{P}_{S(j)})^\top
            (\mathcal{P}_{[e_j]}-\mathcal{P}_{S(j)}){\bm \Phi}_{T_j}\\
            =&\left((\mathcal{P}_{[e_j]}-\mathcal{P}_{S(j)}){\bm \Phi}_{T_j}\right)^\top
            (\mathcal{P}_{[e_j]}-\mathcal{P}_{S(j)}){\bm \Phi}_{T_j},
        \end{align*}
        where $\mathcal{P}_{[e_j]}$ and $\mathcal{P}_{S(j)}$ satisfy
        $\mathcal{P}^\top_{[e_j]}\mathcal{P}_{[e_j]}=\mathcal{P}_{[e_j]}$ and
        $\mathcal{P}^\top_{S(j)}\mathcal{P}_{S(j)}=\mathcal{P}_{S(j)}$.
        Besides,
        $$
            \mathcal{P}^\top_{[e_j]}\mathcal{P}_{S(j)}
            =\mathcal{P}_{[e_j]}{\bm \Phi}_{S(j)}({\bm \Phi}^\top_{S(j)}{\bm \Phi}_{S(j)})^{-1}
            {\bm \Phi}_{S(j)}^\top
            =\mathcal{P}_{S(j)},
        $$
        where ${\bm \Phi}_{S(j)}$ belongs to the column space of ${\bm \Phi}_{[e_j]}$.
        For any ${\bf a}\in\mathbb{R}^{\vert T_j\vert}$,
        we have
        \begin{align*}
            {\bf a}^\top{\bf K}_{-}{\bf a}
            =\left((\mathcal{P}_{[e_j]}-\mathcal{P}_{S(j)}){\bm \Phi}_{T_j}{\bf a}\right)^\top
            (\mathcal{P}_{[e_j]}-\mathcal{P}_{S(j)}){\bm \Phi}_{I_j}{\bf a}
            =\left\Vert (\mathcal{P}_{[e_j]}-\mathcal{P}_{S(j)})
            {\bm \Phi}_{T_j}{\bf a}\right\Vert^2_{\mathcal{H}}
            \geq 0.
        \end{align*}
        Thus ${\bf K}_{-}$ is a PSD matrix.
        Lemma \ref{lemma:ICML2023:pointwise_kernel_approximate_error}
        gives ${\bf K}_{-}[i,i]\in[0,\alpha]$.
        Thus we have
        $$
            \left\Vert {\bf K}_{T_j}-\tilde{{\bf K}}_{T_j}\right\Vert_2
            \leq \mathrm{tr}\left({\bf K}_{T_j}-\tilde{{\bf K}}_{T_j}\right)
            \leq \vert T_j\vert\cdot\alpha.
        $$
        Let $\tilde{{\bm \Phi}}_T=\left((\tilde{\phi}_J({\bf x}_t))_{t\in T_1},\ldots,
        (\tilde{\phi}_J({\bf x}_t))_{t\in T_J}\right)$.
        The second statement in Lemma \ref{lemma:ICML2023:pointwise_kernel_approximate_error}
        can derive
        $$
            \left\Vert {\bf K}_T-\tilde{{\bm \Phi}}^\top_T\tilde{{\bm \Phi}}_T\right\Vert_2
            \leq \left\Vert {\bf K}_T-\tilde{{\bm \Phi}}^\top_T\tilde{{\bm \Phi}}_T\right\Vert_F
            \leq T\sqrt{\alpha},
        $$
        which concludes the proof.
    \end{proof}

\section{Proof of Theorem \ref{thm:ICML2023:NONS-ALD:regret_bound}}

    We first give some technical lemmas.

\subsection{Technical Lemmas}

    \begin{lemma}
    \label{lemma:ICML2023:continuous_projection_1}
        For any $f\in\mathbb{H}$,
        let $f_j$ be the projection of $f$ onto the column space of ${\bm \Phi}_{S(j)}$,
        and $f_{j+1}$ be the projection of $f$ onto the column space of ${\bm \Phi}_{S(j+1)}$.
        The following three claims hold:
        (i) There exist ${\bf w}_j=\mathcal{P}^\frac{1}{2}_{S(j)}f\in\mathbb{R}^j$
        and
        ${\bf w}_{j+1}=\mathcal{P}^\frac{1}{2}_{S(j+1)}f\in\mathbb{R}^{j+1}$,
        such that $f_j({\bf x})={\bf w}^\top_j\phi_j({\bf x})$ and
        $f_{j+1}({\bf x})={\bf w}^\top_{j+1}\phi_{j+1}({\bf x})$,
        (ii) ${\bf w}_j\in\cap^T_{t=1}\mathbb{W}_t$ and ${\bf w}_{j+1}\in\cap^T_{t=1}\mathbb{W}_t$,
        (iii)
        $
            {\bf w}_j=\mathcal{P}^{\frac{1}{2}}_{S(j)}(\mathcal{P}^{\frac{1}{2}}_{S(j+1)})^\top
            {\bf w}_{j+1}.
        $
    \end{lemma}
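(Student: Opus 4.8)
The plan is to reduce all three claims to a single elementary algebraic identity for the half-projection operator and then read them off. First I would record that, writing the SVD ${\bf K}_{S(j)}={\bf U}_{S(j)}\Sigma_{S(j)}{\bf U}^\top_{S(j)}$ together with $\mathcal{P}^{\frac{1}{2}}_{S(j)}=\Sigma^{-\frac{1}{2}}_{S(j)}{\bf U}^\top_{S(j)}{\bm\Phi}^\top_{S(j)}$, a direct multiplication gives
$$
    (\mathcal{P}^{\frac{1}{2}}_{S(j)})^\top\mathcal{P}^{\frac{1}{2}}_{S(j)}=\mathcal{P}_{S(j)},
    \qquad
    \mathcal{P}^{\frac{1}{2}}_{S(j)}(\mathcal{P}^{\frac{1}{2}}_{S(j)})^\top={\bf I}_j ,
$$
the second using ${\bm\Phi}^\top_{S(j)}{\bm\Phi}_{S(j)}={\bf K}_{S(j)}$; the same holds with $j$ replaced by $j+1$. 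These are precisely the facts already exploited in the proof of Lemma \ref{prop:ICML2023:approximate_w_t}, and for the present lemma only the first will be needed.

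For claim (i) I would use that $f_j=\mathcal{P}_{S(j)}f$ is by definition the projection of $f$ onto the column space of ${\bm\Phi}_{S(j)}$, so for every ${\bf x}$, $f_j({\bf x})=\langle\mathcal{P}_{S(j)}f,\phi({\bf x})\rangle_{\mathcal{H}}=(\mathcal{P}_{S(j)}f)^\top\phi({\bf x})$. Factoring $\mathcal{P}_{S(j)}=(\mathcal{P}^{\frac{1}{2}}_{S(j)})^\top\mathcal{P}^{\frac{1}{2}}_{S(j)}$ and regrouping turns this into $(\mathcal{P}^{\frac{1}{2}}_{S(j)}f)^\top(\mathcal{P}^{\frac{1}{2}}_{S(j)}\phi({\bf x}))={\bf w}^\top_j\phi_j({\bf x})$ with ${\bf w}_j=\mathcal{P}^{\frac{1}{2}}_{S(j)}f$ and $\phi_j({\bf x})=\mathcal{P}^{\frac{1}{2}}_{S(j)}\phi({\bf x})$, and identically for $j+1$. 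Claim (ii) then follows from the contraction property of a projection: since $\mathcal{P}_{S(j)}$ is orthogonal, $\Vert\mathcal{P}_{S(j)}f\Vert_{\mathcal{H}}\leq\Vert f\Vert_{\mathcal{H}}\leq U$ for $f\in\mathbb{H}$, while normalization $\kappa({\bf x}_t,{\bf x}_t)=1$ gives $\Vert\phi({\bf x}_t)\Vert_{\mathcal{H}}=1$. Hence, by claim (i) and Cauchy--Schwarz, $\vert{\bf w}^\top_j\phi_j({\bf x}_t)\vert=\vert f_j({\bf x}_t)\vert\leq\Vert\mathcal{P}_{S(j)}f\Vert_{\mathcal{H}}\Vert\phi({\bf x}_t)\Vert_{\mathcal{H}}\leq U$ for every $t$, i.e.\ ${\bf w}_j\in\cap^T_{t=1}\mathbb{W}_t$, and likewise for ${\bf w}_{j+1}$.

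The only step carrying any content is claim (iii), where the key will be the nesting $S(j)\subseteq S(j+1)$, which holds because $S(j+1)=S(j)\cup\{({\bf x}_t,y_t)\}$. This makes the column space of ${\bm\Phi}_{S(j)}$ a subspace of that of ${\bm\Phi}_{S(j+1)}$, so $\mathcal{P}_{S(j+1)}{\bm\Phi}_{S(j)}={\bm\Phi}_{S(j)}$; taking transposes and using symmetry of $\mathcal{P}_{S(j+1)}$ yields ${\bm\Phi}^\top_{S(j)}\mathcal{P}_{S(j+1)}={\bm\Phi}^\top_{S(j)}$, and therefore $\mathcal{P}^{\frac{1}{2}}_{S(j)}\mathcal{P}_{S(j+1)}=\mathcal{P}^{\frac{1}{2}}_{S(j)}$. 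Substituting ${\bf w}_{j+1}=\mathcal{P}^{\frac{1}{2}}_{S(j+1)}f$ and invoking the first identity for $j+1$,
$$
    \mathcal{P}^{\frac{1}{2}}_{S(j)}(\mathcal{P}^{\frac{1}{2}}_{S(j+1)})^\top{\bf w}_{j+1}
    =\mathcal{P}^{\frac{1}{2}}_{S(j)}\mathcal{P}_{S(j+1)}f
    =\mathcal{P}^{\frac{1}{2}}_{S(j)}f
    ={\bf w}_j ,
$$
which is exactly (iii). I expect no real obstacle beyond keeping the subspace-containment bookkeeping straight: once the $\mathcal{P}^{\frac{1}{2}}$ factorization and the nesting $S(j)\subseteq S(j+1)$ are in place, all three statements are essentially forced.
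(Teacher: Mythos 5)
Your proposal is correct and follows essentially the same route as the paper's proof: factor $\mathcal{P}_{S(j)}=(\mathcal{P}^{\frac{1}{2}}_{S(j)})^\top\mathcal{P}^{\frac{1}{2}}_{S(j)}$ to identify ${\bf w}_j=\mathcal{P}^{\frac{1}{2}}_{S(j)}f$, bound $\vert f^\top\mathcal{P}_{S(j)}\phi({\bf x}_t)\vert$ by Cauchy--Schwarz for (ii), and use $\mathcal{P}^{\frac{1}{2}}_{S(j)}\mathcal{P}_{S(j+1)}=\mathcal{P}^{\frac{1}{2}}_{S(j)}$ together with $(\mathcal{P}^{\frac{1}{2}}_{S(j+1)})^\top\mathcal{P}^{\frac{1}{2}}_{S(j+1)}=\mathcal{P}_{S(j+1)}$ for (iii). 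The only difference is cosmetic: you derive the nesting identity $\mathcal{P}^{\frac{1}{2}}_{S(j)}\mathcal{P}_{S(j+1)}=\mathcal{P}^{\frac{1}{2}}_{S(j)}$ explicitly from $S(j)\subseteq S(j+1)$, which the paper simply asserts.
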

    \begin{proof}[Proof of Lemma \ref{lemma:ICML2023:continuous_projection_1}]
        For any $f\in\mathbb{H}$,
        the projection of $f$ on the column space of ${\bm \Phi}_{S(j+1)}$ and ${\bm \Phi}_{S(j)}$ are
        $$
            f_{j+1} =\mathcal{P}_{S(j+1)}f,\quad f_j=\mathcal{P}_{S(j)}f=\mathcal{P}_{S(j)}\mathcal{P}_{S(j+1)}f
            =\mathcal{P}_{S(j)}f_{j+1}.
        $$
        We have
        $$
            f_{j}({\bf x}_t)=(\mathcal{P}_{S(j)}f)^\top\phi({\bf x}_t)
            =f^\top(\mathcal{P}^\frac{1}{2}_{S(j)})^\top\mathcal{P}^\frac{1}{2}_{S(j)}\phi({\bf x}_t)
            =(\mathcal{P}^\frac{1}{2}_{S(j)}f)^\top\phi_j({\bf x}_t)
            ={\bf w}^\top_j\phi_j({\bf x}_t).
        $$
        Thus we obtain
        $$
            {\bf w}_j=\mathcal{P}^{\frac{1}{2}}_{S(j)}f,\quad
            {\bf w}_{j+1}=\mathcal{P}^{\frac{1}{2}}_{S(j+1)}f,
        $$
        which concludes the first claim.\\
        For the second claim,
        we have
        $$
            \forall t\in[T],~\vert {\bf w}^\top_j\phi_j({\bf x}_t)\vert
            =\vert (\mathcal{P}^{\frac{1}{2}}_{S(j)}f)^\top\mathcal{P}^{\frac{1}{2}}_{S(j)}\phi({\bf x}_t)\vert
            =\vert f^\top\mathcal{P}_{S(j)}\phi({\bf x}_t)\vert
            \leq \Vert f\Vert_{\mathcal{H}}\leq U.
        $$
        Thus ${\bf w}_j\in \cap^T_{t=1}\mathbb{W}_t$. Similarly, we have ${\bf w}_{j+1}\in\cap^T_{t=1}\mathbb{W}_t$.

        Since $\mathcal{P}^{\frac{1}{2}}_{S(j)}=\mathcal{P}^{\frac{1}{2}}_{S(j)}\mathcal{P}_{S(j+1)}$,
        we have,
        $$
            {\bf w}_j=\mathcal{P}^{\frac{1}{2}}_{S(j)}\mathcal{P}_{S(j+1)}f
            =\mathcal{P}^{\frac{1}{2}}_{S(j)}(\mathcal{P}^{\frac{1}{2}}_{S(j+1)})^\top\mathcal{P}^{\frac{1}{2}}_{S(j+1)}f
            =\mathcal{P}^{\frac{1}{2}}_{S(j)}(\mathcal{P}^{\frac{1}{2}}_{S(j+1)})^\top{\bf w}_{j+1}
        $$
        which concludes the third claim.
    \end{proof}

    \begin{lemma}[\cite{Hazan2007Logarithmic}]
    \label{lemma:ICML2023:matrix_determinant}
        Let ${\bf u}_t\in\mathbb{R}^j$ for $t=1,\ldots,T$ be a sequence of vectors
        such that for some $r>0$,
        $\Vert{\bf u}_t\Vert\leq r$.
        Let $\mu>0$.
        Define ${\bf V}_t=\sum^t_{\tau=1}{\bf u}_\tau{\bf u}^\top_\tau+\mu{\bf I}$.
        Then
        $$
            \sum^T_{t=1}{\bf u}^\top_t{\bf V}^{-1}_t{\bf u}_t
            \leq\sum^T_{t=1}\ln\frac{\mathrm{det}({\bf V}_t)}{\mathrm{det}({\bf V}_{t-1})}
            =\ln\frac{\mathrm{det}({\bf V}_T)}{\mathrm{det}({\bf V}_0)}
            =\ln\mathrm{det}\left(\frac{1}{\mu}\sum^T_{\tau=1}{\bf u}_\tau{\bf u}^\top_\tau+{\bf I}\right),
        $$
        where ${\bf V}_0=\mu{\bf I}$.
    \end{lemma}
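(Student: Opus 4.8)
The plan is to prove the lemma in three movements: a single-step bound relating $\mathbf{u}_t^\top \mathbf{V}_t^{-1} \mathbf{u}_t$ to the log-determinant ratio, a telescoping sum, and a final simplification of the determinant. The structural fact I would exploit throughout is the rank-one recursion $\mathbf{V}_t = \mathbf{V}_{t-1} + \mathbf{u}_t \mathbf{u}_t^\top$, which lets me apply the matrix determinant lemma and the Sherman--Morrison identity to one and the same update.

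First I would isolate the per-step inequality $\mathbf{u}_t^\top \mathbf{V}_t^{-1} \mathbf{u}_t \le \ln\big(\det(\mathbf{V}_t)/\det(\mathbf{V}_{t-1})\big)$. Writing $z_t = \mathbf{u}_t^\top \mathbf{V}_{t-1}^{-1}\mathbf{u}_t \ge 0$ (each $\mathbf{V}_{t-1}$ is positive definite since $\mu>0$), the matrix determinant lemma gives $\det(\mathbf{V}_t) = \det(\mathbf{V}_{t-1})(1+z_t)$, so the ratio equals $1+z_t$. On the other side, Sherman--Morrison yields
\[
    \mathbf{V}_t^{-1} = \mathbf{V}_{t-1}^{-1} - \frac{\mathbf{V}_{t-1}^{-1}\mathbf{u}_t\mathbf{u}_t^\top\mathbf{V}_{t-1}^{-1}}{1+z_t},
\]
and contracting both sides with $\mathbf{u}_t$ collapses the quantity of interest to $\mathbf{u}_t^\top \mathbf{V}_t^{-1}\mathbf{u}_t = z_t - z_t^2/(1+z_t) = z_t/(1+z_t)$. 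The per-step claim thus reduces to the scalar inequality $z/(1+z) \le \ln(1+z)$ for $z \ge 0$, which I would verify by noting that both sides vanish at $z=0$ and that $\frac{d}{dz}\big[\ln(1+z) - z/(1+z)\big] = z/(1+z)^2 \ge 0$, so the difference is nondecreasing and hence nonnegative.

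Summing the per-step bound over $t=1,\ldots,T$ telescopes the right-hand side to $\ln\big(\det(\mathbf{V}_T)/\det(\mathbf{V}_0)\big)$. It then remains to rewrite this using $\mathbf{V}_0 = \mu\mathbf{I}$ and $\mathbf{V}_T = \sum_{\tau=1}^T \mathbf{u}_\tau\mathbf{u}_\tau^\top + \mu\mathbf{I}$: since $\det(\mu\mathbf{I}) = \mu^j$ in dimension $j$ and $\det(c\mathbf{A}) = c^j\det(\mathbf{A})$, the ratio equals $\det(\mu^{-1}\mathbf{V}_T) = \det\big(\mu^{-1}\sum_{\tau=1}^T \mathbf{u}_\tau\mathbf{u}_\tau^\top + \mathbf{I}\big)$, which is the claimed expression.

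Honestly, this is a standard log-determinant potential bound and I expect no genuine obstacle; the argument is essentially mechanical once the rank-one structure is used. The only point demanding care is matching the Sherman--Morrison algebra exactly to the form $z_t/(1+z_t)$ and invoking the sharp logarithmic inequality rather than the looser $z/(1+z)\le z$, which would fail to telescope into a determinant. I would also note that the hypothesis $\Vert\mathbf{u}_t\Vert\le r$ plays no role in the chain of equalities and the single inequality established here, so I would not invoke it; it is presumably retained for later use when this determinant is subsequently upper bounded in terms of the effective dimension.
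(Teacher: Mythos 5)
Your proof is correct. One point of comparison worth noting: the paper itself gives no proof of this lemma at all --- it is imported verbatim from Hazan et al.\ (2007) as a known result --- so there is no internal argument to measure yours against. Your route (matrix determinant lemma plus Sherman--Morrison on the rank-one update, reducing everything to $z/(1+z)\leq \ln(1+z)$) is a clean, self-contained derivation; the original Hazan et al.\ proof packages the same scalar inequality differently, using the fact that for positive definite $A\succeq B\succ 0$ one has $\mathrm{tr}\bigl(A^{-1}(A-B)\bigr)\leq \ln\bigl(\det(A)/\det(B)\bigr)$ (applied with $A={\bf V}_t$, $B={\bf V}_{t-1}$, so that ${\bf u}_t^\top{\bf V}_t^{-1}{\bf u}_t=\mathrm{tr}({\bf V}_t^{-1}{\bf u}_t{\bf u}_t^\top)$), which avoids computing ${\bf u}_t^\top{\bf V}_t^{-1}{\bf u}_t$ explicitly but rests on the same inequality $1-1/x\leq\ln x$ applied to eigenvalues. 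The two are essentially equivalent in content; yours has the small advantage of exhibiting the closed form ${\bf u}_t^\top{\bf V}_t^{-1}{\bf u}_t=z_t/(1+z_t)$, which makes the tightness of the step transparent. Your observation that the hypothesis $\Vert{\bf u}_t\Vert\leq r$ is never used is also correct: it plays no role in the chain of (in)equalities here, and indeed the paper only needs it downstream when the log-determinant is further bounded via the effective dimension.
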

    \begin{lemma}
    \label{lemma:ICML2023:continuous_covariance_matrix_projection}
        For any $j=1,\ldots,J$,
        $$
           \frac{\mathrm{Det}({\bf A}_j(s_{j}-1))}{\mathrm{Det}({\bf A}_{j-1}(s_{j}-1))}= \mu.
        $$
    \end{lemma}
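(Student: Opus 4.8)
The plan is to start from the equivalent recursive form established in Lemma~\ref{prop:ICML2023:approximate_A_t}, namely
$$
    {\bf A}_j(s_j-1)=\mu{\bf I}+Q_{j,j-1}\left({\bf A}_{j-1}(s_{j}-1)-\mu{\bf I}\right)Q^\top_{j,j-1},
$$
with $Q_{j,j-1}=\mathcal{P}^{\frac{1}{2}}_{S(j)}(\mathcal{P}^{\frac{1}{2}}_{S(j-1)})^\top\in\mathbb{R}^{j\times(j-1)}$. The whole computation hinges on a single structural fact: $Q_{j,j-1}$ has orthonormal columns, i.e. $Q^\top_{j,j-1}Q_{j,j-1}={\bf I}_{j-1}$.

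To establish this, I would first recall the partial-isometry identity $\mathcal{P}^{\frac{1}{2}}_{S(j-1)}(\mathcal{P}^{\frac{1}{2}}_{S(j-1)})^\top={\bf I}_{j-1}$, already used in the proof of Lemma~\ref{prop:ICML2023:approximate_w_t}. Then I would use that the algorithm guarantees the nesting $S(j-1)\subseteq S(j)$, so the column space of ${\bm \Phi}_{S(j-1)}$ is contained in that of ${\bm \Phi}_{S(j)}$ and hence $\mathcal{P}_{S(j)}$ acts as the identity on it. Since the rows of $\mathcal{P}^{\frac{1}{2}}_{S(j-1)}$ lie in the column space of ${\bm \Phi}_{S(j-1)}$, this yields $\mathcal{P}^{\frac{1}{2}}_{S(j-1)}\mathcal{P}_{S(j)}=\mathcal{P}^{\frac{1}{2}}_{S(j-1)}$. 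Combining the two facts gives
$$
    Q^\top_{j,j-1}Q_{j,j-1}
    =\mathcal{P}^{\frac{1}{2}}_{S(j-1)}\mathcal{P}_{S(j)}(\mathcal{P}^{\frac{1}{2}}_{S(j-1)})^\top
    =\mathcal{P}^{\frac{1}{2}}_{S(j-1)}(\mathcal{P}^{\frac{1}{2}}_{S(j-1)})^\top={\bf I}_{j-1}.
$$

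With $Q_{j,j-1}$ an isometry from $\mathbb{R}^{j-1}$ into $\mathbb{R}^{j}$, I would extend its columns by a single unit vector ${\bf q}\in\mathbb{R}^j$ orthogonal to them, so that $P:=[Q_{j,j-1},{\bf q}]\in\mathbb{R}^{j\times j}$ is orthogonal. Writing $B={\bf A}_{j-1}(s_j-1)-\mu{\bf I}$ and conjugating, the identities $Q^\top_{j,j-1}Q_{j,j-1}={\bf I}_{j-1}$ and ${\bf q}^\top Q_{j,j-1}={\bf 0}$ turn $P^\top Q_{j,j-1}BQ^\top_{j,j-1}P$ into the block matrix $\mathrm{diag}(B,0)$, whence
$$
    P^\top{\bf A}_j(s_j-1)P=\begin{pmatrix}\mu{\bf I}_{j-1}+B & {\bf 0}\\ {\bf 0} & \mu\end{pmatrix}
    =\begin{pmatrix}{\bf A}_{j-1}(s_j-1) & {\bf 0}\\ {\bf 0} & \mu\end{pmatrix}.
$$
Taking determinants and using $\det(P)=\pm1$ gives $\det({\bf A}_j(s_j-1))=\mu\det({\bf A}_{j-1}(s_j-1))$, which is the claim. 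The main obstacle is the isometry identity $Q^\top_{j,j-1}Q_{j,j-1}={\bf I}_{j-1}$: once one sees that it reduces to the column-space nesting $S(j-1)\subseteq S(j)$ together with the partial-isometry property of $\mathcal{P}^{\frac{1}{2}}_{S(j-1)}$, the determinant bookkeeping is immediate, with the extra factor $\mu$ arising precisely from the one dimension (the ${\bf q}$ direction) added when passing from $\mathbb{R}^{j-1}$ to $\mathbb{R}^{j}$.
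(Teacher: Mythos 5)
Your proof is correct, but it takes a genuinely different route from the paper's. The paper never touches the recursive form of Lemma~\ref{prop:ICML2023:approximate_A_t}; instead it expands both ${\bf A}_{j-1}(s_j-1)$ and ${\bf A}_j(s_j-1)$ from the sum definition \eqref{eq:ICML2023:approximate_A_t} as $\mu{\bf I}+\bar{{\bm \Phi}}_{S(j-1)}\bar{{\bm \Phi}}^\top_{S(j-1)}$ and $\mu{\bf I}+\bar{{\bm \Phi}}_{S(j)}\bar{{\bm \Phi}}^\top_{S(j)}$, proves that the two (large) Gram matrices $\bar{{\bm \Phi}}^\top_{S(j-1)}\bar{{\bm \Phi}}_{S(j-1)}$ and $\bar{{\bm \Phi}}^\top_{S(j)}\bar{{\bm \Phi}}_{S(j)}$ are literally equal, and then invokes the fact that ${\bf X}{\bf X}^\top$ and ${\bf X}^\top{\bf X}$ share nonzero eigenvalues, so the determinant ratio is $\mu^{j-k}/\mu^{j-1-k}=\mu$ from the dimension mismatch. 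You instead start from the recursion, prove the isometry property $Q^\top_{j,j-1}Q_{j,j-1}={\bf I}_{j-1}$, complete $Q_{j,j-1}$ to an orthogonal matrix $P$, and exhibit the explicit orthogonal similarity $P^\top{\bf A}_j(s_j-1)P=\mathrm{diag}({\bf A}_{j-1}(s_j-1),\mu)$. Both arguments ultimately rest on the same two facts — the partial isometry $\mathcal{P}^{\frac{1}{2}}_{S(j-1)}(\mathcal{P}^{\frac{1}{2}}_{S(j-1)})^\top={\bf I}$ and the nesting identity $\mathcal{P}^{\frac{1}{2}}_{S(j-1)}\mathcal{P}_{S(j)}=\mathcal{P}^{\frac{1}{2}}_{S(j-1)}$ coming from $S(j-1)\subseteq S(j)$, both of which the paper itself uses elsewhere — so there is no circularity in your reliance on Lemma~\ref{prop:ICML2023:approximate_A_t}, whose proof is independent of this determinant identity. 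What your route buys is a strictly stronger structural conclusion: orthogonal similarity shows that the full spectrum of ${\bf A}_j(s_j-1)$ is that of ${\bf A}_{j-1}(s_j-1)$ together with the single extra eigenvalue $\mu$, of which the determinant ratio is an immediate corollary; it is also shorter once Lemma~\ref{prop:ICML2023:approximate_A_t} is in hand. What the paper's route buys is self-containedness from the defining sums, avoiding the orthogonal-completion step and the need to verify the isometry property of $Q_{j,j-1}$ explicitly.
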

    \begin{proof}[Proof of Lemma \ref{lemma:ICML2023:continuous_covariance_matrix_projection}]
        For any $r\leq j$ and $t\in T_r$,
        let $\bar{\phi}({\bf x}_t)=\sqrt{\eta_t}g_r(t)\phi({\bf x}_t)$.
        Recalling that
        \begin{align*}
            {\bf A}_{j-1}(s_{j}-1)
            =&{\bf A}_{j-1}(s_{j-1}-1)
            +\sum_{t\in T_{j-1}}\eta_tg^2_{j-1}(t)\phi_{j-1}({\bf x}_t)\phi^\top_{j-1}({\bf x}_t)\\
            =&\mu{\bf I}+\sum^{j-2}_{r=1}\sum_{t\in T_r}\eta_tg^2_r(t)\tilde{\phi}_{j-1}({\bf x}_t)
            \tilde{\phi}^\top_{j-1}({\bf x}_t)
            +\sum_{t\in T_{j-1}}\eta_tg^2_{j-1}(t)\mathcal{P}^{\frac{1}{2}}_{S(j-1)}\phi({\bf x}_t)
            (\mathcal{P}^{\frac{1}{2}}_{S(j-1)}\phi({\bf x}_t))^\top\\
            =&\mu{\bf I}+\sum^{j-1}_{r=1}\sum_{t\in T_r}
            \mathcal{P}^{\frac{1}{2}}_{S(j-1)}\mathcal{P}_{S(r)}\bar{\phi}({\bf x}_t)
            (\mathcal{P}^{\frac{1}{2}}_{S(j-1)}\mathcal{P}_{S(r)}\bar{\phi}({\bf x}_t))^\top\\
            =&\mu{\bf I}+\bar{{\bm \Phi}}_{S(j-1)}\bar{{\bm \Phi}}^\top_{S(j-1)},
        \end{align*}
        where
        \begin{align*}
            \bar{{\bm \Phi}}_{S(j-1)}
            =&\mathcal{P}^{\frac{1}{2}}_{S(j-1)}
            \left[\left(\mathcal{P}_{S(r)}\bar{\phi}({\bf x}_t)\right)_{t\in T_r}\right]_{r\in[j-1]}
            \in\mathbb{R}^{(j-1)\times \sum^{j-1}_{r=1}\vert T_r\vert}.
        \end{align*}
        Similarly,
        we have
        \begin{align*}
            {\bf A}_j(s_{j}-1)
            =&\mu{\bf I}+\sum^{j-1}_{r=1}\sum_{t\in T_r}\eta_tg^2_r(t)\tilde{\phi}_{j}({\bf x}_t)
            \tilde{\phi}^\top_{j}({\bf x}_t)\\
            =&\mu{\bf I}+\sum^{j-1}_{r=1}\sum_{t\in T_r}
            \mathcal{P}^{\frac{1}{2}}_{S(j)}\mathcal{P}_{S(r)}\bar{\phi}({\bf x}_t)
            (\mathcal{P}^{\frac{1}{2}}_{S(j)}\mathcal{P}_{S(r)}\bar{\phi}({\bf x}_t))^\top\\
            =&\mu{\bf I}+\bar{{\bm \Phi}}_{S(j)}\bar{{\bm \Phi}}^\top_{S(j)},
        \end{align*}
        where we define
        $$
            \bar{{\bm \Phi}}_{S(j)}
            =\mathcal{P}^{\frac{1}{2}}_{S(j)}
            \left[\left(\mathcal{P}_{S(r)}\bar{\phi}({\bf x}_t)\right)_{t\in T_r}\right]_{r\in[j-1]}
            \in\mathbb{R}^{j\times \sum^{j-1}_{r=1}\vert T_r\vert}.
        $$
        We have the following two facts.
        \begin{align*}
            \mathrm{rank}(\bar{{\bm \Phi}}_{S(j-1)}\bar{{\bm \Phi}}^\top_{S(j-1)})=
            \mathrm{rank}(\bar{{\bm \Phi}}^\top_{S(j-1)}\bar{{\bm \Phi}}_{S(j-1)}),\quad
            \mathrm{rank}(\bar{{\bm \Phi}}_{S(j)}\bar{{\bm \Phi}}^\top_{S(j)})=
            \mathrm{rank}(\bar{{\bm \Phi}}^\top_{S(j)}\bar{{\bm \Phi}}_{S(j)}).
        \end{align*}
        We can prove
        \begin{align*}
            \bar{{\bm \Phi}}^\top_{S(j)}\bar{{\bm \Phi}}_{S(j)}
            -\bar{{\bm \Phi}}^\top_{S(j-1)}\bar{{\bm \Phi}}_{S(j-1)}
            =&
            \left(\left[\left(\mathcal{P}_{S(r)}\bar{\phi}({\bf x}_t)\right)_{t\in T_r}\right]_{r\in[j-1]}\right)^\top
            \mathcal{P}_{S(j)}
            \left[\left(\mathcal{P}_{S(r)}\bar{\phi}({\bf x}_t)\right)_{t\in T_r}\right]_{r\in[j-1]}-\\
            &\left(\left[\left(\mathcal{P}_{S(r)}\bar{\phi}({\bf x}_t)\right)_{t\in T_r}\right]_{r\in[j-1]}\right)^\top
            \mathcal{P}_{S(j-1)}
            \left[\left(\mathcal{P}_{S(r)}\bar{\phi}({\bf x}_t)\right)_{t\in T_r}\right]_{r\in[j-1]}\\
            =&\left(\left[\left(\mathcal{P}_{S(r)}\bar{\phi}({\bf x}_t)\right)_{t\in T_r}\right]_{r\in[j-1]}\right)^\top
            \left[\left(\mathcal{P}_{S(r)}\bar{\phi}({\bf x}_t)\right)_{t\in T_r}\right]_{r\in[j-1]}-\\
            &\left(\left[\left(\mathcal{P}_{S(r)}\bar{\phi}({\bf x}_t)\right)_{t\in T_r}\right]_{r\in[j-1]}\right)^\top
            \left[\left(\mathcal{P}_{S(r)}\bar{\phi}({\bf x}_t)\right)_{t\in T_r}\right]_{r\in[j-1]}\\
            =&0,
        \end{align*}
        in which we use the following facts
        \begin{align*}
            &\mathcal{P}_{S(j)}=(\mathcal{P}^{\frac{1}{2}}_{S(j)})^\top\mathcal{P}^{\frac{1}{2}}_{S(j)},\\
            &\mathcal{P}_{S(r)}\mathcal{P}_{S(j)}
            ={\bm \Phi}_{S(r)}({\bm \Phi}^\top_{S(r)}{\bm \Phi}_{S(r)}){\bm \Phi}^\top_{S(r)}\mathcal{P}_{S(j)}
            =\mathcal{P}_{S(r)}, r=1,\ldots,j-1,\\
            &\mathcal{P}_{S(r)}\mathcal{P}_{S(j-1)}
            ={\bm \Phi}_{S(r)}({\bm \Phi}^\top_{S(r)}{\bm \Phi}_{S(r)}){\bm \Phi}^\top_{S(r)}\mathcal{P}_{S(j-1)}
            =\mathcal{P}_{S(r)}, r=1,\ldots,j-1.
        \end{align*}
        It must be that $\bar{{\bm \Phi}}_{S(j)}\bar{{\bm \Phi}}^\top_{S(j)}$
        and $\bar{{\bm \Phi}}_{S(j-1)}\bar{{\bm \Phi}}^\top_{S(j-1)}$
        have the same non-zero eigenvalues,
        denoted by $\bar{\lambda}_1,\bar{\lambda}_2,\ldots,\bar{\lambda}_k$, $k\leq j-1$.
        We have
        $$
            \frac{\mathrm{Det}({\bf A}_j(s_{j}-1))}{\mathrm{Det}({\bf A}_{j-1}(s_{j}-1))}
            =\frac{\prod^j_{r=1}(\mu+\bar{\lambda}_r)}{\prod^{j-1}_{r=1}(\mu+\bar{\lambda}_r)}
            =\frac{\prod^k_{r=1}(\mu+\bar{\lambda}_r)\cdot\mu^{j-k}}{\prod^k_{r=1}(\mu+\bar{\lambda}_r)\cdot\mu^{j-k-1}}
            =\mu,
        $$
        which concludes the proof.
    \end{proof}

    \begin{proof}[Proof of Theorem \ref{thm:ICML2023:NONS-ALD:regret_bound}]
    Let $f_j$ be the projection of $f\in\mathbb{H}$ onto the column space of ${\bm \Phi}_{S(j)}$,
    $j=1,\ldots,J$.
    We decompose the regret into two components.
    \begin{align*}
    \forall f\in\mathbb{H},~\mathrm{Reg}(f)
        =&\sum^{J}_{j=1}\sum_{t\in T_j}[\ell(\hat{y}_t,y_t)-\ell(f({\bf x}_t),y_t)]\\
        =&\sum^{J}_{j=1}\sum_{t\in T_j}[\ell(\hat{y}_t,y_t)-\ell(f_j({\bf x}_t),y_t)]
        +\sum^{J}_{j=1}\sum_{t\in T_j}[\ell(f_j({\bf x}_t),y_t)-\ell(f({\bf x}_t),y_t)]\\
        =&\underbrace{\sum^{J}_{j=1}
        \sum_{t\in T_j}[\ell(\hat{y}_t,y_t)-\ell({\bf w}^\top_j\phi_j({\bf x}_t),y_t)]}_{\mathcal{T}_1}
        +\underbrace{\sum^{J}_{j=1}\sum_{t\in T_j}[\ell(f_j({\bf x}_t),y_t)-\ell(f({\bf x}_t),y_t)]}_{\mathcal{T}_2}.
    \end{align*}
    Lemma \ref{lemma:ICML2023:continuous_projection_1} proved
    that there is a ${\bf w}_j\in\mathbb{R}^j$ such that $f_j({\bf x}_t)={\bf w}^\top_j\phi_j({\bf x}_t)$.

\subsection{Analyze $\mathcal{T}_1$}

    We consider a fixed epoch $T_j$.
    At any round $t\in T_j$,
    the instantaneous regret can be upper bounded as follows
    \begin{align*}
        &\ell(\hat{y}_t,y_t)-\ell({\bf w}^\top_j\phi_j({\bf x}_t),y_t)\\
        =&(\hat{y}_t-y_t)^2-({\bf w}^\top_j\phi_j({\bf x}_t)-y_t)^2\\
        =&2(\hat{y}_t-y_t)(\hat{y}_t-{\bf w}^\top_j\phi_j({\bf x}_t))
        -(\hat{y}_t-{\bf w}^\top_j\phi_j({\bf x}_t))^2\\
        =&\langle \nabla \ell({\bf w}^\top_j(t)\phi_j({\bf x}_t)),{\bf w}_j(t)-{\bf w}_j\rangle
        -\frac{1}{4(\hat{y}_t-y_t)^2}
        \left(\langle \nabla \ell({\bf w}^\top_j(t)\phi_j({\bf x}_t)),
        {\bf w}_j(t)-{\bf w}_j\rangle\right)^2\\
        \leq&\langle \nabla \ell({\bf w}^\top_j(t)\phi_j({\bf x}_t)),{\bf w}_j(t)-{\bf w}_j\rangle
        -\frac{1}{8(U^2+Y^2)}
        \left(\langle \nabla \ell({\bf w}^\top_j(t)\phi_j({\bf x}_t)),
        {\bf w}_j(t)-{\bf w}_j\rangle\right)^2.
    \end{align*}
    For simplicity,
    denote by $\sigma=\frac{1}{8(U^2+Y^2)}$ and
    $\nabla_j(t)=\nabla \ell({\bf w}^\top_j(t)\phi_j({\bf x}_t))=\ell'(\hat{y}_t,y_t)\phi_j({\bf x}_t)$. \\
    Lemma \ref{lemma:ICML2023:continuous_projection_1}
    has proved that ${\bf w}_j\in\mathbb{W}_{t+1}$.
    Using the property of projection,
    we have
    \begin{align*}
        &\Vert {\bf w}_j(t+1)-{\bf w}_j\Vert^2_{{\bf A}_j(t)}-\Vert {\bf w}_j(t)-{\bf w}_j\Vert^2_{{\bf A}_j(t)}\\
        \leq&\Vert \tilde{{\bf w}}_j(t+1)-{\bf w}_j\Vert^2_{{\bf A}_j(t)}
        -\Vert {\bf w}_j(t)-{\bf w}_j\Vert^2_{{\bf A}_j(t)}\\
        =&\Vert {\bf w}_j(t)-{\bf A}^{-1}_j(t)\nabla_j(t)-{\bf w}_j\Vert^2_{{\bf A}_j(t)}
        -\Vert {\bf w}_j(t)-{\bf w}_j\Vert^2_{{\bf A}_j(t)}\\
        =&-2\langle {\bf w}_j(t)-{\bf w}_j,{\bf A}^{-1}_j(t)\nabla_j(t)\rangle_{{\bf A}_j(t)}
        +\Vert {\bf A}^{-1}_j(t)\nabla_j(t)\Vert^2_{{\bf A}_j(t)}\\
        =&-2\langle {\bf w}_j(t)-{\bf w}_j,\nabla_j(t)\rangle
        +\nabla^\top_j(t){\bf A}^{-1}_j(t)\nabla_j(t).
    \end{align*}
    Let $\eta_t=2\sigma$.
    Rearranging terms and summing over $t\in T_j=\{s_j,s_j+1,\ldots,s_{j+1}-1\}$ gives
    \begin{align*}
       &\sum^{s_{j+1}-1}_{t=s_j}\left(\langle {\bf w}_j(t)-{\bf w}_j,\nabla_j(t)\rangle
       - \sigma\left(\langle \nabla_j(t),{\bf w}_j(t)-{\bf w}_j\rangle\right)^2\right)\\
        \leq&\sum^{s_{j+1}-1}_{t=s_j}\left(\frac{\Vert {\bf w}_j(t)-{\bf w}_j\Vert^2_{{\bf A}_j(t)}
        -\Vert {\bf w}_j(t+1)-{\bf w}_j\Vert^2_{{\bf A}_j(t)}}{2}
        +\frac{\nabla^\top_j(t){\bf A}^{-1}_j(t)\nabla_j(t)}{2}
        -\sigma\Vert{\bf w}_j(t)-{\bf w}_j\Vert^2_{\nabla_j(t)\nabla^\top_j(t)}\right)\\
        =&\frac{\Vert {\bf w}_j(s_j)-{\bf w}_j\Vert^2_{{\bf A}_j(s_j)}}{2}-
        \frac{\Vert {\bf w}_j(s_{j+1})-{\bf w}_j\Vert^2_{{\bf A}_j(s_{j+1}-1)}}{2}+\\
        &\sum^{s_{j+1}-2}_{t=s_j}\frac{\Vert {\bf w}_j(t+1)-{\bf w}_j\Vert^2_{{\bf A}_j(t+1)}
        -\Vert {\bf w}_j(j+1)-{\bf w}_j\Vert^2_{{\bf A}_j(t)}}{2}
        +\sum^{s_{j+1}-1}_{t=s_j}\frac{\nabla^\top_j(t){\bf A}^{-1}_j(t)\nabla_j(t)}{2}-\\
        &\sum^{s_{j+1}-2}_{t=s_j}
        \sigma\Vert{\bf w}_j(t+1)-{\bf w}_j\Vert^2_{\nabla_j(j+1)\nabla^\top_j(j+1)}-
        \sigma\Vert{\bf w}_j(s_j)-{\bf w}_j\Vert^2_{\nabla_j(s_j)\nabla^\top_j(s_j)}\\
        =&\sum^{s_{t+1}-1}_{t=s_j}\frac{\nabla^\top_j(t){\bf A}^{-1}_j(t)\nabla_j(t)}{2}
        +\frac{\Vert {\bf w}_j(s_j)-{\bf w}_j\Vert^2_{{\bf A}_j(s_j-1)}}{2}-
        \frac{\Vert {\bf w}_j(s_{j+1})-{\bf w}_j\Vert^2_{{\bf A}_j(s_{j+1}-1)}}{2},
    \end{align*}
    where we use the following two facts
    \begin{align*}
        {\bf A}_j(t+1)=&{\bf A}_j(t)+2\sigma\nabla_j(t+1)\nabla^\top_j(t+1),\\
        {\bf A}_j(s_j-1)=&{\bf A}_j(s_j)-2\sigma\nabla_j(s_j)\nabla^\top_j(s_j).
    \end{align*}
    Summing over $j=1,2,\ldots,J$,
    we obtain
    \begin{align*}
        \mathcal{T}_1\leq \underbrace{\sum^J_{j=1}\sum^{s_{j+1}-1}_{t=s_j}
        \frac{\nabla^\top_j(t){\bf A}^{-1}_j(t)\nabla_j(t)}{2}}_{\mathcal{T}_{1,1}}
        +\underbrace{\sum^J_{j=1}\frac{\Vert {\bf w}_j(s_j)-{\bf w}_j\Vert^2_{{\bf A}_j(s_j-1)}}{2}-
        \frac{\Vert {\bf w}_j(s_{j+1})-{\bf w}_j\Vert^2_{{\bf A}_j(s_{j+1}-1)}}{2}}_{\mathcal{T}_{1,2}}.
    \end{align*}

    The key of our analysis is to prove tighter upper bounds on
    $\mathcal{T}_{1,2}$ and $\mathcal{T}_{1,1}$ using our initial configurations
    in Lemma \ref{prop:ICML2023:approximate_A_t}
    and Lemma \ref{prop:ICML2023:approximate_w_t}.
    We first give some high-level explanations on why our analysis can give tighter regret bound.

    The analysis of PROS-N-KONS \cite{Calandriello2017Efficient}
    initializes ${\bf w}_j(s_j)={\bf 0}\in\mathbb{R}^j$
    and ${\bf A}_j(s_j-1)=\alpha{\bf I}\in\mathbb{R}^{j\times j}$.
    A trivial upper bound on $\mathcal{T}_1$ can be derived, i.e.,
    \begin{align*}
        \mathcal{T}_1\leq J\cdot\max_{j=1,\ldots,J}\sum^{s_{j+1}-1}_{t=s_j}
        \frac{\nabla^\top_j(t){\bf A}^{-1}_j(t)\nabla_j(t)}{2}
        +\sum^J_{j=1}\frac{\Vert{\bf w}_j\Vert^2_{{\bf A}_j(s_j-1)}}{2}.
    \end{align*}
    It is naturally that the regret bound is linear with $J$.
    The analysis can not be improved unless we reset the initial configurations
    ${\bf w}_j(s_j)$ and ${\bf A}_j(s_j-1)$.
    Intuitively,
    there is a negative term
    $-\Vert {\bf w}_j(s_{j+1})-{\bf w}_j\Vert^2_{{\bf A}_j(s_{j+1}-1)}$ in $\mathcal{T}_{1,2}$.
    Our analysis will use this negative term to cancel with the next positive term
    $\Vert {\bf w}_{j+1}(s_{j+1})-{\bf w}_{j+1}\Vert^2_{{\bf A}_{j+1}(s_{j+1}-1)}$.
    To this end,
    we must carefully design ${\bf w}_{j+1}(s_{j+1})$.
    Finally,
    we will prove $\mathcal{T}_{1,2}=\frac{1}{2}\Vert f\Vert^2_{\mathcal{H}}$
    which is independent of $J$.
    Similar idea is used to analyze $\mathcal{T}_{1,1}$.

    We first analyze $\mathcal{T}_{1,2}$ and then analyze $\mathcal{T}_{1,1}$.

\subsubsection{Analyzing $\mathcal{T}_{1,2}$}

    Rearranging terms yields
    \begin{align*}
        \mathcal{T}_{1,2}=&\frac{\Vert {\bf w}_1(s_1)-{\bf w}_1\Vert^2_{{\bf A}_1(s_1-1)}}{2}
        -\frac{\Vert {\bf w}_J(s_{J+1})-{\bf w}_J\Vert^2_{{\bf A}_J(s_{J+1}-1)}}{2}+\\
        &\frac{1}{2}\sum^J_{j=1}\left[\Vert {\bf w}_{j+1}(s_{j+1})-{\bf w}_{j+1}\Vert^2_{{\bf A}_{j+1}(s_{j+1}-1)}-
        \Vert {\bf w}_j(s_{j+1})-{\bf w}_j\Vert^2_{{\bf A}_j(s_{j+1}-1)}\right].
    \end{align*}
    To upper bound the second term,
    the key is to analyze the relation between ${\bf A}_{j+1}(s_{j+1}-1)$
    and ${\bf A}_{j}(s_{j+1}-1)$.
    For any $r\leq j$ and $t\in T_r$,
    let
    $\bar{{\bm \Phi}}_{T_r}=(\bar{\phi}({\bf x}_t))_{t\in T_r}$
        where $\bar{\phi}({\bf x}_t)=\sqrt{\eta_t}g_r(t)\phi({\bf x}_t)=\sqrt{2\sigma}g_r(t)\phi({\bf x}_t)$.
    According to \eqref{eq:ICML2023:approximate_A_t},
    we have
    \begin{align*}
        {\bf A}_{j+1}(s_{j+1}-1)
        =&\mu{\bf I}+\sum^{j}_{r=1}\sum_{t\in T_r}\eta_tg^2_r(t)
        \mathcal{P}^{\frac{1}{2}}_{S(j+1)}{\bm \Phi}_{S(r)}{\bf \beta}^\ast_r(t)
        ({\bm \Phi}_{S(r)}{\bf \beta}^\ast_r(t))^\top(\mathcal{P}^{\frac{1}{2}}_{S(j+1)})^\top\\
        =&\mu{\bf I}+\sum^{j}_{r=1}\sum_{t\in T_r}\eta_tg^2_r(t)
        \mathcal{P}^{\frac{1}{2}}_{S(j+1)}\mathcal{P}_{S(r)}\phi({\bf x}_t)
        (\mathcal{P}_{S(r)}\phi({\bf x}_t))^\top(\mathcal{P}^{\frac{1}{2}}_{S(j+1)})^\top\\
        =&\mathcal{P}^\frac{1}{2}_{S(j+1)}\left[\mu{\bf I}+\left(
        \mathcal{P}_{S(r)}\bar{{\bm \Phi}}_{T_r}\right)_{r\in[j]}
        \left(\mathcal{P}_{S(r)}\bar{{\bm \Phi}}_{T_r}\right)^\top_{r\in[j]}\right]
        (\mathcal{P}^\frac{1}{2}_{S(j+1)})^\top,\\
        {\bf A}_{j}(s_{j+1}-1)
        =&\mu{\bf I}+\sum^{j}_{r=1}\sum_{t\in T_r}\eta_tg^2_r(t)
        \mathcal{P}^{\frac{1}{2}}_{S(j)}{\bm \Phi}_{S(r)}{\bf \beta}^\ast_r(t)
        ({\bm \Phi}_{S(r)}{\bf \beta}^\ast_r(t))^\top(\mathcal{P}^{\frac{1}{2}}_{S(j)})^\top\\
        =&\mathcal{P}^\frac{1}{2}_{S(j)}\left[\mu{\bf I}+\left(\mathcal{P}_{S(r)}
        \bar{{\bm \Phi}}_{T_r}\right)_{r\in[j]}
        \left(\mathcal{P}_{S(r)}\bar{{\bm \Phi}}_{T_r}\right)^\top_{r\in[j]}\right]
        (\mathcal{P}^\frac{1}{2}_{S(j)})^\top.
    \end{align*}
    For simplicity,
    let $\Delta={\bf w}_{j+1}(s_{j+1})-{\bf w}_{j+1}$.
    According to Lemma \ref{lemma:ICML2023:continuous_projection_1} and
    Lemma \ref{prop:ICML2023:approximate_w_t},
    we obtain
    \begin{align*}
        &\Vert {\bf w}_j(s_{j+1})-{\bf w}_j\Vert^2_{{\bf A}_j(s_{j+1}-1)}\\
        =&\Vert \mathcal{P}^{\frac{1}{2}}_{S(j)}(\mathcal{P}^{\frac{1}{2}}_{S(j+1)})^\top
        ({\bf w}_{j+1}(s_{j+1})-{\bf w}_{j+1})\Vert^2_{{\bf A}_j(s_{j+1}-1)}\\
        =&\Delta^\top\mathcal{P}^{\frac{1}{2}}_{S(j+1)}(\mathcal{P}^{\frac{1}{2}}_{S(j)})^\top
        {\bf A}_j(s_{j+1}-1)\mathcal{P}^{\frac{1}{2}}_{S(j)}(\mathcal{P}^{\frac{1}{2}}_{S(j+1)})^\top\Delta\\
        =&\Delta^\top\mathcal{P}^{\frac{1}{2}}_{S(j+1)}(\mathcal{P}^{\frac{1}{2}}_{S(j)})^\top
        \mathcal{P}^\frac{1}{2}_{S(j)}\left[\mu{\bf I}+\left(\mathcal{P}_{S(r)}
        \bar{{\bm \Phi}}_{T_r}\right)_{r\in[j]}
        \left(\mathcal{P}_{S(r)}
        \bar{{\bm \Phi}}_{T_r}\right)^\top_{r\in[j]}\right](\mathcal{P}^\frac{1}{2}_{S(j)})^\top
        \mathcal{P}^{\frac{1}{2}}_{S(j)}(\mathcal{P}^{\frac{1}{2}}_{S(j+1)})^\top\Delta\\
        =&\Delta^\top\mathcal{P}^{\frac{1}{2}}_{S(j+1)}
        \mathcal{P}_{S(j)}\left[\mu{\bf I}+\left(\mathcal{P}_{S(r)}
        \bar{{\bm \Phi}}_{T_r}\right)_{r\in[j]}
        \left(\mathcal{P}_{S(r)}\bar{{\bm \Phi}}_{T_r}\right)^\top_{r\in[j]}\right]
        \mathcal{P}_{S(j)}(\mathcal{P}^{\frac{1}{2}}_{S(j+1)})^\top\Delta\\
        =&\Delta^\top\mathcal{P}^{\frac{1}{2}}_{S(j+1)}
        \left[\mu{\bf I}+\left(\mathcal{P}_{S(r)}
        \bar{{\bm \Phi}}_{T_r}\right)_{r\in[j]}
        \left(\mathcal{P}_{S(r)}\bar{{\bm \Phi}}_{T_r}\right)^\top_{r\in[j]}
        +\mu\mathcal{P}_{S(j)}-\mu{\bf I}\right]
        (\mathcal{P}^{\frac{1}{2}}_{S(j+1)})^\top\Delta\\
        =&\Vert {\bf w}_{j+1}(s_{j+1})-{\bf w}_{j+1}\Vert^2_{{\bf A}_{j+1}(s_{j+1}-1)}
        +\mu\Delta^\top\mathcal{P}^{\frac{1}{2}}_{S(j+1)}
        \left(\mathcal{P}_{S(j)}-{\bf I}\right)
        (\mathcal{P}^{\frac{1}{2}}_{S(j+1)})^\top\Delta\\
        =&\Vert {\bf w}_{j+1}(s_{j+1})-{\bf w}_{j+1}\Vert^2_{{\bf A}_{j+1}(s_{j+1}-1)}
        -\mu\Delta^\top\mathcal{P}^{\frac{1}{2}}_{S(j+1)}
        \left({\bf I}-\mathcal{P}_{S(j)}\right)^\top\left({\bf I}-\mathcal{P}_{S(j)}\right)
        (\mathcal{P}^{\frac{1}{2}}_{S(j+1)})^\top\Delta\\
        =&\Vert {\bf w}_{j+1}(s_{j+1})-{\bf w}_{j+1}\Vert^2_{{\bf A}_{j+1}(s_{j+1}-1)}
        -\mu\left\Vert\left({\bf I}-\mathcal{P}_{S(j)}\right)
        (\mathcal{P}^{\frac{1}{2}}_{S(j+1)})^\top\Delta\right\Vert^2\\
        =&\Vert {\bf w}_{j+1}(s_{j+1})-{\bf w}_{j+1}\Vert^2_{{\bf A}_{j+1}(s_{j+1}-1)}
        -\mu\left\Vert\left({\bf I}-\mathcal{P}_{S(j)}\right)
        (\mathcal{P}^{\frac{1}{2}}_{S(j+1)})^\top(\mathcal{P}^{\frac{1}{2}}_{S(j+1)}(\mathcal{P}^{\frac{1}{2}}_{S(j)})^\top{\bf w}_{j}(s_{j+1})
        -\mathcal{P}^{\frac{1}{2}}_{S(j+1)}f)\right\Vert^2\\
        =&\Vert {\bf w}_{j+1}(s_{j+1})-{\bf w}_{j+1}\Vert^2_{{\bf A}_{j+1}(s_{j+1}-1)}
        -\mu\left\Vert\left({\bf I}-\mathcal{P}_{S(j)}\right)
        (\mathcal{P}_{S(j+1)}(\mathcal{P}^{\frac{1}{2}}_{S(j)})^\top{\bf w}_{j}(s_{j+1})
        -\mathcal{P}_{S(j+1)}f)\right\Vert^2\\
        =&\Vert {\bf w}_{j+1}(s_{j+1})-{\bf w}_{j+1}\Vert^2_{{\bf A}_{j+1}(s_{j+1}-1)}
        -\mu\left\Vert\left(\mathcal{P}_{S(j)}-{\bf I}\right)\mathcal{P}_{S(j+1)}f\right\Vert^2,
    \end{align*}
    where the last but one equality satisfies
    $$
        \left({\bf I}-\mathcal{P}_{S(j)}\right)
        \mathcal{P}_{S(j+1)}(\mathcal{P}^{\frac{1}{2}}_{S(j)})^\top
        =\mathcal{P}_{S(j+1)}(\mathcal{P}^{\frac{1}{2}}_{S(j)})^\top
        -\mathcal{P}_{S(j)}(\mathcal{P}^{\frac{1}{2}}_{S(j)})^\top
        =(\mathcal{P}^{\frac{1}{2}}_{S(j)})^\top
        -\mathcal{P}_{S(j)}(\mathcal{P}^{\frac{1}{2}}_{S(j)})^\top
        =0.
    $$
    Thus we can obtain
    \begin{align*}
        \mathcal{T}_{1,2}
        \leq& \frac{1}{2}\left(\Vert {\bf w}_1(s_1)-{\bf w}_1\Vert^2_{{\bf A}_1(s_1-1)}
        +\sum^J_{j=1}\mu\left\Vert\left(\mathcal{P}_{S(j)}-{\bf I}\right)
        \mathcal{P}_{S(j+1)}f\right\Vert^2\right)\\
        =& \frac{1}{2}\left(\Vert {\bf w}_1\Vert^2_{\mu{\bf I}}
        +\sum^J_{j=1}\mu f^\top(\mathcal{P}_{S(j+1)}-\mathcal{P}_{S(j)})f\right)\\
        \leq& \frac{1}{2}\left(\Vert {\bf w}_1\Vert^2_{\mu{\bf I}}+
        \mu f^\top(\mathcal{P}_{S(J+1)}-\mathcal{P}_{S(1)})f\right)\\
        \leq& \frac{\mu}{2}\left(\Vert {\bf w}_1\Vert^2_2+ \Vert f\Vert^2_{\mathcal{H}}
        - f^\top\mathcal{P}_{S(1)}f\right)\\
        \leq&\frac{\mu}{2}\Vert f\Vert^2_{\mathcal{H}},
    \end{align*}
    where ${\bf w}_1(s_1)={\bf 0}$, ${\bf A}_1(s_1-1)=\mu{\bf I}$ and ${\bf w}_1=\mathcal{P}_{S(1)}f$.\\

\subsubsection{analyzing $\mathcal{T}_{1,1}$}

    Recalling that
    \begin{align*}
        \mathcal{T}_{1,1}=
        \sum^{J}_{j=1}\sum^{s_{j+1}-1}_{t=s_j}\frac{\nabla^\top_j(t){\bf A}^{-1}_j(t)\nabla_j(t)}{2},
    \end{align*}
    where $\nabla_j(t)=\ell'(\hat{y}_t,y_t)\phi_j({\bf x}_t), t\in T_j$.
    According to \eqref{eq:ICML2023:tilde_phi},
    we have
    $$
        \forall t\in T_j,\quad\phi_j({\bf x}_t)=\mathcal{P}^{\frac{1}{2}}_{S(j)}\phi({\bf x}_t)=
        \mathcal{P}^{\frac{1}{2}}_{S(j)}\mathcal{P}_{S(j)}\phi({\bf x}_t)=
        \mathcal{P}^{\frac{1}{2}}_{S(j)}{\bm \Phi}_{S(j)}{\bm \beta}^\ast_j(t)
        =\tilde{\phi}_j({\bf x}_t).
    $$
    For any $r\leq j, t\in T_r$,
    denote by $\tilde{\nabla}_j(t)=g_r(t)\tilde{\phi}_j({\bf x}_t)$.
    We can rewrite ${\bf A}_j(t)$ as follows
    $$
        {\bf A}_j(t)
        ={\bf A}_{j}(s_j-1)+\sum^t_{\tau=s_j}\eta_{\tau}
        g^2_j(\tau)\phi_j({\bf x}_\tau)\phi^\top_j({\bf x}_\tau)
        =\mu{\bf I}+2\sigma\sum^{j-1}_{r=1}\sum_{\tau\in T_r}
        \tilde{\nabla}_j(\tau)\tilde{\nabla}^\top_j(\tau)
        +2\sigma\sum^{t}_{\tau=s_j}\tilde{\nabla}_j(\tau)\tilde{\nabla}^\top_j(\tau).
    $$
    Using Lemma \ref{lemma:ICML2023:matrix_determinant},
    we obtain
    \begin{align*}
        \sum^{s_{j+1}-1}_{t=s_j}\nabla^\top_j(t){\bf A}^{-1}_j(t)\nabla_j(t)
        =\sum^{s_{j+1}-1}_{t=s_j}\tilde{\nabla}^\top_j(t){\bf A}^{-1}_j(t)\tilde{\nabla}_j(t)
        \leq\frac{1}{2\sigma}\sum^{s_{j+1}-1}_{t=s_j}\ln\frac{\mathrm{Det}({\bf A}_j(t))}{\mathrm{Det}({\bf A}_j(t-1))}
        =\frac{\ln\frac{\mathrm{Det}({\bf A}_j(s_{j+1}-1))}{\mathrm{Det}({\bf A}_j(s_j-1))}}{2\sigma}.
    \end{align*}
    Summing over $j=1\ldots,J$ yields
    \begin{align}
        \sum^J_{j=1}\sum^{s_{j+1}-1}_{t=s_j}\nabla^\top_j(t){\bf A}^{-1}_j(t)\nabla_j(t)
        =&\frac{1}{2\sigma}
        \sum^J_{j=1}\ln\frac{\mathrm{Det}({\bf A}_j(s_{j+1}-1))}{\mathrm{Det}({\bf A}_j(s_j-1))}\nonumber\\
        =&\frac{1}{2\sigma}
        \ln\prod^J_{j=1}\frac{\mathrm{Det}({\bf A}_j(s_{j+1}-1))}{\mathrm{Det}({\bf A}_j(s_j-1))}\nonumber\\
        =&\frac{1}{2\sigma}
        \ln\frac{1}{\mathrm{Det}({\bf A}_1(s_1-1))}\cdot
        \prod^J_{j=2}\frac{\mathrm{Det}({\bf A}_{j-1}(s_{j}-1))}{\mathrm{Det}({\bf A}_j(s_{j}-1))}
        \cdot \mathrm{Det}({\bf A}_J(s_{J+1}-1))\nonumber\\
        \overbrace{=}^{(\ast)}&\frac{1}{2\sigma}\ln\frac{\mathrm{Det}({\bf A}_J(s_{J+1}-1))}
        {\mu^{J-1}\mathrm{Det}({\bf A}_1(s_1-1))}\nonumber\\
        \overbrace{=}^{(\ast\ast)}&\frac{1}{2\sigma}\ln\mathrm{Det}\left(\frac{1}{\mu}\sum^J_{j=1}
        \sum_{t\in T_j}2\sigma\tilde{\nabla}_J(t)\tilde{\nabla}^\top_J(t)+{\bf I}\right)
        \label{eq:ICML2023:regret_continuous_updating_A_t}
    \end{align}
    where $(\ast)$ comes from Lemma \ref{lemma:ICML2023:continuous_covariance_matrix_projection},
    and $(\ast\ast)$ comes from ${\bf A}_1(s_1-1)=\mu$.\\
    For simplicity, let
    $$
        \tilde{{\bm \Phi}}=\sqrt{2\sigma}\left[(\tilde{\nabla}_J(t))_{t\in T_j}\right]_{j\in[J]}
        \in\mathbb{R}^{J\times T},\quad
        \bar{\bm \Phi}=\sqrt{2\sigma}\left[(g_j(t)\phi({\bf x}_t))_{t\in T_j}\right]_{j\in[J]}
        \in\mathbb{R}^{n\times T}.
    $$
    Using the second statement of Lemma \ref{lemma:ICML2023:pointwise_kernel_approximate_error},
    we can obtain
    $$
        \left\Vert \tilde{{\bm \Phi}}^\top\tilde{{\bm \Phi}}
        -\bar{\bm \Phi}^\top\bar{\bm \Phi}\right\Vert_2
        \leq \left\Vert \tilde{{\bm \Phi}}^\top\tilde{{\bm \Phi}}
        -\bar{\bm \Phi}^\top\bar{\bm \Phi}\right\Vert_F
        \leq \sqrt{2\sigma}\cdot
        \sqrt{T^2\cdot\max_{i,j\in[J]}\vert g_i(t)\vert\cdot\vert g_j(t)\vert\alpha}\leq T\sqrt{2\alpha}.
    $$
    Thus $\tilde{{\bm \Phi}}^\top\tilde{{\bm \Phi}}\preceq
    \bar{{\bm \Phi}}^\top\bar{{\bm \Phi}}+T\sqrt{2\alpha}{\bf I}$.
    We further obtain
    \begin{align*}
        \ln\mathrm{Det}\left(\frac{2\sigma}{\mu}\sum^J_{r=1}
        \sum_{t\in T_r}\tilde{\nabla}_J(t)\tilde{\nabla}^\top_J(t)+{\bf I}\right)
        =\ln\mathrm{Det}\left(\frac{\tilde{{\bm \Phi}}^\top\tilde{{\bm \Phi}}}{\mu}+{\bf I}\right)
        \leq\ln\mathrm{Det}\left(\frac{\bar{{\bm \Phi}}^\top\bar{{\bm \Phi}}}{\mu}
        +\left(\frac{T\sqrt{2\alpha}}{\mu}+1\right){\bf I}\right).
    \end{align*}
    Let $\bar{\lambda}_1\geq \bar{\lambda}_2\geq \ldots\geq \bar{\lambda}_T$ be the eigenvalues of
    $\bar{{\bm \Phi}}^\top\bar{{\bm \Phi}}$. Then we have
    \begin{align*}
        \ln\mathrm{Det}\left(\frac{\bar{{\bm \Phi}}^\top\bar{{\bm \Phi}}}{\mu}
        +\left(\frac{T\sqrt{2\alpha}}{\mu}+1\right){\bf I}\right)
        =&\ln\left(\prod^T_{i=1}
        \left(\frac{\bar{\lambda}_i}{\mu}+\frac{T\sqrt{2\alpha}}{\mu}+1\right)\right)\\
        \leq&\ln\left((1+\frac{T\sqrt{2\alpha}}{\mu})^T\prod^T_{i=1}
        \left(\frac{\bar{\lambda}_i}{\mu}+1\right)\right)\\
        =&T\ln\left(1+\frac{T\sqrt{2\alpha}}{\mu}\right)+
        \sum^T_{i=1}\ln \left(\frac{\bar{\lambda}_i}{\mu}+1\right).
    \end{align*}
    Let ${\bf D}=(\{g_j(t)\sqrt{2\sigma}\}_{t\in \cup^J_{j=1}T_j})$.
    Then $\bar{{\bm \Phi}}^\top\bar{{\bm \Phi}}={\bf D}{\bf K}_T{\bf D}$.
    Since $\ln(1+x)<\frac{x}{1+x}(1+\ln(1+x))$ for all $x>0$,
    we have
    \begin{align*}
        \ln\mathrm{Det}\left(\frac{\bar{{\bm \Phi}}^\top\bar{{\bm \Phi}}}{\mu}
        +\left(\frac{T\sqrt{2\alpha}}{\mu}+1\right){\bf I}\right)
        \leq&T\ln\left(1+\frac{T\sqrt{2\alpha}}{\mu}\right)+
        \sum^T_{i=1}\frac{\bar{\lambda}_i}{\mu+\bar{\lambda}_i}
        \left(1+\max_i\ln \frac{\bar{\lambda}_i+\mu}{\mu}\right)\\
        \leq&T\ln\left(1+\frac{T\sqrt{2\alpha}}{\mu}\right)
        +\mathrm{tr}(\bar{{\bm \Phi}}^\top\bar{{\bm \Phi}}(\bar{{\bm \Phi}}^\top\bar{{\bm \Phi}}+\mu{\bf I})^{-1})
        \cdot
        \left(1+\ln \frac{\mathrm{tr}(\bar{{\bm \Phi}}^\top\bar{{\bm \Phi}})+\mu}{\mu}\right)\\
        \overbrace{\leq}^{(\ast)}&T\ln\left(1+\frac{T\sqrt{2\alpha}}{\mu}\right)
        +\mathrm{tr}\left({\bf K}_T({\bf K}_T+\frac{\mu}{2}{\bf I})^{-1}\right)\cdot
        \left(1+\ln \frac{2T+\mu}{\mu}\right).
    \end{align*}
    $(\ast)$ follows the proof of Theorem 1 in \citep{Calandriello2017Second}
    which states
    $$
       \mathrm{tr}
       \left(\bar{{\bm \Phi}}^\top\bar{{\bm \Phi}}(\bar{{\bm \Phi}}^\top\bar{{\bm \Phi}}+\mu{\bf I})^{-1}\right)
       =\mathrm{tr}\left({\bf K}_T({\bf K}_T+\mu{\bf D}^{-2})^{-1}\right)
       \leq \mathrm{tr}\left({\bf K}_T({\bf K}_T+\mu\lambda_{\min}({\bf D}^{-2}){\bf I})^{-1}\right)
       = \mathrm{d}_{\mathrm{eff}}\left(\frac{\mu}{2}\right),
    $$
    where $\lambda_{\min}({\bf D}^{-2})=\frac{1}{2\sigma\max(g_r(t))^2}
    =\frac{4(U^2+Y^2)}{\max(g_r(t))^2}=\frac{1}{2}$.
    We obtain
    $$
        \mathcal{T}_{1,1} \leq \frac{1}{2}T\ln\left(1+\frac{T\sqrt{2\alpha}}{\mu}\right)+
        \frac{1}{2}\mathrm{d}_{\mathrm{eff}}\left(\frac{\mu}{2}\right)\cdot\left(1+\ln \frac{2T+\mu}{\mu}\right)
        \leq\frac{T^2\sqrt{\alpha}}{\sqrt{2}\mu}
        +\frac{1}{2}\mathrm{d}_{\mathrm{eff}}\left(\frac{\mu}{2}\right)\cdot\left(1+\ln \frac{2T+\mu}{\mu}\right),
    $$
    where we use the fact $\ln(1+x)\leq x$ for all $x\geq 0$.

\subsection{Analyze $\mathcal{T}_2$}

        Let ${\bf Y}_{T_j}=(y_{s_j},y_{s_j+1},\ldots,y_{s_{j+1}-1})^\top$.
        Recalling that $f_j=\mathcal{P}_{S(j)}f$.
        We have
        \begin{align*}
            \mathcal{T}_2
            =&\sum^{J}_{j=1}\sum_{t\in T_j}\left((\mathcal{P}_{S(j)}f)^\top\phi({\bf x}_t)-y_t\right)^2
            -\sum^T_{t=1}\left(f^\top\phi({\bf x}_t)-y_t\right)^2\\
            =&\sum^{J}_{j=1}\Vert f^\top\mathcal{P}_{S(j)}{\bm \Phi}_{T_j}-{\bf Y}_{T_j}\Vert^2_2
            -\sum^{J}_{j=1}\Vert f^\top{\bm \Phi}_{T_j}-{\bf Y}_{T_j}\Vert^2_2\\
            =&\sum^{J}_{j=1}\Vert f^\top{\bm \Phi}_{T_j}
            -{\bf Y}_{T_j}+f^\top(\mathcal{P}_{S(j)}{\bm \Phi}_{T_j}-{\bm \Phi}_{T_j})\Vert^2_2
            -\sum^{J}_{j=1}\Vert f^\top{\bm \Phi}_{T_j}-{\bf Y}_{T_j}\Vert^2_2\\
            =&\sum^{J}_{j=1}f^\top(\mathcal{P}_{S(j)}{\bm \Phi}_{T_j}-{\bm \Phi}_{T_j})
            (\mathcal{P}_{S(j)}{\bm \Phi}_{T_j}-{\bm \Phi}_{T_j})^\top f
            +2\sum^{J}_{j=1}\langle f^\top{\bm \Phi}_{T_j}-{\bf Y}_{T_j},
            f^\top(\mathcal{P}_{S(j)}{\bm \Phi}_{T_j}-{\bm \Phi}_{T_j})\rangle\\
            =&\sum^{J}_{j=1}\Vert f\Vert^2_{\mathcal{H}}\cdot
            \Vert\mathcal{P}_{S(j)}{\bm \Phi}_{T_j}-{\bm \Phi}_{T_j}\Vert^2_2
            +2\sum^{J}_{j=1}\Vert f^\top{\bm \Phi}_{T_j}-{\bf Y}_{T_j}\Vert_2\cdot
            \Vert f\Vert_{\mathcal{H}}\cdot \Vert \mathcal{P}_{S(j)}{\bm \Phi}_{T_j}-{\bm \Phi}_{T_j}\Vert_2.
        \end{align*}
        Using the first statement of Lemma \ref{lemma:ICML2023:spectral_norm_error_kernel_approximate},
        we obtain
        \begin{align*}
            \Vert\mathcal{P}_{S(j)}{\bm \Phi}_{T_j}-{\bm \Phi}_{T_j}\Vert^2_2
            =&\Vert(\mathcal{P}_{S(j)}{\bm \Phi}_{T_j}-{\bm \Phi}_{T_j})^\top(\mathcal{P}_{S(j)}{\bm \Phi}_{T_j}-{\bm \Phi}_{T_j})\Vert_2\\
            =&\Vert{\bm \Phi}^\top_{T_j}{\bm \Phi}_{T_j}
            -{\bm \Phi}^\top_{T_j}\mathcal{P}_{S(j)}{\bm \Phi}_{T_j}\Vert_2\\
            \leq& \vert T_j\vert\alpha.
        \end{align*}
        Thus we have
        $$
            \mathcal{T}_2\leq \Vert f\Vert^2_{\mathcal{H}}\cdot T\alpha
            +2\sqrt{\sum^{J}_{j=1}\Vert f^\top{\bm \Phi}_{T_j}-{\bf Y}_{T_j}\Vert^2_2}\cdot
            \Vert f\Vert_{\mathcal{H}}\cdot \sqrt{\sum^{J}_{j=1}\vert T_j\vert\alpha}
            \leq \Vert f\Vert^2_{\mathcal{H}}\cdot T\alpha
            +\sqrt{8(U^2+Y^2)}\Vert f\Vert_{\mathcal{H}}\cdot T\sqrt{\alpha}.
        $$
        Combining the upper bounds on $\mathcal{T}_{1,1}$, $\mathcal{T}_{1,2}$
        and $\mathcal{T}_2$ concludes the proof.
    \end{proof}

\end{document}